\theoremstyle{plain}
\newtheorem{theorem}{Theorem}[section]
\newtheorem{lemma}[theorem]{Lemma}
\theoremstyle{definition}
\theoremstyle{remark}
\icmltitlerunning{WUSH: Near-Optimal Adaptive Transforms for LLM Quantization}
\newcommand{\algcomment}[1]{\hfill{\footnotesize\textcolor{Green}{$\triangleright$~#1}}}
\begin{document}

\twocolumn[
  \icmltitle{WUSH: Near-Optimal Adaptive Transforms for LLM Quantization}



  \icmlsetsymbol{equal}{*}

  \begin{icmlauthorlist}
    \icmlauthor{Jiale Chen}{ista}
    \icmlauthor{Vage Egiazarian}{ista}
    \icmlauthor{Roberto L. Castro}{redhat}
    \icmlauthor{Torsten Hoefler}{ethz}
    \icmlauthor{Dan Alistarh}{ista,redhat}
  \end{icmlauthorlist}

  \icmlaffiliation{ista}{Institute of Science and Technology Austria (ISTA)}
  \icmlaffiliation{redhat}{Red Hat AI}
  \icmlaffiliation{ethz}{ETH Zürich}

  \icmlcorrespondingauthor{Jiale Chen}{jiale.chen@ist.ac.at}

  \icmlkeywords{LLM, quantization, transform}

  \vskip 0.3in
]



\printAffiliationsAndNotice{}  

\begin{abstract}
Quantizing LLM weights and activations is a standard approach for efficient deployment, but a few extreme outliers can stretch the dynamic range and amplify low-bit quantization errors.
Prior transform-based mitigations (e.g., Hadamard rotations) are fixed and data-agnostic, and their optimality for quantization has remained unclear.
We derive closed-form optimal linear blockwise transforms for joint weight-activation quantization under standard RTN AbsMax-scaled block quantizers, covering both integer and floating-point formats.
The resulting construction, WUSH, combines a Hadamard backbone with a data-dependent second-moment component to form a non-orthogonal transform that is provably near-optimal for FP and INT quantizers under mild assumptions while admitting an efficient fused GPU implementation.
Empirically, WUSH improves W4A4 accuracy over the strongest Hadamard-based baselines (e.g., on Llama-3.1-8B-Instruct in MXFP4, it gains +2.8 average points with RTN and +0.7 with GPTQ) while delivering up to 5.8$\times$ per-layer throughput over BF16 via FP4 MatMul.
Source code is available at \url{https://github.com/IST-DASLab/WUSH}.
\end{abstract}

\section{Introduction}
\label{sec:intro}

Quantization of model weights~\citep{NEURIPS2022_c3ba4962, frantar2023optq, MLSYS2024_42a452cb} or activations~\citep{pmlr-v202-xiao23c, NEURIPS2024_b5b93943} is now a standard tool for shrinking and accelerating large language models (LLMs)~\citep{kurtic-etal-2025-give}, making low-precision inference feasible on a wide range of hardware. 
A central difficulty, however, is that a few extreme ``outlier'' weights and activations expand the dynamic range and thereby degrade the effective resolution of low-bit representations.

One way to mitigate such outliers is to apply linear transforms before quantization~\citep{10.1007/978-3-540-88682-2_24,5432202}; in the case of LLM quantization, this is often in the form of rotations that spread variance more evenly across channels~\citep{cheequip2023, pmlr-v235-tseng24a, NEURIPS2024_b5b93943, liu2025spinquant}.
For LLMs, Hadamard rotations have been remarkably effective, and blockwise variants aligned with quantization groups have also been shown to be useful in practice.
\citet{egiazarian2025bridginggappromiseperformance} recently showed that the blockwise Hadamard transform offers the best empirical performance for quantization among a large set of existing transforms.
Yet, these transforms are typically fixed and data-agnostic: in particular, the Hadamard transform does not adapt to the statistics of the underlying weights or activations.
This raises a natural question: if the Hadamard transform is \emph{not data-aware}, in what sense can it be considered optimal for quantization?

In this work, we address this question by deriving \emph{closed-form optimal linear blockwise transforms} for joint weight-activation quantization, which are generally \emph{non-orthogonal} and \emph{adaptive}, i.e., data-aware.
As opposed to prior methods such as SpinQuant~\citep{liu2025spinquant} or FlatQuant~\citep{pmlr-v267-sun25l} that learn transforms on calibration data via iterative optimizations (e.g., gradient descent), our approach is a closed-form calibration-driven transform.
We call our construction \emph{WUSH}, which is a mnemonic for its composition in \cref{eq:hsuw}.
We show that WUSH is optimal for floating-point (FP) block quantizers and asymptotically optimal for integer (INT) block quantizers.
Empirically, WUSH significantly improves end-to-end accuracy over prior transform-based baselines, substantially narrows the gap between NVFP and MXFP formats, and provides consistent benefits when combined with GPTQ~\citep{frantar2023optq}.
It improves W4A4 (4-bit weights, 4-bit activations) accuracy by up to +2.8 average points (MXFP4 RTN) and +0.7 points (MXFP4 GPTQ) over the Hadamard-based baseline on Llama-3.1-8B-Instruct.
Despite using a distinct (data-aware) transform per block, WUSH admits an efficient fused GPU kernel whose throughput matches that of optimized blockwise Hadamard kernels while delivering up to 5.8$\times$ per-layer speedups over BF16 due to the lower-precision (FP4) matrix multiplication.

\section{Related Work}
\label{sec:related_work}

\textbf{Post-training quantization (PTQ) and outliers.}
PTQ schemes, such as RTN (round-to-nearest), OBQ~\citep{frantar2022obc}, and GPTQ~\citep{frantar2023optq}, are sensitive to heavy-tailed outliers in the weights and activations of LLMs because a small number of extreme values can determine the quantization scale, leading to high error.

\textbf{Non-uniform bitwidths and explicit outlier storage.}
One common strategy to mitigate outliers is to use non-uniform bitwidths.
Methods such as LLM.int8()~\citep{NEURIPS2022_c3ba4962}, SpQR~\citep{dettmers2024spqr}, and QUIK~\citep{ashkboos-etal-2024-quik} explicitly separate and store outliers in higher precision, while HPTQ~\citep{chen2025geometryllmquantizationgptq} uses Huffman encoding to compress the outliers and reduce their storage costs.
These approaches can achieve high accuracy at a low effective bitwidth; however, the resulting formats are irregular and require specialized kernels.

\textbf{Transform-based outlier mitigation.}
Another line of work applies transforms to the weights and activations before quantization to reduce the impact of outliers. SmoothQuant~\citep{pmlr-v202-xiao23c} and AWQ~\citep{MLSYS2024_42a452cb} rescale channels to stabilize the dynamic ranges of weights and activations.
QuIP~\citep{cheequip2023} introduces an incoherence processing step.
QuIP\#~\citep{pmlr-v235-tseng24a}, QTIP~\citep{NEURIPS2024_6de2e84b}, and QuaRot~\citep{NEURIPS2024_b5b93943} use Hadamard transforms to spread outlier energy across channel dimensions, while SpinQuant~\citep{liu2025spinquant} and FlatQuant~\citep{pmlr-v267-sun25l} learn transforms optimized on calibration data. However, their transforms are heuristic and costly to learn, which limits their accuracy or practicality when applied to fast, per-token activation quantization in large models.

\textbf{Blockwise transforms and emerging FP formats.}
Recent FP formats, such as MXFP~\citep{ocp_mx_2023} and NVFP~\citep{nvidia_blackwell_2024}, have motivated blockwise transform schemes. \citet{shao2025blockrotationneedmxfp4} indicates that blockwise Hadamard transforms can be more effective under these FP formats than full-layer transforms, but \citet{egiazarian2025bridginggappromiseperformance} highlights the limited gains achievable with Hadamard alone, linking this to the properties of the underlying weight and activation distributions. In contrast, this work (WUSH) derives a closed-form, data-aware optimal blockwise transform and analyzes how such a transform interacts with both FP and INT block quantizers.

\section{Methodology}
\label{sec:method}

\textbf{Problem setup.}
We formulate the transformed weight-activation quantization problem as follows.
Define $\bm{W} \in \mathbb{R}^{d_{\mathrm{in}} \times d_{\mathrm{out}}}$ as the weight of a linear layer with $d_{\mathrm{in}}$ input channels and $d_{\mathrm{out}}$ output channels.
Define $\bm{X} \in \mathbb{R}^{d_{\mathrm{in}} \times d_{\mathrm{batch}}}$ as the calibration input activation with $d_\mathrm{in}$ embedding channels and $d_{\mathrm{batch}}$ tokens.
Define $q \left( \cdot \right)$ as a quantizer that maps a matrix of continuous values to quantized values (we will specify $q$ later).
Let $\bm{T}_{\mathrm{W}}, \bm{T}_{\mathrm{X}} \in \mathbb{R}^{d_{\mathrm{in}} \times d_{\mathrm{in}}}$ be the transforms applied to $\bm{W}$ and $\bm{X}$, respectively.
For weight-activation quantization, the output activation $\bm{W}^\top \bm{X}$ becomes $q \left( \bm{T}_{\mathrm{W}} \bm{W} \right)^\top q \left( \bm{T}_{\mathrm{X}} \bm{X} \right)$.
Our goal is to choose $\bm{T}_{\mathrm{W}}$ and $\bm{T}_{\mathrm{X}}$ such that the $L_2$ output loss 
\begin{equation}
\begin{aligned}
\ell = d_{\mathrm{out}}^{-1} d_{\mathrm{batch}}^{-1} \left\| q \left( \bm{T}_{\mathrm{W}} \bm{W} \right)^\top q \left( \bm{T}_{\mathrm{X}} \bm{X} \right) - \bm{W}^\top \bm{X} \right\|_{\mathrm{F}}^2
\end{aligned}
\end{equation}
is minimized under the given quantizer $q$.
The constant $d_{\mathrm{out}}^{-1} d_{\mathrm{batch}}^{-1}$ is added to simplify the analysis later.

\textbf{Block-independent constraint.}
While the weights can be pre-quantized, the activations must be transformed and quantized dynamically at inference time.
To reduce the computational overhead of these operations, it is common~\citep{egiazarian2025bridginggappromiseperformance} to constrain $\bm{T}_{\mathrm{W}}, \bm{T}_{\mathrm{X}}$ to be block-diagonal transforms, and $q$ to be an RTN (round-to-nearest) quantizer with AbsMax (the maximum absolute value within a group) scales.
We assume that the quantization is applied to each sub-column vector of shape $d \times 1$ and that the transform block size aligns with the quantization group size $d$, with $d$ being a factor of $d_{\mathrm{in}}$ and a power of $2$.
Denote the block partitions of the matrices as
\begin{equation}
\begin{aligned}
& \bm{T}_{\mathrm{W}} = \operatorname{diag} \left( \bm{T}_{\mathrm{W}_{\left( 1 \right)}}, \dots, \bm{T}_{\mathrm{W}_{\left( d_{\mathrm{in}} / d \right)}} \right), & \!\!\!\!\!\!\!\!\!\!\!\! \bm{T}_{\mathrm{W}_{\left( i \right)}} \!\!\in\! \mathbb{R}^{d \times d}
, \\
& \bm{T}_{\mathrm{X}} = \operatorname{diag} \left( \bm{T}_{\mathrm{X}_{\left( 1 \right)}}, \dots, \bm{T}_{\mathrm{X}_{\left( d_{\mathrm{in}} / d \right)}} \right), & \!\!\!\!\!\!\!\!\!\!\!\! \bm{T}_{\mathrm{X}_{\left( i \right)}} \!\!\in\! \mathbb{R}^{d \times d}
, \\
& \bm{W}^\top = \left[ \bm{W}_{\left( 1 \right)}^\top, \dots, \bm{W}_{\left( d_{\mathrm{in}} / d \right)}^\top \right], & \!\!\!\!\!\!\!\!\!\!\!\! \bm{W}_{\left( i \right)} \in \mathbb{R}^{d \times d_{\mathrm{out}}}
, \\
& \bm{X}^\top = \left[ \bm{X}_{\left( 1 \right)}^\top, \dots, \bm{X}_{\left( d_{\mathrm{in}} / d \right)}^\top \right], & \!\!\!\!\!\!\!\!\!\!\!\! \bm{X}_{\left( i \right)} \in \mathbb{R}^{d \times d_\mathrm{batch}}
.
\end{aligned}
\end{equation}
Then, we can express the output without and with weight-activation quantization, respectively, as
\begin{equation}
\begin{aligned}
\bm{W}^\top \bm{X} \!\! = & \!\! \sum_{i=1}^{d_{\mathrm{in}} / d} \bm{W}_{\left( i \right)}^\top \bm{X}_{\left( i \right)}
, \\
q \! \left( \bm{T}_{\mathrm{W}} \! \bm{W} \right)^{\!\!\top} \!\! q \! \left( \bm{T}_{\mathrm{X}} \! \bm{X} \right) \!\!
= & \!\! \sum_{i=1}^{d_{\mathrm{in}} / d} \!\! q \! \left( \bm{T}_{\mathrm{W}_{\left( i \right)}} \! \bm{W}_{\left( i \right)} \right)^{\!\!\top} \!\! q \! \left( \bm{T}_{\mathrm{X}_{\left( i \right)}} \! \bm{X}_{\left( i \right)} \right)
.
\end{aligned}
\end{equation}

Finally, define the (normalized) blockwise output loss as
\begin{equation}
\begin{aligned}
\ell_{\! \left( i \right)} \! = \! d_{\mathrm{out}}^{-1} d_{\mathrm{batch}}^{-1} \! \left\| q \! \left( \! \bm{T}_{\! \mathrm{W}_{\! \left( i \right)}} \!\! \bm{W}_{\! \left( i \right)} \! \right)^{\!\!\top} \!\!\! q \! \left( \! \bm{T}_{\! \mathrm{X}_{\! \left( i \right)}} \! \bm{X}_{\! \left( i \right)} \! \right) \!\! - \!\! \bm{W}_{\! \left( i \right)}^{\!\top} \! \bm{X}_{\! \left( i \right)} \! \right\|_{\!\mathrm{F}}^{\!2}
\!\!.
\end{aligned}
\label{eq:blockwise_loss_definition_empirical}
\end{equation}
We will approximate the layerwise loss as $\ell$ $\approx$ $\sum_{i=1}^{d_{\mathrm{in}} / d} \ell_{\left( i \right)}$ and minimize the blockwise losses $\ell_{\left( i \right)}$ independently.

\textbf{Optimal block-diagonal transforms.}
In the following, we will prove, under reasonable assumptions, that there exist closed-form optimal transforms for each block.
These transforms are provably optimal for FP and asymptotically optimal for INT quantization.
Let $\bm{W}_{\left( i \right)}', \bm{X}_{\left( i \right)}' \in \mathbb{R}^{d \times d}$ be matrices that satisfy
\begin{equation}
\begin{aligned}
\bm{W}_{\left( i \right)}' \bm{W}_{\left( i \right)}'^\top = &\ d_{\mathrm{out}}^{-1} \bm{W}_{\left( i \right)} \bm{W}_{\left( i \right)}^\top , \\
\bm{X}_{\left( i \right)}' \bm{X}_{\left( i \right)}'^\top = &\ d_{\mathrm{batch}}^{-1} \bm{X}_{\left( i \right)} \bm{X}_{\left( i \right)}^\top
,
\end{aligned}
\label{eq:wp_xp_definition_empirical}
\end{equation}
respectively. Without loss of generality (Appendix \cref{sec:moment_decompositions}), we let $\bm{W}'$ and $\bm{X}'$ be the lower triangular matrices from the Cholesky decomposition of the second moments $d_{\mathrm{out}}^{-1} \bm{W}_{\left( i \right)} \bm{W}_{\left( i \right)}^\top$ and $d_{\mathrm{batch}}^{-1} \bm{X}_{\left( i \right)} \bm{X}_{\left( i \right)}^\top$.
In the case of $\mathrm{rank} \left( \bm{W}_{\left( i \right)} \right) < d$ or $\mathrm{rank} \left( \bm{X}_{\left( i \right)} \right) < d$, we can dampen the diagonal of the second moments before the Cholesky decomposition.
Let the orthogonal matrices $\bm{U}_{\left( i \right)}, \bm{V}_{\left( i \right)} \in \mathbb{R}^{d \times d}$ and the diagonal matrix $\bm{S}_{\left( i \right)} \in \mathbb{R}^{d \times d}$ be the singular value decomposition (SVD) of
\begin{equation}
\begin{aligned}
\bm{W}_{\left( i \right)}'^\top \bm{X}_{\left( i \right)}' = \bm{U}_{\left( i \right)} \bm{S}_{\left( i \right)} \bm{V}_{\left( i \right)}^\top
.
\end{aligned}
\label{eq:svd_wx_usv}
\end{equation}
Let
$\bm{H} \in \bigl\{ \pm d^{-\frac{1}{2}} \bigr\}^{d \times d}$
be a normalized (orthogonal) Hadamard matrix.
Then, we will show that the optimal $\bm{T}_{\mathrm{W}_{\left( i \right)}}$ and $\bm{T}_{\mathrm{X}_{\left( i \right)}}$ can be constructed as
\begin{equation}
\begin{aligned}
{\bm{T}_{\mathrm{xvsh}}}_{\left( i \right)}
= &\ \bm{H} \bm{S}_{\left( i \right)}^{-\frac{1}{2}} \bm{V}_{\left( i \right)}^\top \bm{X}_{\left( i \right)}'^\top
, \\
{\bm{T}_{\mathrm{wush}}}_{\left( i \right)}
= &\ \bm{H} \bm{S}_{\left( i \right)}^{-\frac{1}{2}} \bm{U}_{\left( i \right)}^\top \bm{W}_{\left( i \right)}'^\top
,
\end{aligned}
\label{eq:hsuw}
\end{equation}
respectively. Note that\footnote{The $-\top$ superscript means $( \cdot )^{-\top} = ( ( \cdot )^{-1} )^{\top} = ( ( \cdot )^{\top} )^{-1}$.}
\begin{equation}
\begin{aligned}
{\bm{T}_{\mathrm{xvsh}}}_{\left( i \right)} = {\bm{T}_{\mathrm{wush}}^{-\top}}_{\left( i \right)}
,
\end{aligned}
\label{eq:hsuw_hsvx}
\end{equation}
which can be easily verified by calculating ${\bm{T}_{\mathrm{xvsh}}}_{\left( i \right)} {\bm{T}_{\mathrm{wush}}^\top}_{\left( i \right)} = \mathbf{I}$, where $\mathbf{I}$ is the identity matrix.
The WUSH construction also jointly balances the weight and activation scales. Appendix \cref{sec:equalization_effect} discusses and visualizes this effect.

\textbf{Remark.}
The Hadamard matrix $\bm{H}$ is the only data-agnostic ingredient in our optimal formulation in \cref{eq:hsuw}. Incidentally, this explains why it has been empirically observed to be an effective data-agnostic orthogonal transform.

\begin{figure}[!htpb]
\begin{minipage}[t]{\linewidth}
\vspace{-1em}
\begin{algorithm}[H]
\caption{Compute WUSH for One Block}
\label{alg:hsuw}
\begin{algorithmic}

\STATE {\bfseries Input:}
weight second moment \( \bm{M}_{\mathrm{W}} \in \mathbb{R}^{d \times d} \), activation second moment \( \bm{M}_{\mathrm{X}} \in \mathbb{R}^{d \times d} \), damping ratio \( \lambda \in \mathbb{R}_{\ge 0} \)

\STATE {\bfseries Output:}
WUSH transform \( \bm{T}_{\mathrm{wush}} \in \mathbb{R}^{d \times d} \)

\rule{\linewidth}{0.4pt}

\STATE Initialize \( d \times d \) identity matrix \( \mathbf{I} \)

\STATE Initialize normalized \( d \times d \) Hadamard matrix \( \bm{H} \)

\STATE \( \bm{W}' \bm{W}'^{\top} \leftarrow \operatorname{Cholesky} \left( \bm{M}_{\mathrm{W}} + \lambda d^{-1} \operatorname{tr} \left( \bm{M}_{\mathrm{W}} \right) \mathbf{I} \right) \)

\STATE \( \bm{X}' \bm{X}'^{\top} \leftarrow \operatorname{Cholesky} \left( \bm{M}_{\mathrm{X}} + \lambda d^{-1} \operatorname{tr} \left( \bm{M}_{\mathrm{X}} \right) \mathbf{I} \right) \)

\STATE \( \bm{U} \bm{S} \bm{V}^{\top} \leftarrow \operatorname{SVD} \left( \bm{W}'^{\top} \bm{X}' \right) \)
\algcomment{\cref{eq:svd_wx_usv}}

\STATE \( \bm{T}_{\mathrm{wush}} \leftarrow \bm{H} \bm{S}^{-\frac{1}{2}} \bm{U}^\top \bm{W}'^\top \)
\algcomment{\cref{eq:hsuw}}

\end{algorithmic}
\end{algorithm}
\end{minipage}
~
\begin{minipage}[t]{\linewidth}
\vspace{-1.1em}
\begin{algorithm}[H]
\caption{Compute WUSH and Pre-Quantize Weights}
\label{alg:hsuw_layer}
\begin{algorithmic}

\STATE {\bfseries Input:}
weights \( \bm{W} \in \mathbb{R}^{d_{\mathrm{in}} \times d_{\mathrm{out}}} \), activations \\ \( \bm{X} \in \mathbb{R}^{d_{\mathrm{in}} \times d_{\mathrm{batch}}} \), damping ratio \( \lambda \in \mathbb{R}_{\ge 0} \)

\STATE {\bfseries Output:}
WUSH transform \( {\bm{T}_{\mathrm{wush}}}_{\left( i \right)} \in \mathbb{R}^{d \times d} \) and pre-quantized weight \( \widetilde{\bm{W}}_{\left( i \right)} \in \mathbb{R}^{d \times d_{\mathrm{out}}} \) for each block \( i \in \left\{ 1, \dots, d_{\mathrm{in}} / d \right\} \) \\

\rule{\linewidth}{0.4pt}
\STATE \textcolor{Green}{$\triangleright$~RTN (round-to-nearest) case}

\FOR{\( i \gets 1 \) {\bfseries to} \( d_{\mathrm{in}} / d \) \textnormal{(in parallel)}}

\STATE \( {\bm{M}_{\mathrm{W}}}_{\left( i \right)}, {\bm{M}_{\mathrm{X}}}_{\left( i \right)} \leftarrow d_{\mathrm{out}}^{-1} \bm{W}_{\left( i \right)} \bm{W}_{\left( i \right)}^\top, \ d_{\mathrm{batch}}^{-1} \bm{X}_{\left( i \right)} \bm{X}_{\left( i \right)}^\top \)

\STATE \( {\bm{T}_{\mathrm{wush}}}_{\left( i \right)} \leftarrow \textsc{WUSH} \! \left( {\bm{M}_{\mathrm{W}}}_{\left( i \right)}, {\bm{M}_{\mathrm{X}}}_{\left( i \right)}, \lambda \right) \)
\algcomment{\cref{alg:hsuw}}

\STATE \( {\bm{T}_{\mathrm{xvsh}}}_{\left( i \right)} \leftarrow {\bm{T}_{\mathrm{wush}}}_{\left( i \right)}^{-\top} \)
\algcomment{\cref{eq:hsuw_hsvx}}

\STATE \( \widetilde{\bm{W}}_{\left( i \right)} \gets q \left( {\bm{T}_{\mathrm{xvsh}}}_{\left( i \right)} \bm{W}_{\left( i \right)} \right) \)
\algcomment{standard RTN}

\ENDFOR

\rule{\linewidth}{0.4pt}
\STATE \textcolor{Green}{$\triangleright$~GPTQ case}

\STATE Initialize \( d_{\mathrm{in}} \times d_{\mathrm{in}} \) identity matrix \( \mathbf{I} \)

\STATE \( \bm{\mathcal{H}} \leftarrow d_{\mathrm{batch}}^{-1} \bm{X} \bm{X}^\top \)
\algcomment{GPTQ's Hessian}

\STATE \( \left\{ {\bm{M}_{\mathrm{X}}}_{\left( i \right)} \middle| i \in \left\{ 1, \dots, d_{\mathrm{in}} / d \right\} \right\} \leftarrow \textrm{diagonal blocks of } \bm{\mathcal{H}} \)

\STATE
\algcomment{second moment of activations (same as the RTN case)}

\STATE \( \bm{L} \bm{L}^{\top} \leftarrow \operatorname{Cholesky} \left( \left( \bm{\mathcal{H}} + \lambda d_{\mathrm{in}}^{-1} \operatorname{tr} \left( \bm{\mathcal{H}} \right) \mathbf{I} \right)^{-1} \right) \)

\STATE
\algcomment{Cholesky of Hessian inverse (same as GPTQ)}

\STATE \( \left\{ \bm{L}_{\left( i, j \right)} \in \mathbb{R}^{d \times d} \middle| i, j \in \left\{ 1, \dots, d_{\mathrm{in}} / d \right\} \right\} \leftarrow \textrm{blocks of } \bm{L} \)

\FOR{\( i \gets 1 \) {\bfseries to} \( d_{\mathrm{in}} / d \) }

\STATE \( {\bm{M}_{\mathrm{W}}}_{\left( i \right)} \leftarrow d_{\mathrm{out}}^{-1} \bm{W}_{\left( i \right)} \bm{W}_{\left( i \right)}^\top \)

\STATE
\algcomment{second moment of updated weight block}

\STATE \( {\bm{T}_{\mathrm{wush}}}_{\left( i \right)} \leftarrow \textsc{WUSH} \! \left( {\bm{M}_{\mathrm{W}}}_{\left( i \right)}, {\bm{M}_{\mathrm{X}}}_{\left( i \right)}, \lambda \right) \)
\algcomment{\cref{alg:hsuw}}

\STATE \( {\bm{T}_{\mathrm{xvsh}}}_{\left( i \right)} \leftarrow {\bm{T}_{\mathrm{wush}}}_{\left( i \right)}^{-\top} \)
\algcomment{\cref{eq:hsuw_hsvx}}

\STATE \( \overline{\bm{W}}_{\left( i \right)} \leftarrow {\bm{T}_{\mathrm{xvsh}}}_{\left( i \right)} \bm{W}_{\left( i \right)} \)
\algcomment{transformed weights}

\STATE \( \overline{\bm{\mathcal{H}}}_{\left( i \right)} \leftarrow {\bm{T}_{\mathrm{wush}}}_{\left( i \right)} \bigl( \bm{L}_{\left( i, i \right)} \bm{L}_{\left( i, i \right)}^{\top} \bigr)^{-1} {\bm{T}_{\mathrm{wush}}}_{\left( i \right)}^{\top} \)

\STATE
\algcomment{transformed Hessian}

\STATE \( \widetilde{\bm{W}}_{\left( i \right)} \gets \textsc{GPTQ} \left( \textrm{weight} = \overline{\bm{W}}_{\left( i \right)}, \textrm{Hessian} = \overline{\bm{\mathcal{H}}}_{\left( i \right)} \right) \)

\STATE
\algcomment{standard GPTQ subroutine (intra-block error propagation)}

\STATE \( \bm{E}_{\left( i \right)} \leftarrow {\bm{T}_{\mathrm{xvsh}}}_{\left( i \right)}^{-1} \widetilde{\bm{W}}_{\left( i \right)} - \bm{W}_{\left( i \right)} \)
\algcomment{error in original space}

\FOR{\( j \gets i \) {\bfseries to} \( d_{\mathrm{in}} / d \) \textnormal{(in parallel)}}

\STATE \( \bm{W}_{\left( j \right)} \leftarrow \bm{W}_{\left( j \right)} + \bm{L}_{\left( j, i \right)} \bm{L}_{\left( i, i \right)}^{-1} \bm{E}_{\left( i \right)} \)

\STATE
\algcomment{inter-block GPTQ error propagation}

\ENDFOR

\ENDFOR

\end{algorithmic}
\end{algorithm}
\end{minipage}
\end{figure}

\textbf{GPTQ integration.}
In practice, the second-order information $d_{\mathrm{batch}}^{-1} \bm{X}_{\left( i \right)} \bm{X}_{\left( i \right)}^\top$ in \cref{eq:wp_xp_definition_empirical} can be obtained either from the calibration activations or from the Hessian matrix; the latter coincides with the Hessian used in GPTQ.
Therefore, it is a natural idea to integrate WUSH into GPTQ.
However, GPTQ updates the weights iteratively via error propagation, while the WUSH transform is constructed from the second-order statistics of the updated weights. This coupling between weight updates and transform construction requires an interleaved computational schedule.
\cref{alg:hsuw_layer} summarizes the procedure used to compute the blockwise WUSH transforms and pre-quantized weights for a linear layer, using either RTN or GPTQ. The GPTQ error propagations are decomposed into intra-block and inter-block updates similar to its original paper \citep{frantar2023optq}. For the intra-block updates, we apply the standard GPTQ subroutine to the current transformed weight block using the corresponding transformed Hessian. For inter-block propagation, we follow GPTQ's usual blockwise updates.
The model-level calibration pipeline closely follows that of GPTQ: layers are processed sequentially, and after processing a layer, the calibration activations are propagated through the quantized layer to provide the inputs for calibrating the next one.
During inference, the forward pass of a linear layer is calculated as
\begin{equation}
\begin{aligned}
\sum_{i=1}^{d_{\mathrm{in}} / d} \widetilde{\bm{W}}_{\left( i \right)}^\top q \left( {\bm{T}_{\mathrm{wush}}}_{\left( i \right)} \widetilde{\bm{X}}_{\left( i \right)} \right)
\end{aligned}
\label{eq:layer_forward_pass}
\end{equation}
with $\widetilde{\bm{W}}_{\left( i \right)} \in \mathbb{R}^{d \times d_{\mathrm{out}}}$ being the pre-quantized transformed weight blocks and $\widetilde{\bm{X}}_{\left( i \right)} \in \mathbb{R}^{d \times d_{\mathrm{batch}}}$ being the new activation blocks instead of the calibration ones.

\textbf{Complexity analysis for offline preprocessing.}
The offline processing cost of WUSH is very similar to that of the standard GPTQ algorithm because WUSH and GPTQ require the same type of activation Hessian information, and the cost is dominated by forwarding the calibration activations through the model. In particular, when combining WUSH and GPTQ, WUSH only adds a negligible overhead on top of GPTQ.
For a layer, WUSH processes only $d_{\mathrm{in}}/d$ diagonal blocks.
In a typical setting, $d \ll d_{\mathrm{in}} \ll d_{\mathrm{batch}}$ and $d \ll d_{\mathrm{out}} \ll d_{\mathrm{batch}}$.
Per block, forming second moments costs $O \left( d^2 d_{\mathrm{batch}} \right)$ time and $O \left( d d_{\mathrm{batch}} \right)$ memory, and applying the transform to the weights (for pre-quantization) costs $O \left( d^2 d_{\mathrm{out}} \right)$ time and $O \left( d d_{\mathrm{out}} \right)$ memory, while the Cholesky/SVD and the multiplication steps on small $d \times d$ matrices cost only $O \left( d^3 \right)$ time and $O \left( d^2 \right)$ memory.
Therefore, the total per-block offline time and memory costs are $O \left( d^2 d_{\mathrm{batch}} \right)$ and $O \left( d d_{\mathrm{batch}} \right)$, and the per-layer costs are $O \left( d d_{\mathrm{in}} d_{\mathrm{batch}} \right)$ and $O \left( d_{\mathrm{in}} d_{\mathrm{batch}} \right)$.
As a comparison, computing the Hessian matrix of the standard GPTQ for a layer requires $O \left( d_{\mathrm{in}}^{2} d_{\mathrm{batch}} \right)$ time and $O \left( d_{\mathrm{in}} d_{\mathrm{batch}} \right)$ memory.

\textbf{Costs for online inference.}
The only additional inference-time overhead is the activation-side block transform consisting of $d \times d \times d_{\mathrm{in}}/d = d d_{\mathrm{in}}$ elements per layer, while the weight-side transform is absorbed into the pre-quantized weights. With 4-bit weights (plus groupwise 8-bit scales) and 16-bit transforms, this is roughly a $\frac{16 d d_{\mathrm{in}}}{4 d_{\mathrm{in}} d_{\mathrm{out}}} = \frac{4d}{d_{\mathrm{out}}}$ relative storage overhead per layer. The memory is dominated by the KV cache and model weights, and the storage cost of the transforms is negligible for typical LLM dimensions. The online transform is fused with activation quantization in a kernel detailed in \cref{sec:gpu_kernel}, leading to negligible runtime overhead.

\section{Theoretical Derivation}
\label{sec:theory}

In this section, we focus on the question of finding optimal transforms that minimize the loss of one block $\ell_{\left( i \right)}$ in \cref{eq:blockwise_loss_definition_empirical}.
To build intuition for the proof, \cref{fig:hsuw_2d_plot} visualizes how different transforms reshape the expected quantization error under FP and INT block quantizers, in a 2D toy setting.
Throughout, we omit the block subscript $\left( i \right)$ to simplify notation.
Due to space limitations, the detailed derivation steps of some equations are deferred to Appendix \cref{sec:equation_derivations}.

\begin{figure}[!htpb]
\begin{center}
\includegraphics[width=\columnwidth]{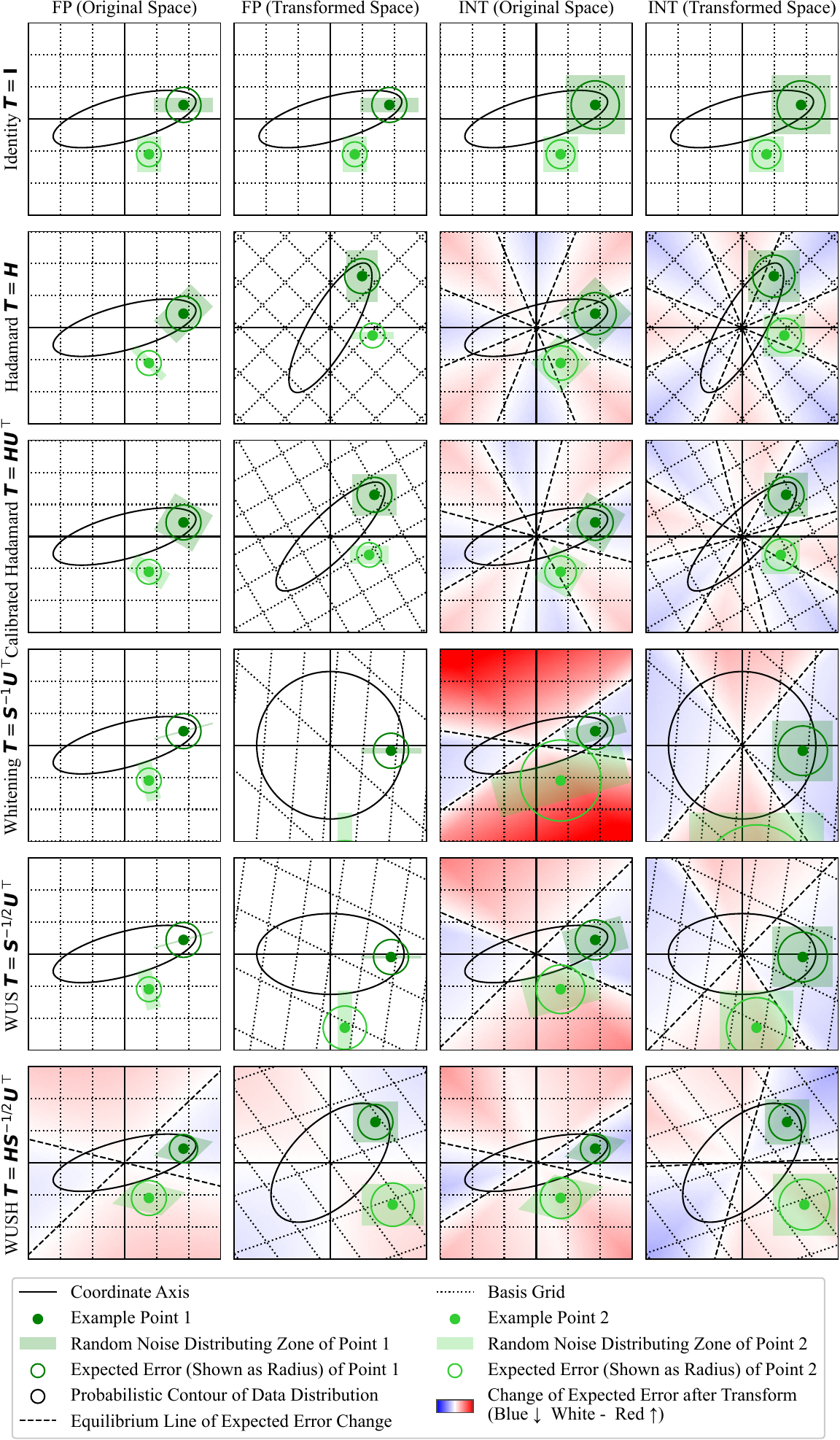}
\caption{
\textbf{2D illustration of how transforms shape the one-sided quantization error $\| \bm{T}^{-1}
\varepsilon \left( \bm{T} \bm{y} \right) \|^{2}$ in \cref{eq:loss_one_term} under FP and INT block quantizers.}
Rows correspond to different transforms $\bm{T}$ (identity; Hadamard: 45\textdegree\ rotation; calibrated Hadamard: equally spreading energy into each dimension; whitening; WUS: WUSH without Hadamard; WUSH), while columns show FP and INT quantizations in both original and transformed spaces. The dotted grid shows the basis induced by the transform.
Two example points are shown in detail with their induced quantization noise support (\cref{sec:theory_proof_fp_model,sec:theory_proof_int_model}) represented as shaded parallelograms, and the corresponding average (expected) error magnitude (\cref{eq:loss_one_term}) represented as the radii of circles (not necessarily proportional to the area of parallelograms).
For each point on the plane, we visualize its expected error change (compared to that of the identity transform) calculated in the original space, with blue/red background heatmaps indicating lower/higher and dashed lines indicating the equilibrium set where the expected error is unchanged.
No transform can reduce the error for all points on the plane.
Thus, the key idea is to reduce the error around typical data distributions.
The black ellipse depicts a representative data distribution contour.
Overall, WUSH more effectively reduces the error by aligning the ellipse's major axis to the error reduction (blue) regions.
}
\label{fig:hsuw_2d_plot}
\end{center}
\end{figure}

\subsection{Problem Setup and General Proof Approach}
\label{sec:theory_proof}

\textbf{Probabilistic reformulation.}
We reformulate the problem from a probabilistic perspective.
We split the matrices $\bm{W} = \left[ \bm{w}_{1}, \dots, \bm{w}_{d_{\mathrm{out}}} \right]$ and $\bm{X} = \left[ \bm{x}_{1}, \dots, \bm{x}_{d_{\mathrm{batch}}} \right]$ into columns $\bm{w}_{k}, \bm{x}_{k} \in \mathbb{R}^{d}$ and treat the columns as i.i.d. samples from $d$-dimensional independent distributions $\bm{w} \sim \mathcal{D}_{\mathrm{w}}$ and $\bm{x} \sim \mathcal{D}_{\mathrm{x}}$, respectively.
We motivate this multivariate modeling choice in Appendix \cref{sec:multivariate_vs_univariate}.
We can reinterpret the blockwise loss in \cref{eq:blockwise_loss_definition_empirical} as
\begin{equation}
\begin{aligned}
\ell = \mathbb{E}_{\bm{w},\bm{x}} \left(
q \left( \bm{T}_{\mathrm{W}} \bm{w} \right)^\top
q \left( \bm{T}_{\mathrm{X}} \bm{x} \right)
- \bm{w}^\top \bm{x}
\right)^{2} .
\end{aligned}
\label{eq:blockwise_loss_definition}
\end{equation}
Similarly, we can also reinterpret ~\cref{eq:wp_xp_definition_empirical} so that the matrices $\bm{W}'$ and $\bm{X}'$ satisfy
\begin{equation}
\begin{aligned}
\bm{W}' \bm{W}'^\top
&\ = \mathbb{E}_{\bm{w}} \bm{w} \bm{w}^\top,
\\
\bm{X}' \bm{X}'^\top
&\ = \mathbb{E}_{\bm{x}} \bm{x} \bm{x}^\top
,
\end{aligned}
\label{eq:ww_xx}
\end{equation}
respectively.

\textbf{Unbiased quantization.}
We assume that $q$ is a stochastic and unbiased quantizer, i.e., for a vector $\bm{\alpha} \in \mathbb{R}^{d}$, the quantization error $\varepsilon \left( \bm{\alpha} \right) = q \left( \bm{\alpha} \right) - \bm{\alpha}$ is a random vector with $\mathbb{E}_{\varepsilon \left( \bm{\alpha} \right)} \varepsilon \left( \bm{\alpha} \right) = \bm{0}$.
We further constrain
\begin{equation}
\begin{aligned}
\bm{T}_{\mathrm{W}} = \bm{T}_{\mathrm{X}}^{-\top}
\end{aligned}
\end{equation}
such that $q \left( \bm{T}_{\mathrm{W}} \bm{w} \right)^\top
q \left( \bm{T}_{\mathrm{X}} \bm{x} \right)$ is unbiased with respect to $\bm{w}^\top \bm{x}$.
Then, using the first-order approximation, we can split $\ell$ in ~\cref{eq:blockwise_loss_definition} into two non-negative terms:
\begin{equation}
\begin{aligned}
\ell
= &\ \left( \mathbb{E}_{\bm{w}, \varepsilon \left( \bm{T}_{\mathrm{W}} \bm{w} \right)}
\left\| \bm{X}'^\top \bm{T}_{\mathrm{W}}^{-1}
\varepsilon \left( \bm{T}_{\mathrm{W}} \bm{w} \right) \right\|^{2} \right) \\
&\ + \left( \mathbb{E}_{\bm{x}, \varepsilon \left( \bm{T}_{\mathrm{X}} \bm{x} \right)}
\left\| \bm{W}'^\top \bm{T}_{\mathrm{X}}^{-1}
\varepsilon \left( \bm{T}_{\mathrm{X}} \bm{x} \right) \right\|^{2} \right).
\end{aligned}
\label{eq:loss_two_terms_simple}
\end{equation}
The detailed derivation is given in Appendix \cref{eq:loss_two_terms}.

\textbf{Problem reduction.}
We can compute the optimal $\bm{T}_{\mathrm{W}}$ and $\bm{T}_{\mathrm{X}}$ individually for each of the non-negative terms and verify that $\bm{T}_{\mathrm{W}} = \bm{T}_{\mathrm{X}}^{-\top}$.
The two terms have similar forms and can be optimized in similar ways.
We focus on finding the optimal $\bm{T}_{\mathrm{X}}$ for the second term.
Define the $d$-dimensional random variable $\bm{y} = \bm{W}'^\top \bm{x}$ and denote its distribution as $\bm{y} \sim \mathcal{D}_{\mathrm{y}}$.
Define the transformation matrix $\bm{T} = \bm{T}_{\mathrm{X}} \bm{W}'^{-\top}$.
Then, the second term in \cref{eq:loss_two_terms_simple} becomes
\begin{equation}
\begin{aligned}
&\ \mathbb{E}_{\bm{x}, \varepsilon \left( \bm{T}_{\mathrm{X}} \bm{x} \right)}
\left\| \bm{W}'^\top \bm{T}_{\mathrm{X}}^{-1}
\varepsilon \left( \bm{T}_{\mathrm{X}} \bm{x} \right) \right\|^{2} \\
= &\ \mathbb{E}_{\bm{x}, \varepsilon \left( \bm{T}_{\mathrm{X}} \bm{x} \right)}
\left\| \left( \bm{T}_{\mathrm{X}} \bm{W}'^{-\top} \right)^{-1}
\varepsilon \left( \bm{T}_{\mathrm{X}} \left( \bm{W}'^{-\top} \bm{y} \right) \right) \right\|^{2} \\
= &\ \mathbb{E}_{\bm{y}, \varepsilon \left( \bm{T} \bm{y} \right)}
\left\| \bm{T}^{-1}
\varepsilon \left( \bm{T} \bm{y} \right) \right\|^{2}
\end{aligned}
\label{eq:loss_one_term}
\end{equation}
Therefore, the two-sided quantization problem in \cref{eq:blockwise_loss_definition} is reduced to finding the optimal $\bm{T} \in \mathbb{R}^{d \times d}$ that minimizes the one-sided quantization loss
$\mathbb{E}_{\bm{y}, \varepsilon \left( \bm{T} \bm{y} \right)}
\left\| \bm{T}^{-1}
\varepsilon \left( \bm{T} \bm{y} \right) \right\|^{2}$
and compute
\begin{equation}
\begin{aligned}
\bm{T}_{\mathrm{X}} = \bm{T} \bm{W}'^\top
.
\end{aligned}
\label{eq:tx_t_w}
\end{equation}

\textbf{Transformation parameterization.}
Let the unknown orthogonal matrices $\bm{U}', \bm{R} \in \mathbb{R}^{d \times d}$ and the unknown diagonal matrix $\bm{S}' \in \mathbb{R}^{d \times d}$ be the singular value decomposition (SVD) of $\bm{T} \bm{U} \bm{S} = \bm{U}' \bm{S}' \bm{R}^\top$ with $\bm{U}, \bm{S}$ defined in \cref{eq:svd_wx_usv}; thus, $\bm{T}$ can be parameterized as
\begin{equation}
\begin{aligned}
\bm{T} = \bm{U}' \bm{S}' \bm{R}^\top \bm{S}^{-1} \bm{U}^\top
.
\end{aligned}
\label{eq:t_u_s_r_s_u}
\end{equation}
Denote $\bm{S} = \operatorname{diag} \left( s_{1}, \dots, s_{d} \right)$ and $\bm{S}' = \operatorname{diag} \left( s'_{1}, \dots, s'_{d} \right)$.
Without loss of generality, assume $s_{1} \ge \dots \ge s_{d} > 0$ and $s'_{1} \ge \dots \ge s'_{d} > 0$.
A useful property of $\bm{S}$ is
\begin{equation}
\begin{aligned}
\operatorname{tr} \left( \bm{S}^{2} \right)
\ge d^{-1} \left( \operatorname{tr} \left( \bm{S} \right) \right)^{2}
\ge d^{-1} \operatorname{tr} \left( \bm{S}^{2} \right)
\end{aligned}
\label{eq:trace_inequality_simple}
\end{equation}
with detailed steps in \cref{eq:trace_inequality}.
The equality $\operatorname{tr} \left( \bm{S}^{2} \right) = d^{-1} \left( \operatorname{tr} \left( \bm{S} \right) \right)^{2}$ is attained when all $s_{i}$ are the same (all singular values are inliers). The equality $d^{-1} \left( \operatorname{tr} \left( \bm{S} \right) \right)^{2} = d^{-1} \operatorname{tr} \left( \bm{S}^{2} \right)$ is attained when $\operatorname{tr} \left( \bm{S} \right) \to s_{1}$ (a singular value is an extreme outlier).

\begin{theorem}[Optimal Transform]
\label{thm:optimal_transform}
The optimal configuration of $\bm{T}$ for floating-point (FP) data types is $\bm{U}' = \bm{H}$, $\bm{S}' = \bm{S}^{\frac{1}{2}}$, and $\bm{R} = \mathbf{I}$.
For integer (INT) types, the same configuration is optimal up to a $d^{o \left( 1 \right)}$ factor for zero-mean multivariate Gaussian/Laplacian distributed data and within a $d$ factor for any distribution.
\end{theorem}
\begin{proof}
We provide detailed proofs for FP and INT in \cref{sec:theory_proof_fp,sec:theory_proof_int}, respectively.
For each data type, we apply a two-step proof strategy. 
We first (\cref{sec:theory_proof_fp_model,sec:theory_proof_int_model}) provide a smooth modeling of the quantizer and treat the type casting errors as random noise.
Secondly (\cref{sec:theory_proof_fp_loss,sec:theory_proof_int_loss}), we calculate the expected quantization error, \cref{eq:loss_one_term}, with respect to the parameterized transform, \cref{eq:t_u_s_r_s_u}, and minimize this expectation by solving the corresponding optimization problem. 
\end{proof}
The optimal $\bm{T}_{\mathrm{X}}$, namely the WUSH construction $\bm{T}_{\mathrm{wush}}$ in \cref{eq:hsuw}, is obtained by applying \cref{thm:optimal_transform,eq:tx_t_w,eq:t_u_s_r_s_u}.

\subsection{Proof for Floating-Point (FP) Types}
\label{sec:theory_proof_fp}

\subsubsection{Quantization Error Modeling}
\label{sec:theory_proof_fp_model}

For a number in FP types, the quantization error tends to be proportional to its absolute value.
Formally, for a vector
$\bm{\alpha} \in \mathbb{R}^{d}$,
we can model the quantization error as
\begin{equation}
\begin{aligned}
\varepsilon \left( \bm{\alpha} \right) = \operatorname{diag} \left( \bm{\eta} \right) \bm{\alpha}
\end{aligned}
\label{eq:fp_error_model}
\end{equation}
where
$\bm{\eta} = \left[ \eta_{1}, \dots, \eta_{d} \right]^\top \in \mathbb{R}^{d}$
is a random vector of i.i.d. samples from the distribution
$\eta \sim \mathcal{D}_{\mathrm{\eta}}$
with
$\mathbb{E}_{\eta} \eta = 0$.
We provide more rigorous justifications for why this modeling makes sense in Appendix \cref{sec:theory_proof_fp_model_app}.

\subsubsection{Loss Minimization}
\label{sec:theory_proof_fp_loss}

Using the quantization error modeling in \cref{eq:fp_error_model}, the minimization objective in \cref{eq:loss_one_term} becomes
\begin{equation}
\begin{aligned}
&\ \mathbb{E}_{\bm{y}, \varepsilon \left( \bm{T} \bm{y} \right)}
\left\| \bm{T}^{-1}
\varepsilon \left( \bm{T} \bm{y} \right) \right\|^{2} \\
= &\ \left( \mathbb{E}_{\eta} \eta^{2} \right) \operatorname{tr} \left( \bm{T}^{-1} \left( \left( \bm{U}' \bm{S}'^{2} \bm{U}'^\top \right) \odot \mathbf{I} \right) \bm{T}^{-\top} \right)
\end{aligned}
\label{eq:fp_loss_minimization_fn_simple}
\end{equation}
with $\odot$ representing elementwise multiplication, and with detailed steps in \cref{eq:fp_loss_minimization_fn}.
The lower bound of the trace term in \cref{eq:fp_loss_minimization_fn_simple} is
\begin{equation}
\begin{aligned}
\operatorname{tr} \left( \bm{T}^{-1} \left( \left( \bm{U}' \bm{S}'^{2} \bm{U}'^\top \right) \odot \mathbf{I} \right) \bm{T}^{-\top} \right)
\ge d^{-1} \left( \operatorname{tr} \left( \bm{S} \right) \right)^{2}
\end{aligned}
\label{eq:fp_loss_lower_bound_simple}
\end{equation}
and the equality can be attained by choosing $\bm{U}' = \bm{H}$, $\bm{S}' = \bm{S}^{\frac{1}{2}}$, and $\bm{R} = \mathbf{I}$, with detailed steps in \cref{eq:fp_loss_lower_bound,eq:fp_loss_lower_bound_equality}.

\subsubsection{Discussion}
\label{sec:theory_proof_fp_discussion}

Under our smooth modeling in \cref{eq:fp_error_model}, for any orthogonal $\bm{T}$ (including the identity $\bm{T} = \mathbf{I}$ and the Hadamard $\bm{T} = \bm{H}$), the minimization objective becomes trivially
\begin{equation}
\begin{aligned}
\mathbb{E}_{\bm{y}, \varepsilon \left( \bm{T} \bm{y} \right)}
\left\| \bm{T}^{-1}
\varepsilon \left( \bm{T} \bm{y} \right) \right\|^{2}
= \left( \mathbb{E}_{\eta} \eta^{2} \right) \operatorname{tr} \left( \bm{S}^{2} \right)
\end{aligned}
\label{eq:fp_loss_lower_bound_orthogonal_simple}
\end{equation}
with detailed steps in \cref{eq:fp_loss_lower_bound_orthogonal}. Thus, orthogonal transforms will not be helpful for reducing the quantization error.
Comparing the trace terms in \cref{eq:fp_loss_lower_bound_simple,eq:fp_loss_lower_bound_orthogonal_simple} using the inequality \cref{eq:trace_inequality_simple}, the WUSH transform can reduce the error by at most $d$ times in the extreme outlier scenario.
Also note that the non-trivial choices $\bm{U}' = \bm{H}$ and $\bm{S}' = \bm{S}^{\frac{1}{2}}$ are both essential for reducing the error.
Either trivial choices of $\bm{U}' = \mathbf{I}$ or $\bm{S}' = \mathbf{I}$ will lead to the same suboptimal trace term
\begin{equation}
\begin{aligned}
\operatorname{tr} \left( \bm{T}^{-1} \left( \left( \bm{U}' \bm{S}'^{2} \bm{U}'^\top \right) \odot \mathbf{I} \right) \bm{T}^{-\top} \right)
= \operatorname{tr} \left( \bm{S}^{2} \right)
\end{aligned}
\label{eq:fp_loss_lower_bound_trivial_simple}
\end{equation}
as that in the case of the orthogonal $\bm{T}$ in \cref{eq:fp_loss_lower_bound_orthogonal_simple}, with detailed steps in \cref{eq:fp_loss_lower_bound_trivial_u,eq:fp_loss_lower_bound_trivial_s}.
\cref{fig:hsuw_2d_plot} (left two columns) also visualizes these results.

\subsection{Proof for Integer (INT) Data Types}
\label{sec:theory_proof_int}

\subsubsection{Quantization Error Modeling}
\label{sec:theory_proof_int_model}

For a number in INT types, the quantization error tends to be proportional to the maximum absolute value within a quantization group.
Formally, for a vector
$\bm{\alpha} \in \mathbb{R}^{d}$,
we can model the quantization error as
\begin{equation}
\begin{aligned}
\varepsilon \left( \bm{\alpha} \right) = \left\| \bm{\alpha} \right\|_{\infty} \bm{\eta}
\end{aligned}
\label{eq:int_error_model}
\end{equation}
where
$\bm{\eta} = \left[ \eta_{1}, \dots, \eta_{d} \right]^\top \in \mathbb{R}^{d}$
is a random vector of i.i.d. samples from the distribution
$\eta \sim \mathcal{D}_{\mathrm{\eta}}$
with
$\mathbb{E}_{\eta} \eta = 0$.
We provide more rigorous justifications for why this modeling makes sense in Appendix \cref{sec:theory_proof_int_model_app}.

\subsubsection{Loss Minimization}
\label{sec:theory_proof_int_loss}

Using the quantization error modeling in \cref{eq:int_error_model}, the minimization objective in \cref{eq:loss_one_term} becomes
\begin{equation}
\begin{aligned}
\mathbb{E}_{\bm{y}, \varepsilon \left( \bm{T} \bm{y} \right)}
\!\! \left\| \bm{T}^{-1}
\varepsilon \! \left( \bm{T} \bm{y} \right) \! \right\|^{\!2}
\!\! = \! \left( \mathbb{E}_{\eta} \eta^{2} \right) \!\! \left\| \bm{T}^{-1} \right\|_{\! \mathrm{F}}^{\! 2} \! \mathbb{E}_{\bm{y}} \left\| \bm{T} \bm{y} \right\|_{\! \infty}^{\! 2}
\end{aligned}
\label{eq:int_loss_minimization_fn_simple}
\end{equation}
with detailed steps in \cref{eq:int_loss_minimization_fn}.

We can obtain a lower bound for the term $\left\| \bm{T}^{-1} \right\|_{\mathrm{F}}^{2}$ in \cref{eq:int_loss_minimization_fn_simple}.
\begin{equation}
\begin{aligned}
\left\| \bm{T}^{-1} \right\|_{\mathrm{F}}^{2}
\ge \operatorname{tr} \left( \bm{S}^{2} \bm{S}'^{-2} \right)
,
\end{aligned}
\label{eq:int_loss_lower_bound_t_inv_f_norm_simple}
\end{equation}
and the equality is attained when $\bm{R} = \mathbf{I}$, with detailed steps in \cref{eq:int_loss_lower_bound_t_inv_f_norm_equality,eq:int_loss_lower_bound_t_inv_f_norm_inequality}.

Next, we obtain near-optimal lower and upper bounds for the term
$\mathbb{E}_{\bm{y}} \left\| \bm{T} \bm{y} \right\|^{2}_{\infty}$
in \cref{eq:int_loss_minimization_fn_simple} by choosing $\bm{U}' = \bm{H}$.
\begin{equation}
\begin{aligned}
\mathbb{E}_{\bm{y}} \left\| \bm{T} \bm{y} \right\|_{\infty}^{2}
\ge &\ d^{-1} \operatorname{tr} \left( \bm{S}'^{2} \right)
, \\
\mathbb{E}_{\bm{y}} \left\| \bm{T} \bm{y} \right\|_{\infty}^{2}
\le &\ \!\! \begin{cases}
d^{o \left( 1 \right) - 1} \operatorname{tr} \left( \bm{S}'^{2} \right) , & \!\!\!\! \text{tail-bounded $\mathcal{D}_{\mathrm{y}}$} , \\
\operatorname{tr} \left( \bm{S}'^{2} \right) , & \!\!\!\! \text{otherwise} .
\end{cases}
\end{aligned}
\label{eq:int_loss_lower_bound_inf_norm_simple}
\end{equation}
Note that the upper bound is tighter when $\mathcal{D}_{\mathrm{y}}$ is a tail-bounded (zero-mean multivariate Gaussian or Laplacian) distribution than for a general distribution.
For the full derivations of these bounds, please refer to Eqs.~(\ref{eq:int_loss_lower_bound_inf_norm_begin})~to~(\ref{eq:int_loss_lower_bound_inf_norm_end}).

Taken \cref{eq:int_loss_lower_bound_t_inv_f_norm_simple,eq:int_loss_lower_bound_inf_norm_simple} together, by setting $\bm{R} = \mathbf{I}$ and $\bm{U}' = \bm{H}$, the term $\left\| \bm{T}^{-1} \right\|_{\mathrm{F}}^{2} \mathbb{E}_{\bm{y}} \left\| \bm{T} \bm{y} \right\|_{\infty}^{2}$ is bounded by $\operatorname{tr} \left( \bm{S}^{2} \bm{S}'^{-2} \right) \operatorname{tr} \left( \bm{S}'^{2} \right)$ within a gap of $d$ such that
\begin{equation}
\begin{aligned}
&\ d^{-1} \operatorname{tr} \left( \bm{S}^{2} \bm{S}'^{-2} \right) \operatorname{tr} \left( \bm{S}'^{2} \right)
\le \left\| \bm{T}^{-1} \right\|_{\mathrm{F}}^{2} \mathbb{E}_{\bm{y}} \left\| \bm{T} \bm{y} \right\|_{\infty}^{2} \\
& \quad \le \operatorname{tr} \left( \bm{S}^{2} \bm{S}'^{-2} \right) \operatorname{tr} \left( \bm{S}'^{2} \right)
.
\end{aligned}
\label{eq:int_loss_lower_bound_trtr_general}
\end{equation}
And for a tail-bounded $\mathcal{D}_{\mathrm{y}}$, the gap is narrowed to $d^{o \left( 1 \right)}$ such that
\begin{equation}
\begin{aligned}
\left\| \bm{T}^{-1} \right\|_{\mathrm{F}}^{2} \mathbb{E}_{\bm{y}} \left\| \bm{T} \bm{y} \right\|_{\infty}^{2}
\! = \! d^{o \left( 1 \right) - 1} \operatorname{tr} \! \left( \bm{S^{2} \! \bm{S}'^{-2}}\right) \operatorname{tr} \! \left( \bm{S}'^{2} \right)
\! . \!\!
\end{aligned}
\label{eq:int_loss_lower_bound_trtr_tailbounded}
\end{equation}
By the Cauchy-Schwarz inequality, the lower bound of the trace product term is
\begin{equation}
\begin{aligned}
\operatorname{tr} \! \left( \bm{S}^{2} \bm{S}'^{-2} \right) \operatorname{tr} \! \left( \bm{S}'^{2} \right)
\! \ge &\ \! \left( \operatorname{tr} \! \left( \left( \bm{S}^{2} \bm{S}'^{-2} \right)^{\frac{1}{2}} \left( \bm{S}'^{2} \right)^{\frac{1}{2}} \right) \right)^{2} \\
= &\ \left( \operatorname{tr} \left( \bm{S} \right) \right)^{2}.
\end{aligned}
\end{equation}
and the equality is attained when $\bm{S}' \propto \bm{S}^{\frac{1}{2}}$.
Without loss of generality, we can choose $\bm{S}' = \bm{S}^{\frac{1}{2}}$ to minimize both the lower and upper bounds in \cref{eq:int_loss_lower_bound_trtr_general,eq:int_loss_lower_bound_trtr_tailbounded}, and the quantization loss in \cref{eq:int_loss_minimization_fn_simple} becomes near-optimal: 
\begin{equation}
\begin{aligned}
& d^{-1} \left( \operatorname{tr} \left( \bm{S} \right) \right)^{2}
\le \left\| \bm{T}^{-1} \right\|_{\mathrm{F}}^{2} \mathbb{E}_{\bm{y}} \left\| \bm{T} \bm{y} \right\|_{\infty}^{2}
\\
& \quad \le \!\! \begin{cases}
d^{o \left( 1 \right) - 1} \left( \operatorname{tr} \left( \bm{S} \right) \right)^{2} , & \!\!\!\! \text{tail-bounded $\mathcal{D}_{\mathrm{y}}$} , \\
\left( \operatorname{tr} \left( \bm{S} \right) \right)^{2} , & \!\!\!\! \text{otherwise} .
\end{cases}
\end{aligned}
\label{eq:int_loss_lower_bound}
\end{equation}

\subsubsection{Discussion}
\label{sec:theory_proof_int_discussion}

Consider the case of $\bm{T}$ being orthogonal.
For any general distribution $\mathcal{D}_{\mathrm{y}}$, the bounds are
\begin{equation}
\begin{aligned}
\operatorname{tr} \left( \bm{S}^{2} \right)
\le \left\| \bm{T}^{-1} \right\|_{\mathrm{F}}^{2} \mathbb{E}_{\bm{y}} \left\| \bm{T} \bm{y} \right\|_{\infty}^{2}
\le d \operatorname{tr} \left( \bm{S}^{2} \right)
\end{aligned}
\label{eq:int_loss_lower_bound_orthogonal_simple}
\end{equation}
with detailed steps in \cref{eq:int_loss_lower_bound_orthogonal}.
For a tail-bounded $\mathcal{D}_{\mathrm{y}}$, the Hadamard $\bm{T} = \bm{H}$ is empirically the best orthogonal transform, so the bounds may be tightened to
\begin{equation}
\begin{aligned}
\left\| \bm{T}^{-1} \right\|_{\mathrm{F}}^{2} \mathbb{E}_{\bm{y}} \left\| \bm{T} \bm{y} \right\|_{\infty}^{2}
= d^{o \left( 1 \right)} \operatorname{tr} \left( \bm{S}^{2} \right)
\end{aligned}
\label{eq:int_loss_lower_bound_hadamard_simple}
\end{equation}
Comparing \cref{eq:int_loss_lower_bound} with \cref{eq:int_loss_lower_bound_orthogonal_simple,eq:int_loss_lower_bound_hadamard_simple} using the inequality \cref{eq:trace_inequality_simple},
an orthogonal $\bm{T}$ is suboptimal,
and the error bounds of the Hadamard $\bm{T} = \bm{H}$ can be at most $d$ times larger than those of the optimal $\bm{T}$, in the extreme outlier scenario.
\cref{fig:hsuw_2d_plot} (right two columns) provides visualizations of different transforms.

\section{GPU Kernel Support}
\label{sec:gpu_kernel}

As noted in \cref{eq:layer_forward_pass}, part of the WUSH transform has to be applied online during inference, requiring a specialized GPU kernel.
Online blockwise \emph{Hadamard} rotations are efficiently supported in MR-GPTQ's kernels~\citep{egiazarian2025bridginggappromiseperformance}.
Yet, in their case, the transforms are data-independent, so it is possible to reuse a single transform across all blocks. By contrast, WUSH assigns a \emph{distinct} transform to each block. This per-block specialization significantly complicates kernel reuse and fusion in existing low-level libraries.

\textbf{Fused WUSH + Quant kernel implementation.}
A first design decision is the storage layout of the WUSH matrices.
For block size \texttt{G} and number of blocks \texttt{C}, we store matrices as \texttt{(G,G,C)}, i.e., transposed with respect to the channel dimension. Since this transposition is performed offline, it does not introduce any runtime overhead. This layout choice is motivated by the observation that, for small group sizes (e.g., \texttt{G=32} for MXFP4), the WUSH transformation is memory-bound. 
Accordingly, the proposed kernel formulates the operation as a GEMM-equivalent computation, corresponding to \texttt{C} independent dense transforms. Each thread block processes a \texttt{Tile$_{\texttt{M}}$$\times$G} subproblem, where \texttt{Tile$_{\texttt{M}}$} denotes the tile size along the batch dimension. Crucially, each block only needs to load a single \texttt{(G,G)} matrix to produce its output, matching the Hadamard case (H + Quant) in MR-GPTQ, maximizing effective bandwidth utilization.

The kernel is implemented using a CUTLASS GEMM template. The outer dimension \texttt{M} of the activation tensor (with shape \texttt{(M,K)}) and the outer dimension \texttt{G} of the WUSH matrix (with shape \texttt{(G,G,C)}) are mapped to the \texttt{M} and \texttt{N} dimensions of the GEMM, respectively. The internal GEMM dimension \texttt{K'} is fixed to \texttt{G}, rather than \texttt{G$\times$C}. The index over \texttt{C} is instead handled implicitly as an offset that each thread block applies based on the specific subproblem it is assigned to.

\begin{figure}[!htpb]
\begin{center}
\includegraphics[width=\columnwidth]{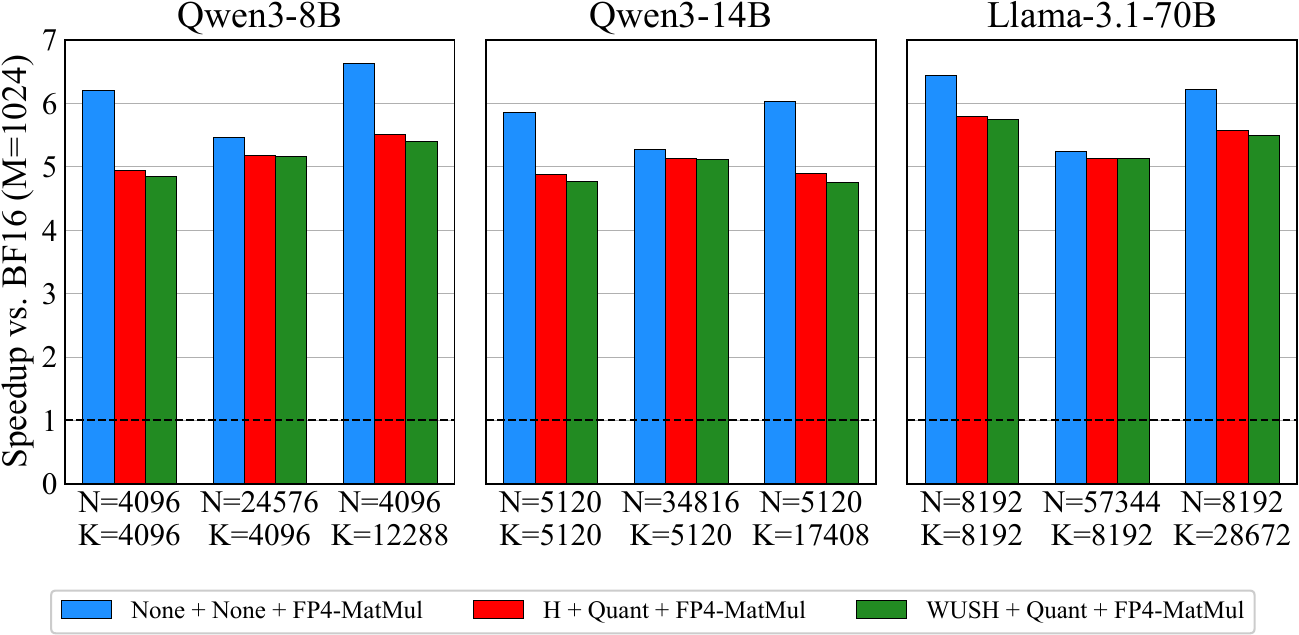}
\caption{\textbf{Per-layer MXFP4 (group size \texttt{G=32}) inference speedups relative to BF16 at batch size \texttt{M=1024} for Qwen3-8B, Qwen3-14B, and Llama-3.1-70B.} We compare FP4-MatMul kernel without transform or quantization (None + None), with fused Hadamard and quantization kernel (H + Quant), and with fused WUSH and quantization kernel (WUSH + Quant). For block size \texttt{G} and a layer with \texttt{K=C$\times$G} input and \texttt{N} output channels, the Hadamard matrix size is \texttt{(G,G)}, while the WUSH size is \texttt{(G,G,C)}. For H + Quant, the QuTLASS implementation is used~\citep{egiazarian2025bridginggappromiseperformance}, which also fuses the transform with the quantization step.}
\label{fig:hsuw_kernel}
\end{center}
\end{figure}

\textbf{Performance.}
Figure~\ref{fig:hsuw_kernel} reports per-layer inference speedups on RTX 5090 at batch size 1024 for Qwen3-8B, Qwen3-14B, and Llama-3.1-70B, normalized to BF16.
Across all models and layer shapes, the MXFP4 matrix multiplication kernel (FP4-MatMul) achieves speedups of up to 6.6$\times$ over BF16, compared to the theoretical peak of 8$\times$ on this GPU.
When including the fused transform and quantization kernel, WUSH + Quant + FP4-MatMul achieves speedups of up to 5.8$\times$ over BF16, and it closely matches the performance of the optimized H + Quant + FP4-MatMul baseline, with an average throughput difference of only 1.3\%.
Remarkably, in some configurations, WUSH + Quant + FP4-MatMul achieves identical performance to H + Quant + FP4-MatMul.
This negligible difference arises from loading \texttt{C} transform matrices instead of a single one; however, it is almost entirely mitigated by the kernel design and effective reuse via the L1 and L2 caches.
Pure-quantization without an online transform (None + Quant + FP4-MatMul) would remove the transform cost but still incur the same activation quantization overhead, and therefore its performance is expected to lie between None + None + FP4-MatMul and H + Quant + FP4-MatMul.
These results demonstrate that WUSH introduces negligible additional costs.

\section{Experiments}
\label{sec:experiments}

\textbf{Setup.}
We now present experiments evaluating the layerwise losses and the accuracy of quantized models using the WUSH transform, relative to SmoothQuant~\citep{pmlr-v202-xiao23c}, QuaRot~\citep{NEURIPS2024_b5b93943}, SpinQuant~\citep{liu2025spinquant}, and MR-GPTQ~\citep{egiazarian2025bridginggappromiseperformance} baselines. MR-GPTQ is the current state-of-the-art for MXFP/NVFP quantization.
Beyond comparisons to prior methods, these experiments also include ablations over both the transform design and the quantization procedure.
Specifically, we compare progressively richer transform variants to isolate the effects of orthogonal mixing, the Hadamard backbone, and the adaptive non-orthogonal component, while also evaluating both RTN and GPTQ quantization.
We conduct experiments on the Llama-3.2-3B-Instruct, Llama-3.1-8B-Instruct, and Qwen3-8B/14B/32B models.
We use Platinum Benchmarks~\citep{vendrow2025largelanguagemodelbenchmarks} and the LM Evaluation Harness~\citep{leo_gao_2021_5371629} for accuracy evaluation.

\textbf{Offline preprocessing costs.}
We report empirical offline preprocessing costs (\cref{tab:preprocessing_costs}) and storage overheads (\cref{tab:storage_overhead}) in Appendix~\cref{sec:app_costs}.

\subsection{Superior Layerwise Quantization Loss}
\label{sec:experiments_layer_loss}

\begin{table}[!htpb]
\caption{Layerwise RTN quantization loss in the unit of $10^{-3}$.}
\label{tab:layer_l2_rtn}
\begin{center}
\begin{small}
\begin{tabular}{c@{\hspace{.5em}}|@{\hspace{.5em}}c@{\hspace{.5em}}|@{\hspace{.5em}}c@{\hspace{1em}}c@{\hspace{1em}}c@{\hspace{1em}}c@{\hspace{1em}}c@{\hspace{1em}}c@{\hspace{1em}}c}
\toprule
\multicolumn{2}{c}{} & Q               & K               & V               & O               & G               & U               & D               \\
\midrule
\multirow{5}{*}{\rotatebox[origin=c]{90}{MXFP4}} & I    & 11.1          & 12.0          & 10.7          & 4.35          & 7.10          & 6.56          & 5.47          \\
                       & R    & 7.61          & 7.73          & 9.14          & 3.84          & 5.56          & 5.68          & 4.07          \\
                       & H    & 7.24          & 7.20          & 8.60          & 3.79          & 5.45          & 5.61          & 3.90          \\
                       & WUS  & 6.27          & 7.22          & 4.05          & 3.57          & 5.76          & 4.75          & 4.46          \\
                       & WUSH & \textbf{3.34} & \textbf{3.34} & \textbf{3.30} & \textbf{2.76} & \textbf{4.49} & \textbf{4.39} & \textbf{3.39} \\
\midrule
\multirow{5}{*}{\rotatebox[origin=c]{90}{NVFP4}} & I    & 4.23          & 4.35          & 4.37          & 2.34          & 3.49          & 3.41          & 2.41          \\
                       & R    & 4.98          & 5.01          & 5.86          & 2.54          & 3.63          & 3.68          & 2.66          \\
                       & H    & 5.60          & 5.62          & 6.70          & 2.58          & 3.71          & 3.79          & 2.78          \\
                       & WUS  & \textbf{2.26} & \textbf{2.36} & 2.30          & 2.00          & \textbf{3.04} & \textbf{3.01} & \textbf{2.23} \\
                       & WUSH & 2.40          & 2.44          & \textbf{2.28} & \textbf{1.92} & 3.09          & 3.02          & 2.33          \\
\midrule
\multirow{5}{*}{\rotatebox[origin=c]{90}{INT4}}  & I    & 170.          & 123.          & 13.2          & 4.55          & 55.9          & 9.83          & 19.3          \\
                       & R    & 5.79          & 5.84          & 6.96          & 2.94          & 4.22          & 4.29          & 3.10          \\
                       & H    & 5.57          & 5.55          & 6.80          & 2.86          & 4.09          & 4.25          & 3.03          \\
                       & WUS  & 213.          & 142.          & 10.7          & 4.54          & 50.2          & 7.42          & 13.1          \\
                       & WUSH & \textbf{2.39} & \textbf{2.43} & \textbf{2.54} & \textbf{2.10} & \textbf{3.43} & \textbf{3.43} & \textbf{2.55} \\
\bottomrule
\end{tabular}
\end{small}
\end{center}
\end{table}

We first report the layerwise weight-activation RTN quantization loss ($L_2$ loss normalized by the number of elements) for each linear layer (attention projection layers Q, K, V, O and MLP projection layers Gate (G), Up (U), Down (D)) in the 18th transformer block of Qwen3-8B, using 32 calibration samples of sequence length 2048 from the FineWeb-Edu dataset~\citep{NEURIPS2024_370df50c}. We found the results to be consistent across blocks and datasets.
We compare the WUSH transform with the identity (I), random rotation (R) averaged over 10 runs, Hadamard (H), and WUSH without Hadamard (WUS).
\cref{tab:layer_l2_rtn} summarizes the losses for MXFP4, NVFP4, and INT4 formats.
Following the format definitions, MXFP4 uses group size 32, and NVFP4 uses group size 16.
We use INT4 to denote 4-bit integers with Gaussian MSE clipping and BF16 scales of group size 32.
The transform block size is matched to the quantization group size.
Across formats, WUSH substantially reduces the loss: WUSH consistently yields the smallest loss for MXFP4 and INT4, while WUSH and WUS are almost equal for NVFP4.
The errors for NVFP4 tend to be lower due to its smaller group size.

Although the theoretical derivation uses a stochastic unbiased quantization model as a tractable surrogate, the resulting transform is applied with deterministic RTN in these layerwise experiments.
The trends in \cref{tab:layer_l2_rtn} support this surrogate, suggesting that the assumptions (\cref{sec:theory_proof}) and modeling (\cref{sec:theory_proof_fp_model,sec:theory_proof_int_model}) capture the dominant sources of error for FP and INT block quantizers.
At the same time, MXFP4 behaves as a hybrid between ideal FP and INT quantization: the effective mantissa step changes uniformly, imparting INT-like behavior, especially in the subnormal regime and small exponent ranges.
This explains why Hadamard transforms provide measurable gains for MXFP4, despite the ideal FP theory (\cref{sec:theory_proof_fp_discussion}) predicting no benefit for purely orthogonal transforms.
The poor INT4 performance of WUS further highlights the role of the Hadamard component: without it, the non-orthogonal component can amplify individual coordinates, increasing the AbsMax group scale and therefore the INT quantization error.
For NVFP4, the Hadamard alone is harmful due to the top-element preservation effect of this format~\citep{egiazarian2025bridginggappromiseperformance}. Remarkably, other components in WUSH overcome this effect and produce smaller losses than the identity transform.

\subsection{End-to-End LLM Accuracy Benchmarks}
\label{sec:experiments_benchmarks}

\begin{table*}[!htpb]
\caption{Llama-3.1-8B-Instruct W4A4 accuracy results on the LM Evaluation Harness under different methods.}
\label{tab:lm_eval_llama_31-8b-instruct}
\begin{center}
\begin{small}
\begin{tabular}{c | c | c c c c | c c}
\toprule
Format & Method & MMLU-CoT & GSM8K & HellaSwag & WinoGrande & Average & Recovery \\ 
\midrule
BF16 & - & 72.76 & 85.06 & 80.01 & 77.90 & 78.93 & 100.0 \\ 
\midrule
\multirow{9}{*}{NVFP4} & RTN-I & 68.26 & 78.39 & 78.15 & 74.11 & 74.73 & 94.67 \\
& RTN-H & 67.41 & 78.01 & 77.31 & 73.48 & 74.05 & 93.82 \\
& SmoothQuant & 68.90 & 79.50 & 79.50 & 74.70 & 75.70 & 95.90 \\
& QuaRot & 66.50 & 77.40 & 77.25 & 75.14 & 74.10 & 93.80 \\
& SpinQuant & 66.50 & 76.10 & 76.96 & 75.32 & 73.70 & 93.40 \\
& GPTQ-I & 68.85 & 81.25 & 78.26 & 74.51 & 75.72 & 95.92 \\
& GPTQ-H (MR-GPTQ) & 69.12 & 80.80 & 78.17 & 75.24 & 75.84 & 96.08 \\
& RTN-WUSH & 68.83 & 78.57 & 78.22 & 75.47 & 75.28 & 95.37  \\
& GPTQ-WUSH  & 69.69 & 80.11 & 78.52 & 76.09 & \textbf{76.10} & \textbf{96.40} \\
\midrule
\multirow{9}{*}{MXFP4} & RTN-I & 62.21 & 67.85 & 73.99 & 73.24 & 69.32 & 87.83 \\
& RTN-H & 62.38 & 72.48 & 75.29 & 71.67 & 70.45 & 89.26 \\
& SmoothQuant & 63.93 & 68.54 & 75.10 & 73.56 & 70.30 & 89.06 \\
& QuaRot & 49.86 & 56.94 & 73.50 & 71.43 & 62.90 & 79.70 \\
& SpinQuant & 61.80 & 68.16 & 74.87 & 72.93 & 69.40 & 88.00 \\
& GPTQ-I & 63.49 & 68.46 & 76.01 & 74.51 & 70.62 & 89.47 \\
& GPTQ-H (MR-GPTQ) & 67.19 & 75.70 & 76.91 & 74.80 & 73.65 & 93.31 \\
&  RTN-WUSH & 66.85 & 75.16 & 77.28 & 73.56 & 73.21 & 92.75 \\
&  GPTQ-WUSH & 67.79 & 77.41 & 77.44 & 74.78 & \textbf{74.35} & \textbf{94.20} \\
\bottomrule
\end{tabular}
\end{small}
\end{center}
\end{table*}

\begin{figure}[!htpb]
\begin{center}
    \begin{subfigure}[t]{\linewidth}
        \includegraphics[width=\linewidth]{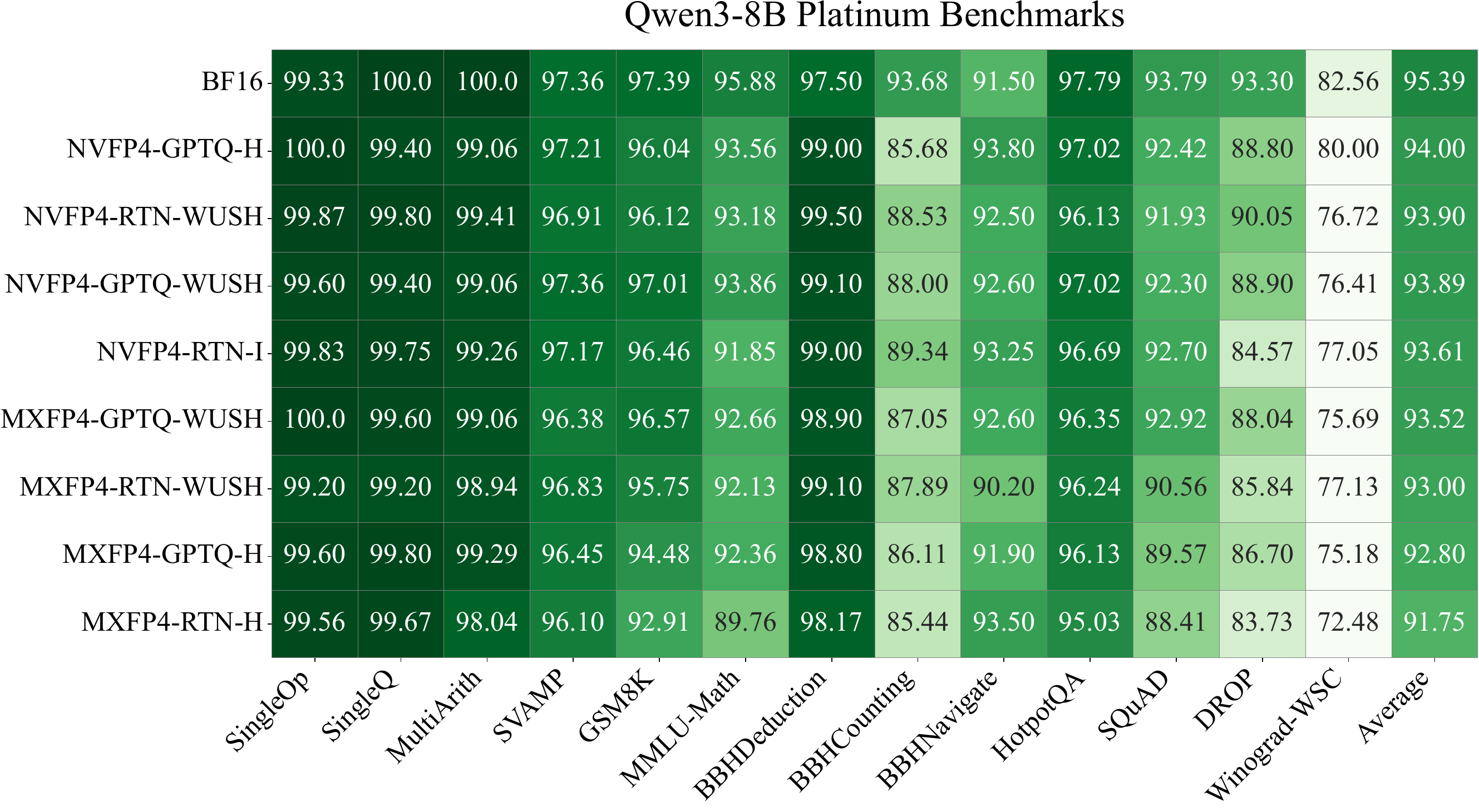}
        \label{fig:qwen3-8b_table}
    \end{subfigure}
    \begin{subfigure}[t]{\linewidth}
        \includegraphics[width=\linewidth]{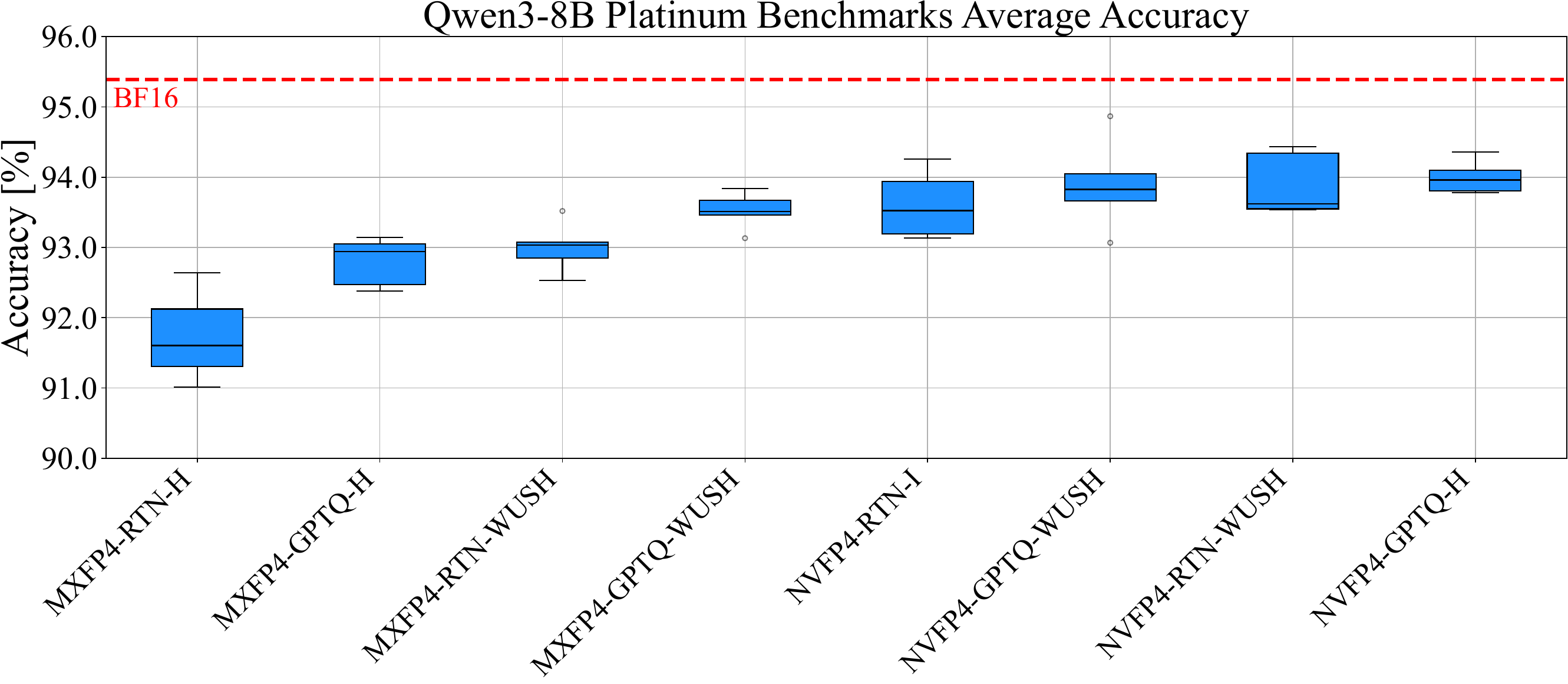}
        \label{fig:qwen3-8b_boxplot}
    \end{subfigure}
\caption{Comparison of different transforms on Qwen3-8B for both NVFP4 and MXFP4 quantization on Platinum Benchmarks. The top table shows accuracy results across the individual benchmark tasks, while the bottom plot shows the average accuracy scores together with their standard deviations for each transform.}
\label{fig:combined_platinum_results_qwen3-8B}
\end{center}
\end{figure}

\cref{tab:lm_eval_llama_31-8b-instruct} compares WUSH relative to prior works for Llama-3.1-8B-Instruct on LM Evaluation Harness metrics, in the same setup as~\citet{egiazarian2025bridginggappromiseperformance}.
For methods with block transforms (i.e., not applicable to SmoothQuant, QuaRot, and SpinQuant), the transform block size is always the same as the quantization group size.
Overall, WUSH generally yields the best results, by a large margin for MXFP4.
On Qwen3-8B measured on Platinum Benchmarks (\cref{fig:combined_platinum_results_qwen3-8B}), WUSH improves MXFP4 by up to +1.3 (RTN) and +0.7 (GPTQ) average points, while NVFP4 improvements are smaller and often within run-to-run variability.
Additional accuracy results are in Appendix \cref{sec:app_accuracy}.
\cref{tab:lm_eval_rtn} shows the LM Evaluation Harness results for different models, e.g., for Qwen3-14B WUSH reduces the NVFP4-MXFP4 average gap to 0.36 points, and above 98\% recovery.
\cref{fig:combined_platinum_results_llama-3B,fig:combined_platinum_results_llama-8B,fig:combined_platinum_results_qwen3-14B,fig:combined_platinum_results_qwen3-32B} also reflect similar trends.
\cref{tab:kl_qwen3_8b} shows WUSH reduces the KL divergence.
\cref{tab:calibration_sensitivity} shows WUSH is stable with respect to the choice of calibration dataset.
We conclude that WUSH improves the state-of-the-art for weight-activation quantization. 

\section{Conclusion}
\label{sec:conclusion}

We have studied weight-activation quantization for LLMs in the presence of heavy-tailed outliers and have shown that this problem admits closed-form linear transforms for both FP and INT block quantizers. Our construction, WUSH, combines a fixed Hadamard backbone with a data-aware component derived from second-order statistics, yielding a non-orthogonal blockwise transform that is provably near-optimal and remains amenable to efficient GPU kernel implementations. These results clarify the empirical success of Hadamard rotations and indicate that principled, data-aware block transforms can yield significant improvements, e.g., making the MXFP format competitive with NVFP.

\section*{Acknowledgements}

We thank Ileana Rugina for useful discussions. We thank Tijmen Blankevoort for useful discussions and for suggesting the name WUSH, which is a clear improvement over our previous name (HSUW).
This research was funded in part by the Austrian Science Fund (FWF) 10.55776/COE12, and partially supported by a generous grant from NVIDIA.

\section*{Impact Statement}

This paper presents work whose goal is to advance the field of machine learning. There are many potential societal consequences of our work, none of which we feel must be specifically highlighted here.

\bibliography{bibliography}
\bibliographystyle{icml2026}

\newpage
\appendix
\onecolumn

\section{Detailed Theoretical Results}
\label{sec:app_theory}

\subsection{Second Moment Decompositions}
\label{sec:moment_decompositions}

\begin{lemma}[Invariance of the Decomposition Choices]
\label{thm:moment_decompositions}
For any full rank $\bm{W}_{\left( i \right)}',\bm{X}_{\left( i \right)}'$ that satisfy \cref{eq:wp_xp_definition_empirical}, the matrix products $\bm{W}_{\left( i \right)}' \bm{U}_{\left( i \right)}$ and $\bm{X}_{\left( i \right)}' \bm{V}_{\left( i \right)}$ in \cref{eq:hsuw} are invariant up to sign and permutation changes.
\end{lemma}
\begin{proof}
We omit the subscript $\left( i \right)$ and superscript $'$ to simplify notation.

Let $\widehat{\bm{W}}, \widehat{\bm{X}} \in \mathbb{R}^{d \times d}$ such that $\widehat{\bm{W}} \widehat{\bm{W}}^\top = \bm{W} \bm{W}^\top$ and $\widehat{\bm{X}} \widehat{\bm{X}}^\top = \bm{X} \bm{X}^\top$.
Define $\bm{Q}_{\mathrm{W}} = \bm{W}^{-1}\widehat{\bm{W}}$ and $\bm{Q}_{\mathrm{X}} = \bm{X}^{-1}\widehat{\bm{X}}$.
Then,
\begin{equation}
\begin{aligned}
\bm{Q}_{\mathrm{W}}\bm{Q}_{\mathrm{W}}^\top
= \bm{W}^{-1}\widehat{\bm{W}}\widehat{\bm{W}}^\top \bm{W}^{-\top}
= \bm{W}^{-1}(\bm{W}\bm{W}^\top)\bm{W}^{-\top}
= \mathbf{I}
,
\end{aligned}
\end{equation}
and similarly $\bm{Q}_{\mathrm{X}}\bm{Q}_{\mathrm{X}}^\top = \mathbf{I}$.
Hence $\bm{Q}_{\mathrm{W}},\bm{Q}_{\mathrm{X}}$ are orthogonal and
$\widehat{\bm{W}} = \bm{W} \bm{Q}_{\mathrm{W}}$, $\widehat{\bm{X}} = \bm{X} \bm{Q}_{\mathrm{X}}$.
By \cref{eq:svd_wx_usv}, we have $\bm{U}, \bm{S}, \bm{V} \in \mathbb{R}^{d \times d}$ as the SVD of $\bm{W}^\top \bm{X} = \bm{U} \bm{S} \bm{V}^\top$.
Therefore,
\begin{equation}
\begin{aligned}
\widehat{\bm{W}}^\top \widehat{\bm{X}}
= \bm{Q}_{\mathrm{W}}^\top \bm{W}^\top \bm{X} \bm{Q}_{\mathrm{X}}
= \bm{Q}_{\mathrm{W}}^\top \left( \bm{U} \bm{S} \bm{V}^\top \right) \bm{Q}_{\mathrm{X}}
= \left( \bm{Q}_{\mathrm{W}}^\top \bm{U} \right) \bm{S}\left( \bm{Q}_{\mathrm{X}}^\top \bm{V} \right)^\top
.
\end{aligned}
\end{equation}
Thus, we may take $\widehat{\bm{U}} = \bm{Q}_{\mathrm{W}}^\top \bm{U}$, $\widehat{\bm{S}}=\bm{S}$, $\widehat{\bm{V}}=\bm{Q}_X^\top\bm{V}$
(up to the sign and permutation conventions) as the SVD of $\widehat{\bm{W}}^\top \widehat{\bm{X}} = \widehat{\bm{U}} \widehat{\bm{S}} \widehat{\bm{V}}^\top$. With this,
\begin{equation}
\begin{aligned}
\widehat{\bm{W}} \widehat{\bm{U}}
= &\ \bm{W} \bm{Q}_{\mathrm{W}} \bm{Q}_{\mathrm{W}}^\top \bm{U}
= \bm{W} \bm{U}
,\\
\widehat{\bm{X}} \widehat{\bm{V}}
= &\ \bm{X} \bm{Q}_{\mathrm{X}} \bm{Q}_{\mathrm{X}}^\top \bm{V}
= \bm{X} \bm{V}
.
\end{aligned}
\end{equation}
\end{proof}

\subsection{Multivariate vs. Univariate Probabilistic Modeling}
\label{sec:multivariate_vs_univariate}

A common intuition in prior transform-based quantization methods is largely univariate: apply an orthogonal mixing (Hadamard or learned rotation) so that the distribution of coordinate-wise values becomes more ``Gaussian-like,'' thereby reducing the impact of heavy-tailed outliers.
In contrast, our probabilistic modeling is explicitly multivariate: we model a quantization group as a joint random vector and reason about its second-order structure, enabling analysis in the eigenbasis and construction of transforms that depend on the spectrum (eigenvalues / singular values).

\begin{figure}[!htpb]
\begin{center}
\includegraphics[width=.6\textwidth]{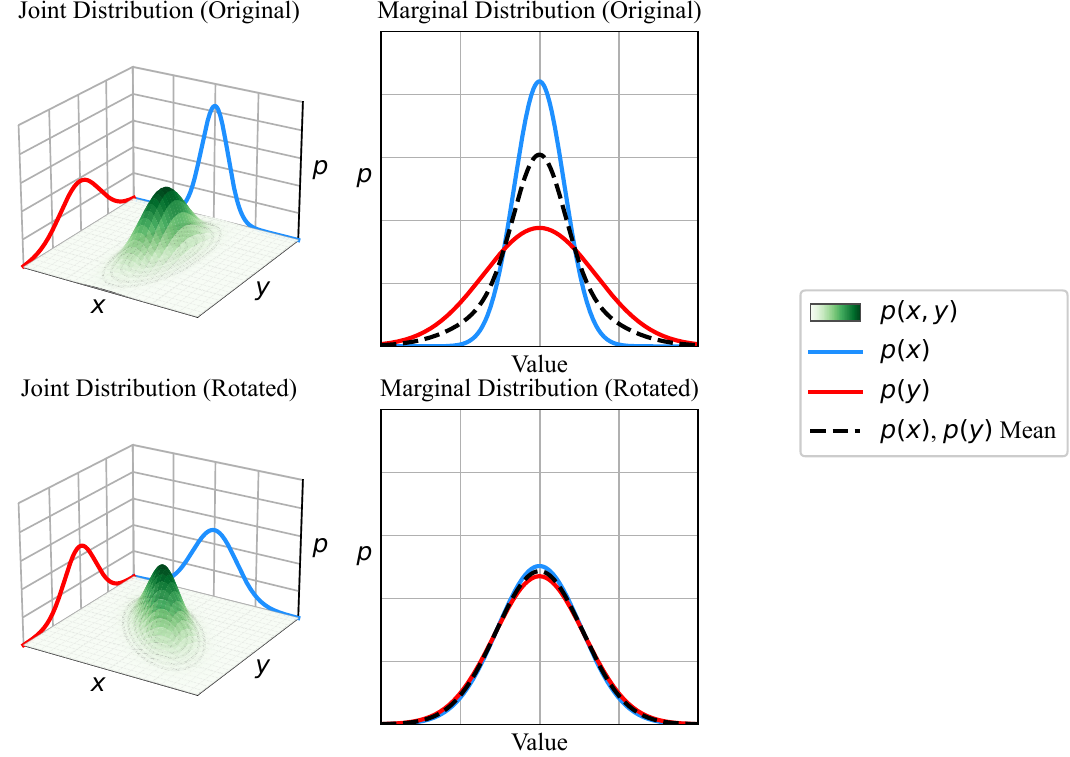}
\caption{\textbf{Rotation can equalize marginals without alleviating multivariate outliers.}
Top: an anisotropic 2D distribution with unequal coordinate marginals.
Bottom: after an orthogonal rotation, the two marginals become nearly identical, and their average appears more Gaussian-like.
Nevertheless, the joint distribution retains the same eigenvalues (spectrum) under rotation, so extreme points remain extreme in the multivariate space; they are merely ``hidden'' from any single marginal view.
This motivates multivariate, spectrum-aware transforms that act on the joint geometry rather than relying on marginal ``Gaussianization.''}
\label{fig:marginals_rotation}
\end{center}
\end{figure}

\cref{fig:marginals_rotation} illustrates the key limitation of purely orthogonal ``Gaussianization.''
In the top row (original distributions), the joint density is anisotropic: the two coordinate-wise marginals differ in spread, and their average can appear heavy-tailed or non-Gaussian depending on the axis alignment.
In the bottom row, after the Hadamard ($45^\circ$ rotation) transform, the coordinate marginals become nearly identical, and the average marginal appears more Gaussian-like.
However, this does not imply that outliers have been removed in the underlying multivariate geometry: orthogonal transforms preserve global energy and, crucially, preserve the eigenvalues of the covariance (they only rotate eigenvectors).
Consequently, points that are extreme in the principal directions remain extreme in the joint space, even if they are less visually apparent in any single marginal.

This distinction is central to group quantization.
Quantization error is driven by joint properties (e.g., how mass concentrates along dominant directions and how extreme points interact with groupwise scaling), which cannot be certified by marginal statistics alone.
Our multivariate formulation therefore targets the spectrum directly: by exposing and using the eigen-/singular-value structure of the group statistics, we can design transforms that genuinely reduce the effective anisotropy responsible for outlier-dominated scaling, rather than merely redistributing it across coordinates via rotations.

\subsection{Equalization Effect}
\label{sec:equalization_effect}

\begin{figure}[!ht]
\begin{center}
\includegraphics[width=\linewidth]{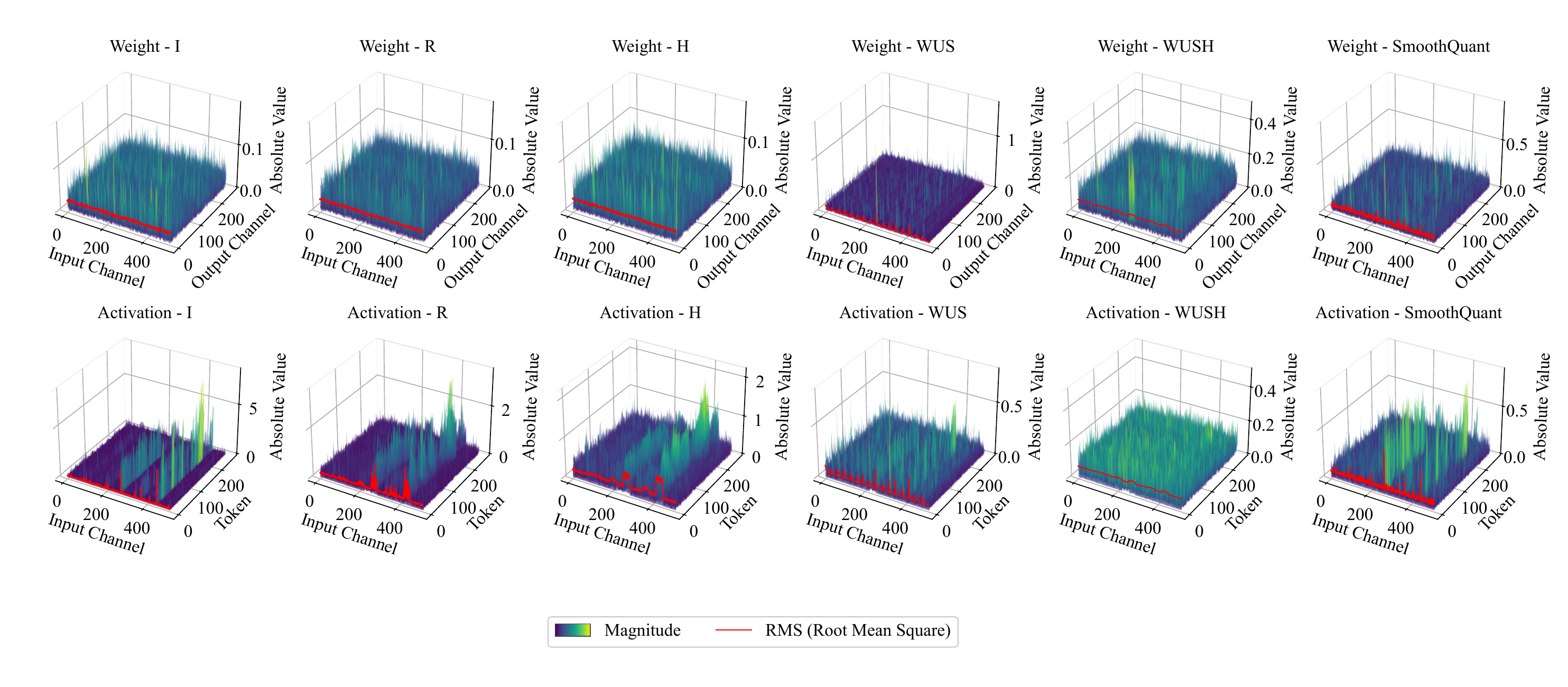}
\caption{
\textbf{Visualization of blockwise RMS equalization induced by WUSH.}
Each panel uses its own vertical scale for visibility.
We visualize absolute values of a representative weight slice (top row; input channel vs. output channel) and calibration activation slice (bottom row; input channel vs. token) after applying identity (I), random rotation (R), Hadamard (H), WUS (WUSH without Hadamard), WUSH, and SmoothQuant transforms.
The surfaces show entrywise magnitudes, while the red curves show the per-input-channel RMS across output channels or tokens.
Identity leaves the original outliers unchanged.
Random rotation and Hadamard spread outliers within each weight or activation block, but as orthogonal transforms, they cannot transfer scale between weights and activations. Hadamard nevertheless equalizes RMS within a block more effectively than a random rotation.
SmoothQuant transfers scale between activations and weights through diagonal channelwise rescaling, but can unevenly amplify some channels.
WUS applies the adaptive non-orthogonal component but removes the Hadamard factor, leaving the transformed second moment aligned with $\bm{S}$ and the per-channel RMS unequal.
In contrast, WUSH maps each blockwise second moment to $\bm{H}\bm{S}\bm{H}^{\top}$, whose diagonal is constant because $\bm{H}$ is a normalized Hadamard matrix.
Therefore, WUSH balances the weight and activation scales and produces the flattest RMS profiles.
}
\label{fig:hsuw_3d_vis}
\end{center}
\end{figure}

\cref{fig:hsuw_3d_vis} provides a layerwise visualization of the equalization effect induced by the WUSH construction.
Using \cref{eq:wp_xp_definition_empirical,eq:hsuw}, for each block, the second moment of the activation after the WUSH transform satisfies
\begin{equation}
d_{\mathrm{batch}}^{-1}
{\bm{T}_{\mathrm{wush}}}_{\left( i \right)}
\bm{X}_{\left( i \right)}
\bm{X}_{\left( i \right)}^{\top}
{\bm{T}_{\mathrm{wush}}}_{\left( i \right)}^\top
=
{\bm{T}_{\mathrm{wush}}}_{\left( i \right)}
\bm{X}_{\left( i \right)}'
\bm{X}_{\left( i \right)}'^{\top}
{\bm{T}_{\mathrm{wush}}}_{\left( i \right)}^\top
=
\bm{H}\bm{S}_{\left( i \right)}\bm{H}^\top .
\end{equation}
Since the normalized Hadamard matrix has entries $\pm d^{-\frac{1}{2}}$, the diagonal of the transformed second moment is a constant of $d^{-1} \operatorname{tr} \left( \bm{S}_{\left( i \right)} \right)$.
The analogous property also holds for the transformed weight second moment, so WUSH equalizes the per-channel RMS (root mean square) within the corresponding weight and activation blocks.
This distinguishes WUSH from orthogonal transforms, which can spread outliers but cannot transfer scale between weights and activations, and from SmoothQuant, which transfers scale only through diagonal channelwise rescaling.
Consistent with this, \cref{fig:hsuw_3d_vis} shows that WUSH balances the weight and activation scales and produces the flattest RMS profiles.

\subsection{Equation Derivations}
\label{sec:equation_derivations}

\textbf{Standard basis vector notation.}

Denote the $k$-th standard basis vector as $\mathbf{e}_{k} \in \left\{ 0, 1 \right\}^{d}$, where the $k$-th element is $1$ and $0$ otherwise.

\textbf{Basic second-order expectations.}
\begin{equation}
\begin{aligned}
& \mathbb{E}_{\bm{y}} \bm{y} \bm{y}^\top
= \bm{W}'^\top \left( \mathbb{E}_{\bm{x}} \bm{x} \bm{x}^\top \right) \bm{W}'
= \bm{W}'^\top \bm{X}' \bm{X}'^\top \bm{W}'
= \bm{U} \bm{S} \bm{V}^\top \bm{V} \bm{S} \bm{U}^\top
= \bm{U} \bm{S}^{2} \bm{U}^\top
, & \text{(\cref{eq:svd_wx_usv,eq:ww_xx})} \\
& \mathbb{E}_{\bm{y}} \bm{T} \bm{y} \left( \bm{T} \bm{y} \right)^\top
= \bm{T} \left( \mathbb{E}_{\bm{y}} \bm{y} \bm{y}^\top \right) \bm{T}^\top
= \bm{U}' \bm{S}' \bm{R}^\top \bm{S}^{-1} \bm{U}^\top \bm{U} \bm{S}^{2} \bm{U}^\top \bm{U} \bm{S}^{-1} \bm{R} \bm{S}' \bm{U}'^\top
= \bm{U}' \bm{S}'^{2} \bm{U}'^\top
, & \text{(\cref{eq:t_u_s_r_s_u})} \\
& \mathbb{E}_{\bm{y}} \left\| \bm{y} \right\|^2
= \operatorname{tr} \left( \mathbb{E}_{\bm{y}} \bm{y} \bm{y}^\top \right)
= \operatorname{tr} \left( \bm{U} \bm{S}^{2} \bm{U}^\top \right)
= \operatorname{tr} \left( \bm{S}^{2} \right)
, \\
& \mathbb{E}_{\bm{y}} \left\| \bm{T} \bm{y} \right\|^2
= \operatorname{tr} \left( \bm{T} \bm{y} \left( \bm{T} \bm{y} \right)^\top \right)
= \operatorname{tr} \left( \bm{U}' \bm{S}'^{2} \bm{U}'^\top \right)
= \operatorname{tr} \left( \bm{S}'^{2} \right)
.
\end{aligned}
\label{eq:basic_second_moments}
\end{equation}

\textbf{\cref{eq:loss_two_terms_simple}.}
\begin{equation}
\begin{aligned}
\ell = &\ \mathbb{E}_{\bm{w},\bm{x}, \varepsilon \left( \bm{T}_{\mathrm{W}} \bm{w} \right), \varepsilon \left( \bm{T}_{\mathrm{X}} \bm{x} \right)}
\left(
\left( \bm{T}_{\mathrm{W}} \bm{w} + \varepsilon \left( \bm{T}_{\mathrm{W}}\bm{w} \right) \right)^\top
\left( \bm{T}_{\mathrm{X}} \bm{x} + \varepsilon \left( \bm{T}_{\mathrm{X}}\bm{x} \right) \right)
- \bm{w}^\top \bm{x}
\right)^{2} \\
= &\ \mathbb{E}_{\bm{w}, \bm{x}, \varepsilon \left( \bm{T}_{\mathrm{W}} \bm{w} \right), \varepsilon \left( \bm{T}_{\mathrm{X}} \bm{x} \right)}
\Bigl(
\bm{x}^\top \bm{T}_{\mathrm{X}}^\top
\varepsilon \left( \bm{T}_{\mathrm{W}} \bm{w} \right)
+ \bm{w}^\top \bm{T}_{\mathrm{W}}^\top
\varepsilon \left( \bm{T}_{\mathrm{X}} \bm{x} \right)
+ \underbrace{\varepsilon \left( \bm{T}_{\mathrm{W}} \bm{w} \right)^\top
\varepsilon \left( \bm{T}_{\mathrm{X}} \bm{x} \right)}_{\text{\makebox[0em]{$\approx 0$ \quad (first order approx.)}}}
+ \underbrace{\bm{w}^{\top} \bm{T}_{\mathrm{W}}^{\top} \bm{T}_{\mathrm{X}} \bm{x} - \bm{w}^{\top} \bm{x}}_{\text{\makebox[0em]{$= 0$ \quad ($\bm{T}_{\mathrm{W}} = \bm{T}_{\mathrm{X}}^{-\top}$)}}}
\Bigr)^{2} \\
\approx &\ \mathbb{E}_{\bm{w}, \bm{x}, \varepsilon \left( \bm{T}_{\mathrm{W}} \bm{w} \right), \varepsilon \left( \bm{T}_{\mathrm{X}} \bm{x} \right)}
\left(
\bm{x}^\top \bm{T}_{\mathrm{X}}^\top
\varepsilon \left( \bm{T}_{\mathrm{W}} \bm{w} \right)
+ \bm{w}^\top \bm{T}_{\mathrm{W}}^\top
\varepsilon \left( \bm{T}_{\mathrm{X}} \bm{x} \right)
\right)^{2} \\
= &\ \mathbb{E}_{\bm{w}, \bm{x}, \varepsilon \left( \bm{T}_{\mathrm{W}} \bm{w} \right), \varepsilon \left( \bm{T}_{\mathrm{X}} \bm{x} \right)} 
\left(
\bm{x}^\top \bm{T}_{\mathrm{X}}^\top
\varepsilon \left( \bm{T}_{\mathrm{W}} \bm{w} \right) \right)^{2}
+ \left( \bm{w}^\top \bm{T}_{\mathrm{W}}^\top
\varepsilon \left( \bm{T}_{\mathrm{X}} \bm{x} \right)
\right)^{2}
+ \bigl( \bm{x}^\top \bm{T}_{\mathrm{X}}^\top
\underbrace{\varepsilon \left( \bm{T}_{\mathrm{W}} \bm{w} \right)}_{\text{\makebox[0em]{$\mathbb{E} \left( \cdot \right) = 0$}}} \bigr)
\bigl( \bm{w}^\top \bm{T}_{\mathrm{W}}^\top
\underbrace{\varepsilon \left( \bm{T}_{\mathrm{X}} \bm{x} \right)}_{\text{\makebox[0em]{$\mathbb{E} \left( \cdot \right) = 0$}}} \bigr) \\
= &\ \mathbb{E}_{\bm{w}, \bm{x}, \varepsilon \left( \bm{T}_{\mathrm{W}} \bm{w} \right), \varepsilon \left( \bm{T}_{\mathrm{X}} \bm{x} \right)}
\left(
\bm{x}^\top \bm{T}_{\mathrm{X}}^\top
\varepsilon \left( \bm{T}_{\mathrm{W}} \bm{w} \right) \right)^{2}
+ \left( \bm{w}^\top \bm{T}_{\mathrm{W}}^\top
\varepsilon \left( \bm{T}_{\mathrm{X}} \bm{x} \right)
\right)^{2} \\
= &\ \Bigl( \mathbb{E}_{\bm{w}, \bm{x}, \varepsilon \left( \bm{T}_{\mathrm{W}} \bm{w} \right)}
\varepsilon \left( \bm{T}_{\mathrm{W}} \bm{w} \right)^\top
\bm{T}_{\mathrm{X}}
\underbrace{\bm{x} \bm{x}^\top}_{\text{\makebox[0em]{$\mathbb{E}_{\bm{x}} \bm{x} \bm{x}^\top = \bm{X}' \bm{X}'^\top$}}}
\bm{T}_{\mathrm{X}}^\top
\varepsilon \left( \bm{T}_{\mathrm{W}} \bm{w} \right) \Bigr)
+ \Bigl( \mathbb{E}_{\bm{w}, \bm{x}, \varepsilon \left( \bm{T}_{\mathrm{X}} \bm{x} \right)}
\varepsilon \left( \bm{T}_{\mathrm{X}} \bm{x} \right)^\top
\bm{T}_{\mathrm{W}}
\underbrace{\bm{w} \bm{w}^\top}_{\text{\makebox[0em]{$\mathbb{E}_{\bm{w}} \bm{w} \bm{w}^\top = \bm{W}' \bm{W}'^\top$}}}
\bm{T}_{\mathrm{W}}^\top
\varepsilon \left( \bm{T}_{\mathrm{X}} \bm{x} \right) \Bigr) \\
= &\ \left( \mathbb{E}_{\bm{w}, \varepsilon \left( \bm{T}_{\mathrm{W}} \bm{w} \right)}
\varepsilon \left( \bm{T}_{\mathrm{W}} \bm{w} \right)^\top
\bm{T}_{\mathrm{X}}
\bm{X}' \bm{X}'^\top
\bm{T}_{\mathrm{X}}^\top
\varepsilon \left( \bm{T}_{\mathrm{W}} \bm{w} \right) \right)
+ \left( \mathbb{E}_{\bm{x}, \varepsilon \left( \bm{T}_{\mathrm{X}} \bm{x} \right)}
\varepsilon \left( \bm{T}_{\mathrm{X}} \bm{x} \right)^\top
\bm{T}_{\mathrm{W}}
\bm{W}' \bm{W}'^\top
\bm{T}_{\mathrm{W}}^\top
\varepsilon \left( \bm{T}_{\mathrm{X}} \bm{x} \right) \right) \\
= &\ \Bigl( \mathbb{E}_{\bm{w}, \varepsilon \left( \bm{T}_{\mathrm{W}} \bm{w} \right)}
\bigl\| \bm{X}'^\top
\underbrace{\bm{T}_{\mathrm{X}}^\top}_{\text{\makebox[0em]{$= \bm{T}_{\mathrm{W}}^{-1}$}}}
\varepsilon \left( \bm{T}_{\mathrm{W}} \bm{w} \right) \bigr\|^{2} \Bigr)
+ \Bigl( \mathbb{E}_{\bm{x}, \varepsilon \left( \bm{T}_{\mathrm{X}} \bm{x} \right)}
\bigl\| \bm{W}'^\top
\underbrace{\bm{T}_{\mathrm{W}}^\top}_{\text{\makebox[0em]{$= \bm{T}_{\mathrm{X}}^{-1}$}}}
\varepsilon \left( \bm{T}_{\mathrm{X}} \bm{x} \right) \bigr\|^{2} \Bigr) \\
= &\ \left( \mathbb{E}_{\bm{w}, \varepsilon \left( \bm{T}_{\mathrm{W}} \bm{w} \right)}
\left\| \bm{X}'^\top
\bm{T}_{\mathrm{W}}^{-1}
\varepsilon \left( \bm{T}_{\mathrm{W}} \bm{w} \right) \right\|^{2} \right)
+ \left( \mathbb{E}_{\bm{x}, \varepsilon \left( \bm{T}_{\mathrm{X}} \bm{x} \right)}
\left\| \bm{W}'^\top
\bm{T}_{\mathrm{X}}^{-1}
\varepsilon \left( \bm{T}_{\mathrm{X}} \bm{x} \right) \right\|^{2} \right)
.
\end{aligned}
\label{eq:loss_two_terms}
\end{equation}

\textbf{\cref{eq:trace_inequality_simple}.}
\begin{equation}
\begin{aligned}
\left.\begin{array}{lr}
\left( \operatorname{tr} \left( \bm{S} \right) \right)^{2}
\ge \operatorname{tr} \left( \bm{S}^{2} \right)
&\ \text{($\bm{S} \succ \bm{0}$)} \\
\operatorname{tr} \left( \bm{S}^{2} \right)
\ge d^{-1} \left( \operatorname{tr} \left( \bm{S} \right) \right)^{2}
&\ \text{(Jensen's inequality)}
\end{array}\right\}
\Rightarrow
\operatorname{tr} \left( \bm{S}^{2} \right)
\ge d^{-1} \left( \operatorname{tr} \left( \bm{S} \right) \right)^{2}
\ge d^{-1} \operatorname{tr} \left( \bm{S}^{2} \right)
.
\end{aligned}
\label{eq:trace_inequality}
\end{equation}

\textbf{\cref{eq:fp_loss_minimization_fn_simple}.}
\begin{equation}
\begin{aligned}
&\ \mathbb{E}_{\bm{y}, \varepsilon \left( \bm{T} \bm{y} \right)}
\left\| \bm{T}^{-1}
\varepsilon \left( \bm{T} \bm{y} \right) \right\|^{2} \\
= &\ \mathbb{E}_{\bm{y}, \bm{\eta}} \left\| \bm{T}^{-1} \operatorname{diag} \left( \bm{\eta}
\right) \bm{T} \bm{y} \right\|^{2} & \text{(\cref{eq:fp_error_model})} \\
= &\ \mathbb{E}_{\bm{y}, \bm{\eta}} \bm{y}^\top \bm{T}^\top \operatorname{diag} \left( \bm{\eta} \right) \bm{T}^{-\top} \bm{T}^{-1} \operatorname{diag} \left( \bm{\eta}
\right) \bm{T} \bm{y} \\
= &\ \mathbb{E}_{\bm{y}, \bm{\eta}} \operatorname{tr} \left( \bm{y}^\top \bm{T}^\top \operatorname{diag} \left( \bm{\eta} \right) \bm{T}^{-\top} \bm{T}^{-1} \operatorname{diag} \left( \bm{\eta}
\right) \bm{T} \bm{y} \right) \\
= &\ \mathbb{E}_{\bm{y}, \bm{\eta}} \operatorname{tr} \left( \bm{T}^{-1} \operatorname{diag} \left( \bm{\eta}
\right) \bm{T} \bm{y} \bm{y}^\top \bm{T}^\top \operatorname{diag} \left( \bm{\eta} \right) \bm{T}^{-\top} \right) \\
= &\ \operatorname{tr} \left( \bm{T}^{-1} \left( \mathbb{E}_{\bm{\eta}} \operatorname{diag} \left( \bm{\eta} \right) \bm{T} \left( \mathbb{E}_{\bm{y}} \bm{y} \bm{y}^\top \right) \bm{T}^\top \operatorname{diag} \left( \bm{\eta} \right) \right) \bm{T}^{-\top} \right) \\
= &\ \operatorname{tr} \left( \bm{T}^{-1} \left( \mathbb{E}_{\bm{\eta}} \operatorname{diag} \left( \bm{\eta} \right) \bm{U}' \bm{S}'^{2} \bm{U}'^\top \operatorname{diag} \left( \bm{\eta} \right) \right) \bm{T}^{-\top} \right) & \text{(\cref{eq:basic_second_moments})} \\
= &\ \operatorname{tr} \left( \bm{T}^{-1} \left( \left( \mathbb{E}_{\eta} \eta^{2} \right) \left( \bm{U}' \bm{S}'^{2} \bm{U}'^\top \right) \odot \mathbf{I} \right) \bm{T}^{-\top} \right) \\
= &\ \left( \mathbb{E}_{\eta} \eta^{2} \right) \operatorname{tr} \left( \bm{T}^{-1} \left( \left( \bm{U}' \bm{S}'^{2} \bm{U}'^\top \right) \odot \mathbf{I} \right) \bm{T}^{-\top} \right)
.
\end{aligned}
\label{eq:fp_loss_minimization_fn}
\end{equation}

\textbf{\cref{eq:fp_loss_lower_bound_simple}.}
\begin{equation}
\begin{aligned}
&\ \operatorname{tr} \left( \bm{T}^{-1} \left( \left( \bm{U}' \bm{S}'^{2} \bm{U}'^\top \right) \odot \mathbf{I} \right) \bm{T}^{-\top} \right) \\
= &\ \operatorname{tr} \left( \bm{T}^{-1} \operatorname{diag} \left( \left\| \bm{S}' \bm{U}'^\top \mathbf{e}_{1} \right\|^{2}, \dots, \left\| \bm{S}' \bm{U}'^\top \mathbf{e}_{d} \right\|^{2} \right) \bm{T}^{-\top} \right) \\
= &\ \sum_{j=1}^{d} \sum_{k=1}^{d} \left( {\mathbf{e}_{j} \bm{T}^{-1}} \mathbf{e}_{k} \right)^{2} \left\| \bm{S}' \bm{U}'^\top \mathbf{e}_{k} \right\|^{2} \\
= &\ \sum_{k=1}^{d} \left\| \bm{T}^{-1} \mathbf{e}_{k} \right\|^{2} \left\| \bm{S}' \bm{U}'^\top \mathbf{e}_{k} \right\|^{2} \\
= &\ \sum_{k=1}^{d} \left\| \bm{U} \bm{S} \bm{R} \bm{S}'^{-1} \bm{U}'^\top \mathbf{e}_{k} \right\|^{2} \left\| \bm{S}' \bm{U}'^\top \mathbf{e}_{k} \right\|^{2} & \text{(\cref{eq:t_u_s_r_s_u})} \\
= &\ \sum_{k=1}^{d} \left\| \bm{S} \bm{R} \bm{S}'^{-1} \bm{U}'^\top \mathbf{e}_{k} \right\|^{2} \left\| \bm{R} \bm{S}' \bm{U}'^\top \mathbf{e}_{k} \right\|^{2}
&\ \text{(rotation-invariance of $L_2$ norm)} \\
\ge &\ \sum_{k=1}^{d} \left( \mathbf{e}_{k}^\top \bm{U}' \bm{S}'^{-1} \bm{R}^\top \bm{S} \bm{R} \bm{S}' \bm{U}'^\top \mathbf{e}_{k} \right)^{2}
&\ \text{(Cauchy-Schwarz)} \\
= &\ d \left( d^{-1} \sum_{k=1}^{d} \left( \mathbf{e}_{k}^\top \bm{U}' \bm{S}'^{-1} \bm{R}^\top \bm{S} \bm{R} \bm{S}' \bm{U}'^\top \mathbf{e}_{k} \right)^{2} \right) \\
\ge &\ d \left( d^{-1} \sum_{k=1}^{d} \mathbf{e}_{k}^\top \bm{U}' \bm{S}'^{-1} \bm{R}^\top \bm{S} \bm{R} \bm{S}' \bm{U}'^\top \mathbf{e}_{k} \right)^{2}
&\ \text{(Jensen's inequality)} \\
= &\ d^{-1} \left( \operatorname{tr} \left( \bm{U}' \bm{S}'^{-1} \bm{R}^\top \bm{S} \bm{R} \bm{S}' \bm{U}'^\top \right) \right)^{2} \\
= &\ d^{-1} \left( \operatorname{tr} \left( \bm{S} \right) \right)^{2}
.
\end{aligned}
\label{eq:fp_loss_lower_bound}
\end{equation}
The equality can be attained by choosing $\bm{U}' = \bm{H}$, $\bm{S}' = \bm{S}^{\frac{1}{2}}$, and $\bm{R} = \mathbf{I}$ such that
\begin{equation}
\begin{aligned}
&\ \operatorname{tr} \left( \bm{T}^{-1} \left( \left( \bm{U}' \bm{S}'^{2} \bm{U}'^\top \right) \odot \mathbf{I} \right) \bm{T}^{-\top} \right) \\
= &\ \sum_{k=1}^{d} \left\| \bm{S} \bm{R} \bm{S}'^{-1} \bm{U}'^\top \mathbf{e}_{k} \right\|^{2} \left\| \bm{R} \bm{S}' \bm{U}'^\top \mathbf{e}_{k} \right\|^{2} & \text{(\cref{eq:fp_loss_lower_bound})} \\
= &\ \sum_{k=1}^{d} \left( \left\| \bm{S}^{\frac{1}{2}} \bm{H}^\top \mathbf{e}_{k} \right\|^{2} \right)^{2} & \text{(plug in)} \\
= &\ \sum_{k=1}^{d} \left( \sum_{j=1}^{d} s_{j} \left( \mathbf{e}_{j}^\top \bm{H}^\top \mathbf{e}_{k} \right)^{2} \right)^{2} \\
= &\ \sum_{k=1}^{d} \left( \sum_{j=1}^{d} s_{j} d^{-1} \right)^{2} \\
= &\ d \left( d^{-1} \sum_{j=1}^{d} s_{j} \right)^{2} \\
= &\ d^{-1} \left( \operatorname{tr} \left( \bm{S} \right) \right)^{2}
.
\end{aligned}
\label{eq:fp_loss_lower_bound_equality}
\end{equation}

\textbf{\cref{eq:fp_loss_lower_bound_orthogonal_simple}.}
\begin{equation}
\begin{aligned}
&\ \mathbb{E}_{\bm{y}, \varepsilon \left( \bm{T} \bm{y} \right)}
\left\| \bm{T}^{-1}
\varepsilon \left( \bm{T} \bm{y} \right) \right\|^{2} \\
= &\ \mathbb{E}_{\bm{y}, \bm{\eta}} \left\| \bm{T}^{-1} \operatorname{diag} \left( \bm{\eta} \right) \bm{T} \bm{y} \right\|^{2} & \text{(\cref{eq:fp_error_model})} \\
= &\ \mathbb{E}_{\bm{y}, \bm{\eta}} \bm{y}^\top \bm{T}^\top \mathrm{diag} \left( \bm{\eta} \right) \bm{T}^{-\top} \bm{T}^{-1} \mathrm{diag} \left( \bm{\eta} \right) \bm{T} \bm{y} \\
= &\ \mathbb{E}_{\bm{y}, \bm{\eta}} \bm{y}^\top \bm{T}^\top \mathrm{diag} \left( \bm{\eta} \right)^{2} \bm{T} \bm{y} & \text{(orthogonality of $\bm{T}$)} \\
= &\ \mathbb{E}_{\bm{y}} \bm{y}^\top \bm{T}^\top \left( \mathbb{E}_{\bm{\eta}} \mathrm{diag} \left( \bm{\eta} \right)^{2} \right) \bm{T} \bm{y} \\
= &\ \left( \mathbb{E}_{\eta} \eta^{2} \right) \mathbb{E}_{\bm{y}} \bm{y}^\top \bm{T}^\top \bm{T} \bm{y} \\
= &\ \left( \mathbb{E}_{\eta} \eta^{2} \right) \mathbb{E}_{\bm{y}} \bm{y}^\top \bm{y} & \text{(orthogonality of $\bm{T}$)} \\
= &\ \left( \mathbb{E}_{\eta} \eta^{2} \right) \mathbb{E}_{\bm{y}} \left\| \bm{y} \right\|^{2} \\
= &\ \left( \mathbb{E}_{\eta} \eta^{2} \right) \operatorname{tr} \left( \bm{S}^{2} \right) & \text{(\cref{eq:basic_second_moments})}
.
\end{aligned}
\label{eq:fp_loss_lower_bound_orthogonal}
\end{equation}

\textbf{\cref{eq:fp_loss_lower_bound_trivial_simple}.}

In the case of $\bm{U}' = \mathbf{I}$:
\begin{equation}
\begin{aligned}
&\ \operatorname{tr} \left( \bm{T}^{-1} \left( \left( \bm{U}' \bm{S}'^{2} \bm{U}'^\top \right) \odot \mathbf{I} \right) \bm{T}^{-\top} \right) \\
= &\ \operatorname{tr} \left( \bm{U} \bm{S} \bm{R} \bm{S}'^{-1} \bm{U}'^\top \left( \left( \bm{U}' \bm{S}'^{2} \bm{U}'^\top \right) \odot \mathbf{I} \right) \bm{U}' \bm{S}'^{-1} \bm{R}^\top \bm{S} \bm{U}^\top \right) & \text{(\cref{eq:t_u_s_r_s_u})} \\
= &\ \operatorname{tr} \left( \bm{U} \bm{S} \bm{R} \bm{S}'^{-1} \left( \bm{S}'^{2} \odot \mathbf{I} \right) \bm{S}'^{-1} \bm{R}^\top \bm{S} \bm{U}^\top \right) & \text{(plug in)} \\
= &\ \operatorname{tr} \left( \bm{U} \bm{S} \bm{R} \bm{S}'^{-1} \bm{S}'^{2} \bm{S}'^{-1} \bm{R}^\top \bm{S} \bm{U}^\top \right) \\
= &\ \operatorname{tr} \left( \bm{U} \bm{S} \bm{R} \bm{R}^\top \bm{S} \bm{U}^\top \right) \\
= &\ \operatorname{tr} \left( \bm{U} \bm{S}^{2} \bm{U}^\top \right) \\
= &\ \operatorname{tr} \left( \bm{S}^{2} \right)
.
\end{aligned}
\label{eq:fp_loss_lower_bound_trivial_u}
\end{equation}
In the case of $\bm{S}' = \mathbf{I}$:
\begin{equation}
\begin{aligned}
&\ \operatorname{tr} \left( \bm{T}^{-1} \left( \left( \bm{U}' \bm{S}'^{2} \bm{U}'^\top \right) \odot \mathbf{I} \right) \bm{T}^{-\top} \right) \\
= &\ \operatorname{tr} \left( \bm{U} \bm{S} \bm{R} \bm{S}'^{-1} \bm{U}'^\top \left( \left( \bm{U}' \bm{S}'^{2} \bm{U}'^\top \right) \odot \mathbf{I} \right) \bm{U}' \bm{S}'^{-1} \bm{R}^\top \bm{S} \bm{U}^\top \right) & \text{(\cref{eq:t_u_s_r_s_u})} \\
= &\ \operatorname{tr} \left( \bm{U} \bm{S} \bm{R} \bm{U}'^\top \left( \left( \bm{U}' \bm{U}'^\top \right) \odot \mathbf{I} \right) \bm{U}' \bm{R}^\top \bm{S} \bm{U}^\top \right) & \text{(plug in)} \\
= &\ \operatorname{tr} \left( \bm{U} \bm{S} \bm{R} \bm{U}'^\top \bm{U}' \bm{R}^\top \bm{S} \bm{U}^\top \right) \\
= &\ \operatorname{tr} \left( \bm{U} \bm{S}^{2} \bm{U}^\top \right) \\
= &\ \operatorname{tr} \left( \bm{S}^{2} \right)
.
\end{aligned}
\label{eq:fp_loss_lower_bound_trivial_s}
\end{equation}

\textbf{\cref{eq:int_loss_minimization_fn_simple}.}
\begin{equation}
\begin{aligned}
&\ \mathbb{E}_{\bm{y}, \varepsilon \left( \bm{T} \bm{y} \right)}
\left\| \bm{T}^{-1}
\varepsilon \left( \bm{T} \bm{y} \right) \right\|^{2} \\
= & \mathbb{E}_{\bm{y}, \bm{\eta}} \left\| \bm{T}^{-1} \left\| \bm{T} \bm{y} \right\|_{\infty} \bm{\eta} \right\|^{2} & \text{(\cref{eq:int_error_model})} \\
= & \mathbb{E}_{\bm{y}, \bm{\eta}} \left\| \bm{T} \bm{y} \right\|_{\infty}^{2} \bm{\eta}^\top \bm{T}^{-\top} \bm{T}^{-1} \bm{\eta} \\
= & \mathbb{E}_{\bm{y}, \bm{\eta}} \left\| \bm{T} \bm{y} \right\|_{\infty}^{2} \operatorname{tr} \left( \bm{\eta}^\top \bm{T}^{-\top} \bm{T}^{-1} \bm{\eta} \right) \\
= & \mathbb{E}_{\bm{y}, \bm{\eta}} \left\| \bm{T} \bm{y} \right\|_{\infty}^{2} \operatorname{tr} \left( \bm{T}^{-1} \bm{\eta} \bm{\eta}^\top \bm{T}^{-\top} \right) \\
= &\ \operatorname{tr} \left( \bm{T}^{-1} \left( \mathbb{E}_{\bm{\eta}} \bm{\eta} \bm{\eta}^\top \right) \bm{T}^{-\top} \right) \mathbb{E}_{\bm{y}} \left\| \bm{T} \bm{y} \right\|_{\infty}^{2} \\
= &\ \left( \mathbb{E}_{\eta} \eta^{2} \right) \operatorname{tr} \left( \bm{T}^{-1} \bm{T}^{-\top} \right) \mathbb{E}_{\bm{y}} \left\| \bm{T} \bm{y} \right\|_{\infty}^{2} \\
= &\ \left( \mathbb{E}_{\eta} \eta^{2} \right) \left\| \bm{T}^{-1} \right\|_{\mathrm{F}}^{2} \mathbb{E}_{\bm{y}} \left\| \bm{T} \bm{y} \right\|_{\infty}^{2}
.
\end{aligned}
\label{eq:int_loss_minimization_fn}
\end{equation}

\textbf{\cref{eq:int_loss_lower_bound_t_inv_f_norm_simple}.}
\begin{equation}
\begin{aligned}
\left\| \bm{T}^{-1} \right\|_{\mathrm{F}}^{2}
= &\ \operatorname{tr} \left( \bm{T}^{-1} \bm{T}^{-\top} \right) \\
= &\ \operatorname{tr} \left( \bm{U} \bm{S} \bm{R} \bm{S}'^{-1} \bm{U}'^\top \bm{U}' \bm{S}'^{-1} \bm{R}^\top \bm{S} \bm{U}^\top \right) & \text{(\cref{eq:t_u_s_r_s_u}} \\
= &\ \operatorname{tr} \left( \bm{S} \bm{U}^\top \bm{U} \bm{S} \bm{R} \bm{S}'^{-1} \bm{U}'^\top \bm{U}' \bm{S}'^{-1} \bm{R}^\top \right) \\
= &\ \operatorname{tr} \left( \bm{S}^{2} \bm{R} \bm{S}'^{-2} \bm{R}^\top \right)
\end{aligned}
\label{eq:int_loss_lower_bound_t_inv_f_norm_equality}
\end{equation}
Because $s_{1}^{2} \ge \dots \ge s_{d}^{2}$ and $s_{1}'^{-2} \le \dots \le s_{d}'^{-2}$, using von Neumann's trace inequality, 
\begin{equation}
\begin{aligned}
\left\| \bm{T}^{-1} \right\|_{\mathrm{F}}^{2}
= \operatorname{tr} \left( \bm{S}^{2} \bm{R} \bm{S}'^{-2} \bm{R}^\top \right)
\ge \operatorname{tr} \left( \bm{S}^{2} \bm{S}'^{-2} \right)
.
\end{aligned}
\label{eq:int_loss_lower_bound_t_inv_f_norm_inequality}
\end{equation}
The equality can be attained by choosing $\bm{R} = \mathbf{I}$.

\textbf{\cref{eq:int_loss_lower_bound_inf_norm_simple}.}

Express the expectations as
\begin{equation}
\begin{aligned}
\mathbb{E}_{\bm{y}} \left\| \bm{T} \bm{y} \right\|_{\infty}^{2}
= \mathbb{E}_{\bm{y}} \max_{k} \left( \mathbf{e}_{k}^\top \bm{T} \bm{y} \right)^{2}
\ge \max_{k} \mathbb{E}_{\bm{y}} \left( \mathbf{e}_{k}^\top \bm{T} \bm{y} \right)^{2}
& \quad \text{(Jensen's inequality)}
\end{aligned}
\label{eq:int_loss_lower_bound_inf_norm_begin}
\end{equation}
and
\begin{equation}
\begin{aligned}
\mathbb{E}_{\bm{y}} \left\| \bm{T} \bm{y} \right\|^{2}
= \mathbb{E}_{\bm{y}} \sum_{k=1}^{d} \left( \mathbf{e}_{k}^\top \bm{T} \bm{y} \right)^{2}
= \sum_{k=1}^{d} \mathbb{E}_{\bm{y}} \left( \mathbf{e}_{k}^\top \bm{T} \bm{y} \right)^{2}
.
\end{aligned}
\end{equation}
By the relationships among the mean, maximum, and summation values, we have: 
\begin{equation}
\begin{aligned}
d^{-1} \mathbb{E}_{\bm{y}} \left\| \bm{T} \bm{y} \right\|^{2}
\le \max_{k} \mathbb{E}_{\bm{y}} \left( \mathbf{e}_{k}^\top \bm{T} \bm{y} \right)^{2}
\le \mathbb{E}_{\bm{y}} \left\| \bm{T} \bm{y} \right\|_{\infty}^{2}
\le \mathbb{E}_{\bm{y}} \left\| \bm{T} \bm{y} \right\|^{2}
\le d \max_{k} \mathbb{E}_{\bm{y}} \left( \mathbf{e}_{k}^\top \bm{T} \bm{y} \right)^{2},
\end{aligned}
\label{eq:ty_inequality}
\end{equation}
where, by \cref{eq:basic_second_moments},
\begin{equation}
\begin{aligned}
\mathbb{E}_{\bm{y}} \left\| \bm{T} \bm{y} \right\|^{2}
= \operatorname{tr} \left( \bm{S}'^{2} \right)
\end{aligned}
\end{equation}
and
\begin{equation}
\begin{aligned}
\mathbb{E}_{\bm{y}} \left( \mathbf{e}_{k}^\top \bm{T} \bm{y} \right)^{2}
= \mathbf{e}_{k}^\top \bm{T} \left( \mathbb{E}_{\bm{y}} \bm{y} \bm{y}^\top \right) \bm{T}^\top \mathbf{e}_{k}
= \mathbf{e}_{k}^\top \bm{U}' \bm{S}'^{2} \bm{U}'^\top \mathbf{e}_{k}
= \left\| \bm{S}' \bm{U}'^\top \mathbf{e}_{k} \right\|^{2}
.
\end{aligned}
\label{eq:ey_ek_t_y}
\end{equation}
\begin{lemma}[Maximum Inequalities for Tail-Bounded Distributions]
\label{thm:max_inequalities}
If $X_{1}, \dots, X_{d} \in \mathbb{R}$ are zero-mean Gaussian random variables,
\begin{equation}
\begin{aligned}
\mathbb{E} \max_{k} X_{k}^{2}
\le \min \left\{ d, \left( 2 \ln \left( 2 d \right) + 2 \right) \right\} \max_{k} \mathbb{E} X_{k}^{2}
.
\end{aligned}
\end{equation}
If $X_{1}, \dots, X_{d} \in \mathbb{R}$ are zero-mean Laplacian random variables,
\begin{equation}
\begin{aligned}
\mathbb{E} \max_{k} X_{k}^{2}
\le \left( 2^{-1} \left( \ln d \right)^{2} + \ln d + 1 \right) \max_{k} \mathbb{E} X_{k}^{2}
.
\end{aligned}
\end{equation}
Both inequalities hold without the need for independence.
\end{lemma}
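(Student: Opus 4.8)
The plan is to prove both inequalities by the same elementary peeling argument: express the expected maximum as the integral of its tail, split that integral at a carefully chosen threshold, control the upper piece by a union bound over the $d$ variables, and substitute the appropriate one-sided tail estimate. Independence is never needed, since the union bound $\mathbb{P}(\bigcup_k \{X_k^2 > u\}) \le \sum_k \mathbb{P}(X_k^2 > u)$ holds under arbitrary dependence and expectation is linear. Write $\sigma^2 := \max_k \mathbb{E} X_k^2$ and note that each marginal is governed by a parameter no larger than the worst case: in the Gaussian case $X_k \sim \mathcal{N}(0, \sigma_k^2)$ with $\sigma_k^2 \le \sigma^2$, and in the Laplacian case $X_k \sim \mathrm{Laplace}(0, b_k)$ with $2 b_k^2 = \mathbb{E} X_k^2 \le \sigma^2$, hence $b_k \le b := \sqrt{\sigma^2/2}$; the monotonicity of the tail bounds in these parameters lets us replace each $\sigma_k$ (resp.\ $b_k$) by the worst case.

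For the Gaussian bound, the factor-$d$ half is immediate: $\max_k X_k^2 \le \sum_k X_k^2$, so $\mathbb{E}\max_k X_k^2 \le \sum_k \mathbb{E} X_k^2 \le d\,\sigma^2$. For the logarithmic half, I would start from $\mathbb{E}\max_k X_k^2 = \int_0^\infty \mathbb{P}(\max_k X_k^2 > u)\,du$ and, for any threshold $u_0 \ge 0$, bound it by $u_0 + \int_{u_0}^\infty \sum_k \mathbb{P}(X_k^2 > u)\,du$. The standard sub-Gaussian tail estimate $\mathbb{P}(|X_k| > \sqrt{u}) \le 2\exp(-u/(2\sigma_k^2)) \le 2\exp(-u/(2\sigma^2))$ turns the remaining integral into $4 d\,\sigma^2 \exp(-u_0/(2\sigma^2))$; choosing $u_0 = 2\sigma^2 \ln(2d)$ makes this residual exactly $2\sigma^2$, giving $\mathbb{E}\max_k X_k^2 \le (2\ln(2d) + 2)\sigma^2$. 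Taking the minimum of the two bounds closes the Gaussian case.

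For the Laplacian bound, I would run the identical split using the exact tail $\mathbb{P}(|X_k| > \sqrt{u}) = \exp(-\sqrt{u}/b_k) \le \exp(-\sqrt{u}/b)$. The substitution $v = \sqrt{u}$ reduces $\int_{u_0}^\infty \exp(-\sqrt{u}/b)\,du$ to $2\int_{\sqrt{u_0}}^\infty v\,e^{-v/b}\,dv = 2b(\sqrt{u_0} + b)\exp(-\sqrt{u_0}/b)$. Choosing $u_0 = b^2 (\ln d)^2$, so that $\exp(-\sqrt{u_0}/b) = 1/d$, yields $\mathbb{E}\max_k X_k^2 \le b^2 (\ln d)^2 + 2b^2(\ln d + 1)$, and substituting $b^2 = \sigma^2/2$ produces exactly $\bigl(\tfrac12(\ln d)^2 + \ln d + 1\bigr)\sigma^2$.

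Every computation here is routine, so there is no real obstacle; the only points needing care are matching the advertised constants, which forces the specific thresholds $u_0 = 2\sigma^2\ln(2d)$ (Gaussian) and $u_0 = b^2(\ln d)^2$ (Laplacian), and correctly using the variance-to-scale conversion $\mathbb{E}X_k^2 = 2b_k^2$ for the Laplace distribution. I would also add a one-line remark that the same argument applies verbatim to any family of variables sharing a common sub-Gaussian or sub-exponential tail, which is all the downstream INT analysis actually uses.
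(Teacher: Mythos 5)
Your proposal is correct and follows essentially the same route as the paper: integrate the tail of $\max_k X_k^2$, split at a threshold, apply the union bound above the threshold, plug in the Gaussian Chernoff tail $2\exp(-t^2/(2\sigma^2))$ or the exact Laplace tail $\exp(-t/b)$, and optimize the threshold ($2\sigma^2\ln(2d)$ and $(b\ln d)^2$ respectively), together with the trivial $d$-fold bound for the Gaussian case. The only cosmetic difference is that you parameterize the layer-cake integral in $u = t^2$ and substitute $v=\sqrt{u}$ for the Laplacian, whereas the paper integrates directly in $t$ via its survival-function lemma; the two are related by that very change of variable and yield the identical thresholds and constants.
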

\begin{proof}
We provide proofs for the Gaussian and Laplacian distributions in \cref{sec:max_inequalities_gaussian,sec:max_inequalities_laplacian}, respectively.
\end{proof}
Using Lemma~\ref{thm:max_inequalities}, for a tail-bounded $\mathcal{D}_{\mathrm{y}}$ that is a zero-mean multivariate Gaussian or Laplacian, we can further tighten the upper bound by
\begin{equation}
\begin{aligned}
\mathbb{E}_{\bm{y}} \left\| \bm{T} \bm{y} \right\|_{\infty}^{2}
\le \min \left\{ d^{o \left( 1 \right)} \max_{k} \mathbb{E}_{\bm{y}} \left( \mathbf{e}_{k}^\top \bm{T} \bm{y} \right)^{2}, \mathbb{E}_{\bm{y}} \left\| \bm{T} \bm{y} \right\|^{2} \right\}
.
\end{aligned}
\label{eq:ey_t_y_inf_norm_2_upper_bound_tailbound}
\end{equation}
When $\bm{U}' = \bm{H}$, using \cref{eq:basic_second_moments,eq:ey_ek_t_y}, the marginal second moment in \cref{eq:ey_t_y_inf_norm_2_upper_bound_tailbound} becomes
\begin{equation}
\begin{aligned}
\mathbb{E}_{\bm{y}} \left( \mathbf{e}_{k}^\top \bm{T} \bm{y} \right)^{2}
= \left\| \bm{S}' \bm{U}'^\top \mathbf{e}_{k} \right\|^{2}
= \left\| \bm{S}' \bm{H}^\top \mathbf{e}_{k} \right\|^{2}
= \sum_{j=1}^{d} s_{j}'^{2} d^{-1}
= d^{-1} \operatorname{tr} \left( \bm{S}'^{2} \right)
= d^{-1} \mathbb{E}_{\bm{y}} \left\| \bm{T} \bm{y} \right\|^{2}
,
\end{aligned}
\label{eq:ey_t_y_inf_norm_2_upper_bound_tailbound_min}
\end{equation}
which is equalized for all $k$, and
$\max_{k} \mathbb{E}_{\bm{y}} \left( \mathbf{e}_{k}^\top \bm{T} \bm{y} \right)^{2}$
is minimized to
$d^{-1} \mathbb{E}_{\bm{y}} \left\| \bm{T} \bm{y} \right\|^{2}$.
Finally, combining \cref{eq:ty_inequality,eq:ey_t_y_inf_norm_2_upper_bound_tailbound,eq:ey_t_y_inf_norm_2_upper_bound_tailbound_min},
\begin{equation}
\begin{aligned}
d^{-1} \operatorname{tr} \left( \bm{S}'^{2} \right)
\le
\mathbb{E}_{\bm{y}} \left\| \bm{T} \bm{y} \right\|_{\infty}^{2}
\le \begin{cases}
d^{o \left( 1 \right) - 1} \operatorname{tr} \left( \bm{S}'^{2} \right) , & \text{tail-bounded $\mathcal{D}_{\mathrm{y}}$} , \\
\operatorname{tr} \left( \bm{S}'^{2} \right) , & \text{otherwise} .
\end{cases}
\end{aligned}
\label{eq:int_loss_lower_bound_inf_norm_end}
\end{equation}

\textbf{\cref{eq:int_loss_lower_bound_orthogonal_simple}.}

For orthogonal $\bm{T}$, we have
$\left\| \bm{T}^{-1} \right\|_{\mathrm{F}}^{2} = d$ and $\mathbb{E}_{\bm{y}} \left\| \bm{T} \bm{y} \right\|^{2} = \mathbb{E}_{\bm{y}} \left\| \bm{y} \right\|^{2}$.
Using \cref{eq:basic_second_moments}, $\mathbb{E}_{\bm{y}} \left\| \bm{y} \right\|^{2} = \operatorname{tr} \left( \bm{S}^{2} \right)$. By \cref{eq:ty_inequality},
\begin{equation}
\begin{aligned}
\operatorname{tr} \left( \bm{S}^{2} \right)
= d^{-1} \left\| \bm{T}^{-1} \right\|_{\mathrm{F}}^{2} \mathbb{E}_{\bm{y}} \left\| \bm{T} \bm{y} \right\|^{2}
\le \left\| \bm{T}^{-1} \right\|_{\mathrm{F}}^{2} \mathbb{E}_{\bm{y}} \left\| \bm{T} \bm{y} \right\|_{\infty}^{2}
\le \left\| \bm{T}^{-1} \right\|_{\mathrm{F}}^{2} \mathbb{E}_{\bm{y}} \left\| \bm{T} \bm{y} \right\|^{2}
= d \operatorname{tr} \left( \bm{S}^{2} \right)
.
\end{aligned}
\label{eq:int_loss_lower_bound_orthogonal}
\end{equation}

\subsection{Quantization Error Modeling Justifications}
\label{sec:theory_proof_type_model_app}

\subsubsection{Integer (INT) Types}
\label{sec:theory_proof_int_model_app}

We model the quantization error using pseudo noise similar to DiffQ~\citep{defossez2022differentiable}.

Define the scalar quantizer of $b$-bit symmetric integer as
\begin{equation}
\begin{aligned}
q \left( x \right) = \left( \left\lfloor x s^{-1} \right\rfloor + 2^{-1} \right) s
,
\end{aligned}
\end{equation}
where $x \in \mathbb{R}$, scale $s \in \mathbb{R}_{\ne 0}$, and $x s^{-1} \in \left[ -2^{b-1}, 2^{b-1} \right)$.

To simplify the analysis, we model the quantizer as
\begin{equation}
\begin{aligned}
q \left( x \right) = \left( x s^{-1} + \xi \right) s = x + s \xi
\end{aligned}
\label{eq:int_quantizer_scalar}
\end{equation}
with random noise $\xi \sim \mathrm{Uniform} \left( -2^{-1}, 2^{-1} \right)$.
Then, the quantization error can be expressed as
\begin{equation}
\begin{aligned}
q \left( x \right) - x = s \xi
.
\end{aligned}
\end{equation}

For a vector $\bm{\alpha} \in \mathbb{R}^{d}$, 
we use the AbsMax scale $s = 2^{1-b} \left\| \bm{\alpha} \right\|_{\infty}$.
Define
\begin{equation}
\begin{aligned}
\eta = 2^{1-b} \xi
,
\end{aligned}
\end{equation}
then we have
$s \xi = \eta \left\| \bm{\alpha} \right\|_{\infty}$.
The quantization error of vector $\bm{\alpha}$ is
\begin{equation}
\begin{aligned}
\varepsilon \left( \bm{\alpha} \right) = \left\| \bm{\alpha} \right\|_{\infty} \bm{\eta}
\end{aligned}
\end{equation}
with $\bm{\eta} = \left[ \eta_{1}, \dots, \eta_{d} \right]^\top \in \mathbb{R}^{d}$ being a random vector of i.i.d. samples
$\eta \sim \mathcal{D}_{\mathrm{\eta}}$
and
\begin{equation}
\begin{aligned}
\mathbb{E}_{\eta} \eta = 2^{1-b} \mathbb{E}_{\xi} \xi = 0
\end{aligned}
\end{equation}

\subsubsection{Floating-Point (FP) Types}
\label{sec:theory_proof_fp_model_app}

The FP format $\mathrm{E}a\mathrm{M}b$ is defined as a tuple $\left( s, e, m \right)$ with a sign $s \in \left\{ 0, 1 \right\}$, an exponent $e \in \left\{ 0, \dots, 2^a - 1 \right\}$, and a mantissa $m \in \left\{ 0, \dots, 2^b - 1 \right\}$.
Assume we do not need to consider the subnormal and special number cases.
The tuple $\left( s, e, m \right)$ represents the number
\begin{equation}
\begin{aligned}
x_{\mathrm{E}a\mathrm{M}b}
= \left( -1 \right)^{s} 2^{e - 2^{a-1} + 1} \left( 1 + 2^{-b} m \right)
.
\end{aligned}
\end{equation}

\begin{figure}[!htpb]
\begin{center}
\includegraphics[width=.3\textwidth]{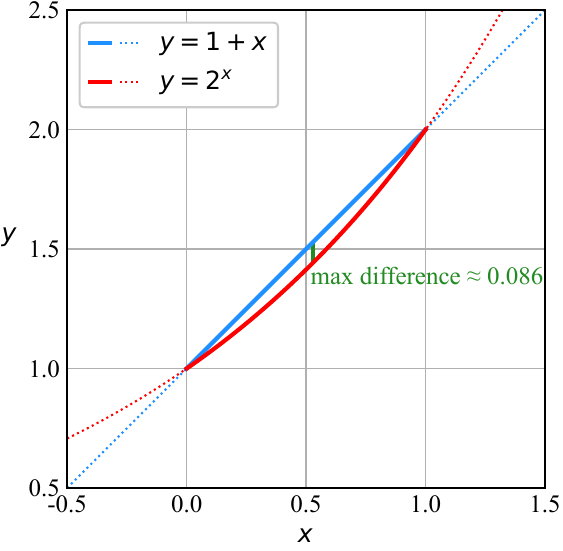}
\caption{The two curves $y = 1 + x$ and $y = 2^{x}$ are close for $0 \le x \le 1$.}
\label{fig:fp_model_1x2x}
\end{center}
\end{figure}

Using the approximation $1 + \left( \cdot \right) \approx 2^ {\left( \cdot \right)}$ for $2^{-b} m \in \left[ 0, 1 \right]$ (visualized in \cref{fig:fp_model_1x2x}), we define a smoothed version of $\mathrm{E}a\mathrm{M}b$, named $\mathrm{SE}a\mathrm{M}b$, as
\begin{equation}
\begin{aligned}
x_{\mathrm{SE}a\mathrm{M}b}
= \left( -1 \right)^{s} 2^{e - 2^{a-1} + 1} 2^{2^{-b} m}
= \left( -1 \right)^{s} 2^{2^{-b} \left( 2^b e + m \right) - 2^{a-1} + 1}
\end{aligned}
\end{equation}
with
\begin{equation}
\begin{aligned}
2^b e + m = 2^b \left( \log_2 \left| x_{\mathrm{SE}a\mathrm{M}b} \right| + 2^{a-1} - 1 \right)
\end{aligned}
\end{equation}
being an $\left( a + b \right)$-bit unsigned integer (concatenating the binary representations of $e$ and $m$).
The minimum and maximum of
$\left| x_{\mathrm{SE}a\mathrm{M}b} \right|$
are
\begin{equation}
\begin{aligned}
\min \left| x_{\mathrm{SE}a\mathrm{M}b} \right|
= &\ 2^{- 2^{a-1} + 1}
, \\
\max \left| x_{\mathrm{SE}a\mathrm{M}b} \right|
= &\ 2^{2^{-b} \left( 2^{a+b} - 1 \right) - 2^{a-1} + 1}
= 2^{2^{a-1} - 2^{-b} + 1}
,
\end{aligned}
\end{equation}
respectively.

Denote $\mathrm{SE}a\mathrm{M}b \left( x \right)$ as the quantizer that casts $x \in \left[ -\max \left| x_{\mathrm{SE}a\mathrm{M}b} \right| , \max \left| x_{\mathrm{SE}a\mathrm{M}b} \right| \right]$ to the nearest number in the $\mathrm{SE}a\mathrm{M}b$ type.
Assume $e \in \left( -\infty, 2^{a} - 1\right]$ so that the smaller values can be represented more precisely.
$\mathrm{SE}a\mathrm{M}b \left( x \right)$ can be approximated using integer rounding in logarithmic space and first-order expansion.
For $x \ne 0$,
\begin{equation}
\begin{aligned}
\mathrm{SE}a\mathrm{M}b \left( x \right) \approx &\ \mathrm{sgn} \left( x \right) 2^{2^{-b} \left\lfloor 2^b \left( \log_2 \left| x \right| + 2^{a-1} - 1 \right) \right\rceil - 2^{a-1} + 1} \\
= &\ \mathrm{sgn} \left( x \right) 2^{2^{-b} \left\lfloor 2^b \log_2 \left| x \right| \right\rceil} \\
\approx &\ \mathrm{sgn} \left( x \right)
\left( 2^{2^{-b} 2^b \log_2 \left| x \right|}
+ \left( \left\lfloor 2^b \log_2 \left| x \right| \right\rceil - 2^b \log_2 \left| x \right| \right) 2^{2^{-b} 2^b \log_2 \left| x \right|}
2^{-b} \ln 2 \right) \\
= &\ x
\left( 1
+ \left( \left\lfloor 2^b \log_2 \left| x \right| \right\rceil - 2^b \log_2 \left| x \right| \right)
2^{-b} \ln 2 \right)
.
\end{aligned}
\end{equation}
For $x = 0$,
\begin{equation}
\begin{aligned}
\mathrm{SE}a\mathrm{M}b \left( 0 \right)
\approx \lim_{x \to 0} x
\left( 1
+ \left( \left\lfloor 2^b \log_2 \left| x \right| \right\rceil - 2^b \log_2 \left| x \right| \right)
2^{-b} \ln 2 \right)
= 0
.
\end{aligned}
\end{equation}
We model the casting error as
\begin{equation}
\begin{aligned}
\mathrm{SE}a\mathrm{M}b \left( x \right) - x = \left( \ln 2 \right) 2^{-b} \xi x
\end{aligned}
\end{equation}
where $x \in \left[ -\max \left| x_{\mathrm{SE}a\mathrm{M}b} \right|, \max \left| x_{\mathrm{SE}a\mathrm{M}b} \right| \right]$ and, similar to the INT types in \cref{eq:int_quantizer_scalar}, the random variable $\xi \sim \mathrm{Uniform} \left( -2^{-1}, 2^{-1} \right)$.

For a quantization format that uses $\mathrm{E}a\mathrm{M}b$ for values and $\mathrm{E} \tilde{a} \mathrm{M} \tilde{b}$ for scales, we approximate it with a scalar quantizer
$q \left( x \right) = \mathrm{SE}a\mathrm{M}b \left( x s^{-1} \right) \mathrm{SE} \tilde{a} \mathrm{M} \tilde{b} \left( s \right)$
where
$x \in \mathbb{R}$,
scale $s \in \mathbb{R}_{\ne 0}$, 
$\left| s \right| \le \max \left| x_{\mathrm{SE} \tilde{a} \mathrm{M} \tilde{b}} \right|$,
and
$\left| x s^{-1} \right| \le \max \left| x_{\mathrm{SE}a\mathrm{M}b} \right|$.
Then,
\begin{equation}
\begin{aligned}
q \left( x \right)
= &\ \mathrm{SE}a\mathrm{M}b \left( x s^{-1} \right) \mathrm{SE} \tilde{a} \mathrm{M} \tilde{b} \left( s \right) \\
= &\ \left( 1 + \left( \ln 2 \right) 2^{-b} \xi \right) x s^{-1}
\left( 1 + \left( \ln 2 \right) 2^{-\tilde{b}} \tilde{\xi} \right) s \\
= &\ \left( 1 + \left( \ln 2 \right) 2^{-b} \xi \right)
\left( 1 + \left( \ln 2 \right) 2^{-\tilde{b}} \tilde{\xi} \right) x
\end{aligned}
\end{equation}
with the independent random variables $\xi, \tilde{\xi} \sim \mathrm{Uniform} \left( -2^{-1}, 2^{-1} \right)$.

Define
\begin{equation}
\begin{aligned}
\eta = \left( 1 + \left( \ln 2 \right) 2^{-b} \xi \right)
\left( 1 + \left( \ln 2 \right) 2^{-\tilde{b}} \tilde{\xi} \right) - 1
.
\end{aligned}
\end{equation}
Then, the quantization error can be expressed as
\begin{equation}
\begin{aligned}
q \left( x \right) - x = \eta x
.
\end{aligned}
\end{equation}
And the expected quantization error is $0$ because
\begin{equation}
\begin{aligned}
\mathbb{E}_{\eta} \eta
= \mathbb{E}_{\xi, \tilde{\xi}} \left( \ln 2 \right) 2^{-b} \xi
+ \left( \ln 2 \right) 2^{-\tilde{b}} \tilde{\xi}
+ \left( \ln 2 \right)^2 2^{-b -\tilde{b}} \xi \tilde{\xi}
= 0
.
\end{aligned}
\end{equation}

\subsection{Maximum Inequalities for Tail-Bounded Distributions}
\label{sec:max_inequalities}

\begin{lemma}[Expectation from the Survival Function]
\label{thm:expectation_survival}
Let $X \ge 0$ be a random variable with the density function $f$ and the survival function
\begin{equation}
\begin{aligned}
S(t) = \mathbb{P} \left( X \ge t \right)
\end{aligned}
\end{equation}
with $t \ge 0$.
Let
$g: \left[ 0, \infty \right) \to \mathbb{R}$
be integrable and define
\begin{equation}
\begin{aligned}
G \left( t \right) = \int_{0}^{t} g \left( u \right) \mathrm{d} u
.
\end{aligned}
\end{equation}
Assume that
\begin{equation}
\begin{aligned}
\lim_{t \to \infty} G \left( t \right) S \left( t \right) = 0
.
\end{aligned}
\end{equation}
Then,
\begin{equation}
\begin{aligned}
\mathbb{E} G \left( X \right)
= \int_{0}^{\infty} g \left( t \right) S \left( t \right) \mathrm{d} t
.
\end{aligned}
\end{equation}
\end{lemma}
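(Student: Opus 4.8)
The plan is to reduce the identity to a one-dimensional integration-by-parts computation, using that $S$ is absolutely continuous on $[0,\infty)$ with $S'=-f$ almost everywhere (since $X$ has density $f$), that $G$ is absolutely continuous on every $[0,T]$ with $G'=g$ almost everywhere (since $G$ is the indefinite integral of the integrable $g$), and the boundary values $G(0)=0$, $S(0)\le 1$.

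First I would write $\mathbb{E}\,G(X)=\int_0^\infty G(t) f(t)\,\mathrm{d}t$ (expectation under the density), truncate to a fixed $T>0$, and apply integration by parts for absolutely continuous functions to the product $G\cdot S$ on $[0,T]$. Since $-S'=f$ and $G'=g$ a.e.\ and $G(0)=0$ annihilates the lower boundary term, this gives
\begin{equation}
\int_0^T G(t) f(t)\,\mathrm{d}t = -\,G(T)S(T) + \int_0^T g(t) S(t)\,\mathrm{d}t .
\end{equation}
Then I would let $T\to\infty$: the hypothesis $\lim_{t\to\infty}G(t)S(t)=0$ is exactly what makes the boundary term vanish. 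It remains to pass the limit through the two integrals. In the case relevant to the sequel, $g\ge 0$ (e.g.\ $g(t)=2t$, $G(t)=t^2$), so $Gf\ge 0$ and $gS\ge 0$, and monotone convergence yields $\int_0^T Gf \to \mathbb{E}\,G(X)$ and $\int_0^T gS \to \int_0^\infty gS$, with both sides permitted to equal $+\infty$; for signed $g$ one instead assumes $\int_0^\infty |g(t)|S(t)\,\mathrm{d}t<\infty$ and applies dominated convergence after splitting $g=g^+-g^-$.

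Equivalently — the route I would fall back on if the absolute-continuity bookkeeping becomes delicate — one can write $G(X)=\int_0^\infty g(u)\,\mathbbm{1}\{u<X\}\,\mathrm{d}u$ for $X\ge 0$, take expectations, and exchange $\mathbb{E}$ with the $\mathrm{d}u$-integral by Tonelli (for $g\ge 0$, or Fubini after the $g^\pm$ split), using $\mathbb{E}\,\mathbbm{1}\{u<X\}=\mathbb{P}(X>u)=S(u)$, where the last equality holds because $\mathbb{P}(X=u)=0$ under a density.

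The main obstacle is purely the integrability and limit-interchange justification — the ``algebra'' of the identity is immediate, and care is needed only to state hypotheses under which the two sides are simultaneously finite; it is precisely $\lim_{t\to\infty}G(t)S(t)=0$ (together with $G(0)=0$) that controls the boundary term in the integration-by-parts form and substitutes for absolute integrability of $g$, which fails in the intended application.
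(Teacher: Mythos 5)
Your proposal is correct and follows essentially the same integration-by-parts argument as the paper (writing $\mathbb{E}\,G(X)=\int_0^\infty G f\,\mathrm{d}t$, using $S'=-f$ a.e.\ and $G'=g$ a.e., and killing the boundary term via $G(0)=0$ and the hypothesis $\lim_{t\to\infty}G(t)S(t)=0$). Your truncation to $[0,T]$ plus monotone/dominated convergence is a modest tightening of the limit-interchange step that the paper leaves implicit, but the underlying idea is identical.
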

\begin{proof}
Since $X$ has density $f$ and takes values in $[0,\infty)$,
\begin{equation}
\begin{aligned}
\mathbb{E} G \left( X \right)
= \int_{0}^{\infty} G \left( t \right) f \left( t \right) \mathrm{d} t
.
\end{aligned}
\end{equation}
The survival function satisfies
\begin{equation}
\begin{aligned}
\end{aligned}
S \left( t \right) = \mathbb{P} \left( X \ge t \right) = \int_t^\infty f \left( u \right) \mathrm{d} u
,
\end{equation}
so $S$ is differentiable almost everywhere with
$S' \left( t \right) = - f \left( t \right)$.
Hence,
\begin{equation}
\begin{aligned}
\mathbb{E} G \left( X \right)
= \int_{0}^{\infty} G \left( t \right) f \left( t \right) \mathrm{d} t
= -\int_{0}^{\infty} G \left( t \right) S' \left( t \right) \mathrm{d} t
.
\end{aligned}
\end{equation}
We now integrate by parts, using $G' \left( t \right) = g \left( t \right)$,
\begin{equation}
\begin{aligned}
-\int_{0}^{\infty} G \left( t \right) S' \left( t \right) \mathrm{d} t
= -\left[ G \left( t \right) S \left( t \right) \right]_{0}^{\infty} + \int_{0}^\infty g \left( t \right) S \left( t \right) \mathrm{d} t
.
\end{aligned}
\end{equation}
By definition, $G \left( 0 \right) = 0$, and by assumption
$\lim_{t \to \infty} G \left( t \right) S \left( t \right) = 0$,
the boundary term
$\left[ G \left( t \right) S \left( t \right) \right]_{0}^{\infty} = 0$.
Therefore,
\begin{equation}
\begin{aligned}
\mathbb{E} G \left( X \right) = \int_{0}^{\infty} g \left( t \right) S \left( t \right) \mathrm{d} t
.
\end{aligned}
\end{equation}
\end{proof}

\subsubsection{Gaussian Distribution}
\label{sec:max_inequalities_gaussian}

\textbf{Definition of sub-Gaussian.}

A probability distribution of a random variable 
$X \in \mathbb{R}$ is called sub-Gaussian with variance proxy $\sigma^{2}$ if 
\begin{equation}
\begin{aligned}
\mathbb{E} \exp \left( \left( X - \mathbb{E} X \right) t \right) \le \exp \left( 2^{-1} \sigma^{2} t^{2} \right)
\end{aligned}
\end{equation}
for all $t \in \mathbb{R}$, and the smallest such $\sigma^{2}$ is called the optimal variance proxy.

\textbf{Basic properties of sub-Gaussian distributions.}

The variance proxy is larger than or equal to the variance: $\sigma^{2} \ge \mathbb{E} \left( X - \mathbb{E} X \right)^{2}$.
For a Gaussian distribution, the optimal variance proxy is the same as the variance.

Chernoff bound on the tail:
\begin{equation}
\begin{aligned}
\mathbb{P} \left( \left| X - \mathbb{E} X \right| \ge t \right) \le 2 \exp \left( - 2^{-1} \sigma^{-2} t^{2} \right)
\end{aligned}
\end{equation}
for all $t \ge 0$.

\textbf{Maximum inequality.}

Let $X_{1}, \dots, X_{d}$ be zero-mean Gaussian random variables with variance $\sigma_{1}^{2}, \dots, \sigma_{d}^{2}$.
Denote
\begin{equation}
\begin{aligned}
\sigma_{\star}^{2} = \max_{k} \sigma_{k}^{2} = \max_{k} \mathbb{E} X_{k}^{2}
\end{aligned}
\end{equation}
as the smallest variance proxy for $X_{1}, \dots, X_{d}$.
Using Lemma~\ref{thm:expectation_survival},
\begin{equation}
\begin{aligned}
\mathbb{E} \max_{k} X_{k}^{2}
= &\ 2 \int_{0}^{\infty} t \ \mathbb{P} \left( \max_{k} \left| X_{k} \right| \ge t \right) \mathrm{d} t \\
= &\ 2 \int_{0}^{\infty} t \ \mathbb{P} \left( \bigcup_{k=1}^{d} \left\{ \left| X_{k} \right| \ge t \right\} \right) \mathrm{d} t \\
\le &\ 2 \int_{0}^{t_{\star}} t \ 1 \ \mathrm{d} t + 2 \int_{t_{\star}}^{\infty} t \left( \sum_{k=1}^{d} \mathbb{P} \left( \left| X_{k} \right| \ge t \right) \right) \mathrm{d} t \\
\le &\ 2 \int_{0}^{t_{\star}} t \ \mathrm{d} t + 2 d \int_{t_{\star}}^{\infty} t \left( \max_{k} \mathbb{P} \left( \left| X_{k} \right| \ge t \right) \right) \mathrm{d} t \\
\le &\ 2 \int_{0}^{t_{\star}} t \ \mathrm{d} t + 4 d \int_{t_{\star}}^{\infty} t \exp \left( - 2^{-1} \sigma_{\star}^{-2} t^{2} \right) \mathrm{d} t \\
= &\ \left[ t^{2} \right]_{0}^{t_{\star}} + 4 d \left[ - \sigma_{\star}^{2} \exp \left( - 2^{-1} \sigma_{\star}^{-2} t^{2} \right) \right]_{t_{\star}}^{\infty} \\
= &\ t_{\star}^{2} + 4 d \sigma_{\star}^{2} \exp \left( -  2^{-1} \sigma_{\star}^{-2} t_{\star}^{2} \right)
.
\end{aligned}
\end{equation}
Splitting at $t_{\star} = \sigma_{\star} \sqrt{ 2 \ln \left( 2 d \right)}$ gives
\begin{equation}
\begin{aligned}
\mathbb{E} \max_{k} X_{k}^{2}
\le 2 \sigma_{\star}^{2} \ln \left( 2 d \right) + 2 \sigma_{\star}^{2}
= \left( 2 \ln \left( 2 d \right) + 2 \right) \max_{k} \mathbb{E} X_{k}^{2}
\end{aligned}
\end{equation}
without needing the independence of $X_{k}$.
Also, considering the trivial bound
\begin{equation}
\begin{aligned}
\mathbb{E} \max_{k} X_{k}^{2}
\le \mathbb{E} \sum_{k=1}^{d} X_{k}^{2}
= \sum_{k=1}^{d} \mathbb{E} X_{k}^{2}
\le d \max_{k} \mathbb{E} X_{k}^{2}
,
\end{aligned}
\end{equation}
then,
\begin{equation}
\begin{aligned}
\mathbb{E} \max_{k} X_{k}^{2}
\le \min \left\{ d, \left( 2 \ln \left( 2 d \right) + 2 \right) \right\} \max_{k} \mathbb{E} X_{k}^{2}
.
\end{aligned}
\end{equation}

\subsubsection{Laplacian Distribution}
\label{sec:max_inequalities_laplacian}

For a random variable $X \sim \mathrm{Laplace} \left( \mu, b \right)$ and $t \ge 0$,
\begin{equation}
\begin{aligned}
\mathbb{P} \left( \left| X - \mu \right| \ge t \right)
= 2 \int_{t}^{\infty} 2^{-1} b^{-1} \exp \left( - b^{-1} x \right) \mathrm{d} x
= \exp \left( - b^{-1} t \right)
.
\end{aligned}
\end{equation}
The variance $\mathbb{E} \left( X - \mathbb{E} X \right)^{2} = 2 b^{2}$.

Let $X_{1}, \dots, X_{d}$ be zero-mean Laplacian random variables with scale parameters $b_{1}, \dots, b_{d}$.
Denote
\begin{equation}
\begin{aligned}
b_{\star}
= \max_k b_{k}
= \max_k \sqrt{\frac{1}{2} \mathbb{E} X_{k}^{2}}
= \sqrt{\frac{1}{2} \max_k \mathbb{E} X_{k}^{2}}
.
\end{aligned}
\end{equation}
For $t \ge 0$,
\begin{equation}
\begin{aligned}
\mathbb{P} \left( \left| X_{k} \right| \ge t \right)
= \exp \left( - b_{k}^{-1} t \right)
\le \exp \left( - b_{\star}^{-1} t \right)
.
\end{aligned}
\end{equation}
Using Lemma~\ref{thm:expectation_survival},
\begin{equation}
\begin{aligned}
\mathbb{E} \max_{k} X_{k}^{2}
= &\ 2 \int_{0}^{\infty} t \ \mathbb{P} \left( \max_{k} \left| X_{k} \right| \ge t \right) \mathrm{d} t \\
= &\ 2 \int_{0}^{\infty} t \ \mathbb{P} \left( \bigcup_{k=1}^{d} \left\{ \left| X_{k} \right| \ge t \right\} \right) \mathrm{d} t \\
\le &\ 2 \int_{0}^{t_{\star}} t \ 1 \ \mathrm{d} t + 2 \int_{t_{\star}}^{\infty} t \left( \sum_{k=1}^{d} \mathbb{P} \left( \left| X_{k} \right| \ge t \right) \right) \mathrm{d} t \\
\le &\ 2 \int_{0}^{t_{\star}} t \ \mathrm{d} t + 2 d \int_{t_{\star}}^{\infty} t \left( \max_{k} \mathbb{P} \left( \left| X_{k} \right| \ge t \right) \right) \mathrm{d} t \\
\le &\ 2 \int_{0}^{t_{\star}} t \ \mathrm{d} t + 2 d \int_{t_{\star}}^{\infty} t \exp \left( - b_{\star}^{-1} t \right) \mathrm{d} t \\
= &\ \left[ t^{2} \right]_{0}^{t_{\star}} + 2 d \left[ - b_{\star} \left( t + b_{\star} \right) \exp \left( - b_{\star}^{-1} t \right) \right]_{t_{\star}}^{\infty} \\
= &\ t_{\star}^{2} + 2 d b_{\star} \left( t_{\star} + b_{\star} \right) \exp \left( - b_{\star}^{-1} t_{\star} \right)
.
\end{aligned}
\end{equation}
Splitting at $t_{\star} = b_{\star} \ln d$ gives
\begin{equation}
\begin{aligned}
\mathbb{E} \max_{k} X_{k}^{2}
\le b_{\star}^{2} \left( \left( \ln d \right)^{2} + 2 \ln d + 2 \right)
= \left( 2^{-1} \left( \ln d \right)^{2} + \ln d + 1 \right) \max_{k} \mathbb{E} X_{k}^{2}
\end{aligned}
\end{equation}
without needing the independence of $X_{k}$.

\newpage

\section{Further Experimental Results}
\label{sec:app_experiments}

\subsection{Additional Accuracy Results}
\label{sec:app_accuracy}

In this section, we report additional evaluations conducted across the Llama 3 and Qwen 3 family models in \cref{tab:lm_eval_rtn,tab:kl_qwen3_8b,tab:calibration_sensitivity,fig:combined_platinum_results_llama-3B,fig:combined_platinum_results_llama-8B,fig:combined_platinum_results_qwen3-14B,fig:combined_platinum_results_qwen3-32B}. For the Qwen models, the thinking mode is disabled across all experiments. The evaluation results are obtained using emulated (``fake'') quantization, where quantized values are represented in the BF16/FP16 format. Results are aggregated over multiple random seeds.

\begin{table}[!htpb]
\caption{W4A4 accuracy results on the LM Evaluation Harness for different models with RTN, GPTQ quantizations, and identity (I), Hadamard (H), WUSH transforms. For the Llama model, the MMLU-CoT metric is reported in the MMLU column.}
\label{tab:lm_eval_rtn}
\begin{center}
\begin{small}
\begin{tabular}{c | c | c | c | c c c c | c c}
\toprule
Model & Format & Quantization & Transform & MMLU & GSM8K & HellaSwag & WinoGrande & Average & Recovery \\ 
\midrule
\multirow{11}{*}{\rotatebox[origin=c]{90}{Llama-3.2-3B-Instruct}} & BF16 & - & - & 64.43 & 78.01 & 73.42 & 70.09 & 71.49 & 100.0 \\ 
\cmidrule{2-10}
& \multirow{5}{*}{NVFP4} 
& \multirow{3}{*}{RTN} & I & 61.07 & 72.01 & 70.90 & 66.77 & 67.69 & 94.68 \\
& & & H & 59.91 & 64.82 & 69.77 & 65.59 & 65.02 & 90.95 \\
& & & WUSH & 59.91 & 71.24 & 70.96 & 67.25 & 67.34 & 94.19 \\
\cmidrule{3-10}
& & \multirow{2}{*}{GPTQ} & H & 60.22 & 70.91 & 71.06 & 66.57 & 67.19 & 93.99 \\
& & & WUSH & 60.17 & 72.71 & 71.26 & 67.32 & \textbf{67.87} & \textbf{94.93} \\
\cmidrule{2-10}
& \multirow{5}{*}{MXFP4} 
& \multirow{3}{*}{RTN} & I & 56.81 & 60.80 & 67.30 & 64.56 & 62.37 & 87.24 \\
& & & H & 55.99 & 61.11 & 68.36 & 64.09 & 62.39 & 87.27 \\
& & & WUSH & 59.54 & 69.49 & 70.20 & 66.47 & \textbf{66.43} & \textbf{92.92} \\
\cmidrule{3-10}
& & \multirow{2}{*}{GPTQ} & H & 60.24  & 68.84  & 69.58  & 66.08  & 66.18 & 92.58 \\
& & & WUSH & 59.39  & 68.26  & 69.88  & 67.07  & 66.15 & 92.53 \\
\midrule
\multirow{11}{*}{\rotatebox[origin=c]{90}{Qwen3-8B}} & BF16 & - & - & 72.98 & 90.90 & 75.52 & 70.56 & 77.49 & 100.0 \\ 
\cmidrule{2-10}
& \multirow{5}{*}{NVFP4} 
& \multirow{3}{*}{RTN} & I & 70.47 & 88.40 & 74.44 & 69.53 & 75.71 & 97.70 \\
& & & H & 70.19 & 86.35 & 73.02 & 68.11 & 74.42 & 96.04 \\
& & & WUSH & 70.86 & 89.92 & 74.84 & 70.02 & 76.42 & 98.61 \\
\cmidrule{3-10}
& & \multirow{2}{*}{GPTQ} & H & 70.78 & 89.11 & 74.34 & 69.25 & 75.87 & 97.91 \\
& & & WUSH & 71.03 & 89.56 & 74.88 & 70.32 & \textbf{76.45} & \textbf{98.66} \\
\cmidrule{2-10}
& \multirow{5}{*}{MXFP4} 
& \multirow{3}{*}{RTN} & I & 67.69 & 84.23 & 71.24 & 67.40 & 72.64 & 93.74 \\
& & & H & 67.53 & 87.04 & 71.48 & 67.64 & 73.42 & 94.75 \\
& & & WUSH & 69.96 & 86.90 & 73.87 & 68.92 & 74.91 & 96.68 \\
\cmidrule{3-10}
& & \multirow{2}{*}{GPTQ} & H & 69.58 & 88.28 & 73.75 & 69.25 & 75.22 & 97.07 \\
& & & WUSH & 70.40 & 88.80 & 74.36 & 69.19 & \textbf{75.69} & \textbf{97.68} \\
\midrule
\multirow{7}{*}{\rotatebox[origin=c]{90}{Qwen3-14B}} & BF16 & - & - & 77.18 & 91.96 & 79.84 & 74.27 & 80.81 & 100.0 \\ 
\cmidrule{2-10}
& \multirow{3}{*}{NVFP4} 
& \multirow{3}{*}{RTN} & I  & 75.20 & 90.22 & 77.38 & 73.32 & 79.03 & 97.80 \\
& & & H & 74.98 & 92.04 & 77.76 & 72.38 & 79.29 & 98.12 \\
& & & WUSH & 75.43 & 91.78 & 78.82 & 73.18 & \textbf{79.80} & \textbf{98.76} \\
\cmidrule{2-10}
& \multirow{3}{*}{MXFP4} 
& \multirow{3}{*}{RTN} & I & 72.92 & 90.22 & 76.68 & 71.51 & 77.83 & 96.31 \\
& & & H & 73.13 & 87.41 & 76.29 & 71.67 & 77.13 & 95.44 \\
& & & WUSH & 74.56 & 91.87 & 78.18 & 73.15 & \textbf{79.44} & \textbf{98.30} \\
\midrule
\multirow{7}{*}{\rotatebox[origin=c]{90}{Qwen3-32B}} & BF16 & - & - & 80.74 & 92.12 & 83.96 & 76.95 & 83.44 & 100.0 \\ 
\cmidrule{2-10}
& \multirow{3}{*}{NVFP4} 
& \multirow{3}{*}{RTN} & I & 78.92 & 90.14 & 82.60 & 76.64 & 82.08 & 98.37 \\
& & & H & 79.48 & 91.81 & 82.51 & 74.98 & 82.20 & 98.51 \\
& & & WUSH & 79.56 & 92.72 & 83.14 & 75.79 & \textbf{82.80} & \textbf{99.24} \\
\cmidrule{2-10}
& \multirow{3}{*}{MXFP4} 
& \multirow{3}{*}{RTN} & I & 76.82 & 75.28 & 81.32 & 74.74 & 77.04 & 92.33 \\
& & & H & 78.63 & 92.57 & 81.87 & 75.53 & \textbf{82.15} & \textbf{98.45} \\
& & & WUSH & 78.70 & 90.83 & 82.63 & 75.87 & 82.01 & 98.28 \\
\bottomrule
\end{tabular}
\end{small}
\end{center}
\end{table}

\begin{table}[!htpb]
\caption{KL-divergence for Qwen3-8B measured on a slice of C4 dataset, with calibration performed on the FineWeb dataset. Results (lower is better) are averaged over multiple random seeds. H denotes the Hadamard transform.}
\label{tab:kl_qwen3_8b}
\begin{center}
\begin{small}
\begin{tabular}{c | c | c | c | c}
\toprule
Model & Format & Quantization & Transform & KL \\ 
\midrule
\multirow{6}{*}{\rotatebox[origin=c]{90}{Qwen3-8B}}
& \multirow{3}{*}{NVFP4}
& GPTQ  & H    & 0.069518 \\
&      & RTN  & WUSH & 0.065747 \\
&      & GPTQ & WUSH & \textbf{0.054646 }\\
\cmidrule{2-5}
& \multirow{3}{*}{MXFP4}
& GPTQ & H    & 0.093119 \\
&      & RTN  & WUSH & 0.093739 \\
&      & GPTQ & WUSH & \textbf{0.077909} \\
\bottomrule
\end{tabular}
\end{small}
\end{center}
\end{table}

\begin{table}[!htpb]
\caption{Sensitivity of WUSH to calibration datasets on Qwen3-8B with MXFP4.}
\label{tab:calibration_sensitivity}
\begin{center}
\begin{small}
\begin{tabular}{c | c | c | c}
\toprule
Quantization & Calibration Dataset & LM Eval Harness Average & Platinum Benchmarks Average \\
\midrule
\multirow{3}{*}{RTN} & FineWeb       & 74.91 & 93.00 \\
& C4            & 75.57 & 92.89 \\
& Open-Platypus & 74.99 & 93.12 \\
\midrule
\multirow{2}{*}{GPTQ} & FineWeb       & 75.69 & 93.52 \\
& C4            & 75.78 & 93.53 \\
\bottomrule
\end{tabular}
\end{small}
\end{center}
\end{table}

\begin{figure}[!htpb]
\begin{center}
    \begin{subfigure}[t]{0.49\textwidth}
        \includegraphics[width=\linewidth]{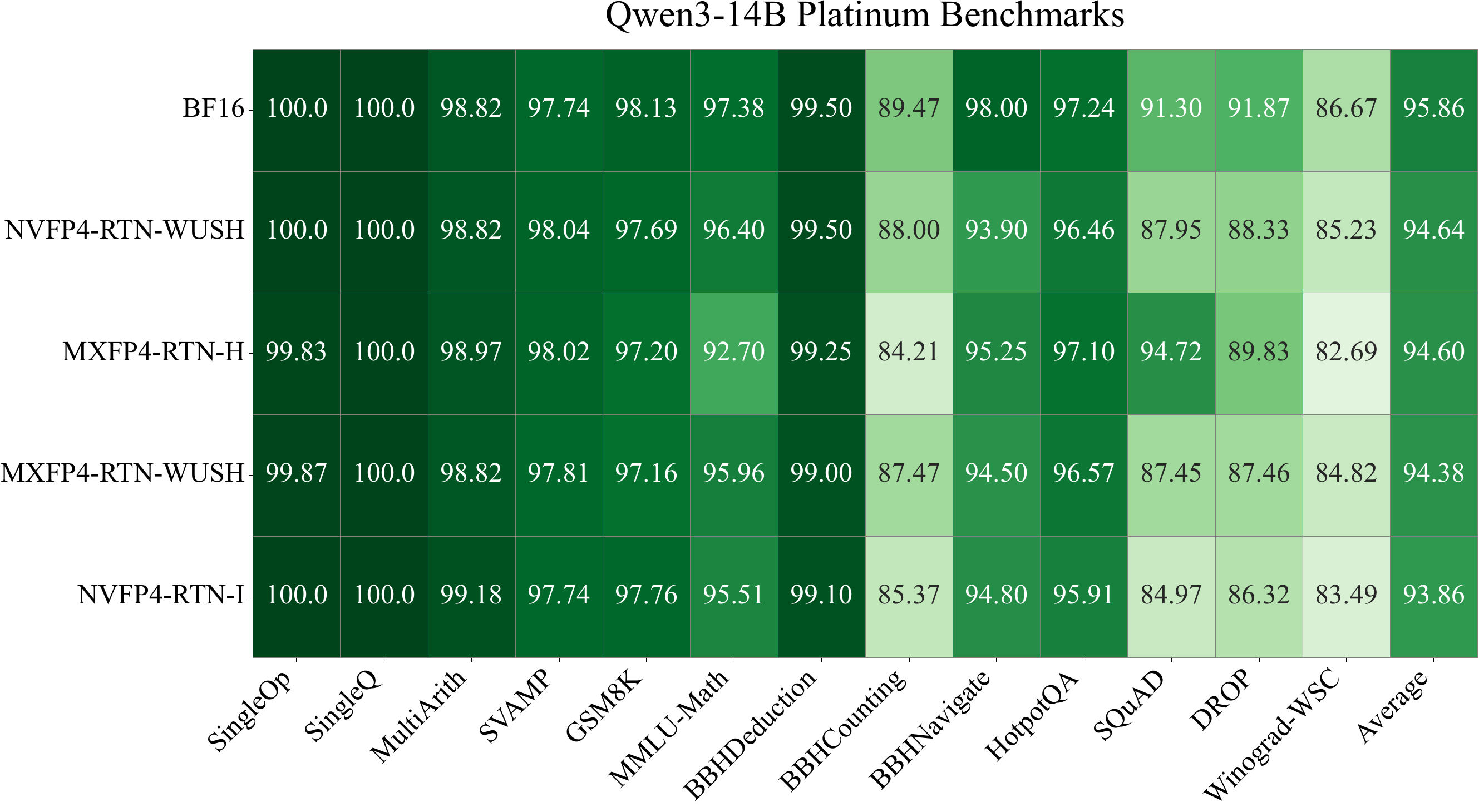}
        \label{fig:qwen3-14b_table}
    \end{subfigure}
    \hfill
    \begin{subfigure}[t]{0.49\textwidth}
        \includegraphics[width=\linewidth]{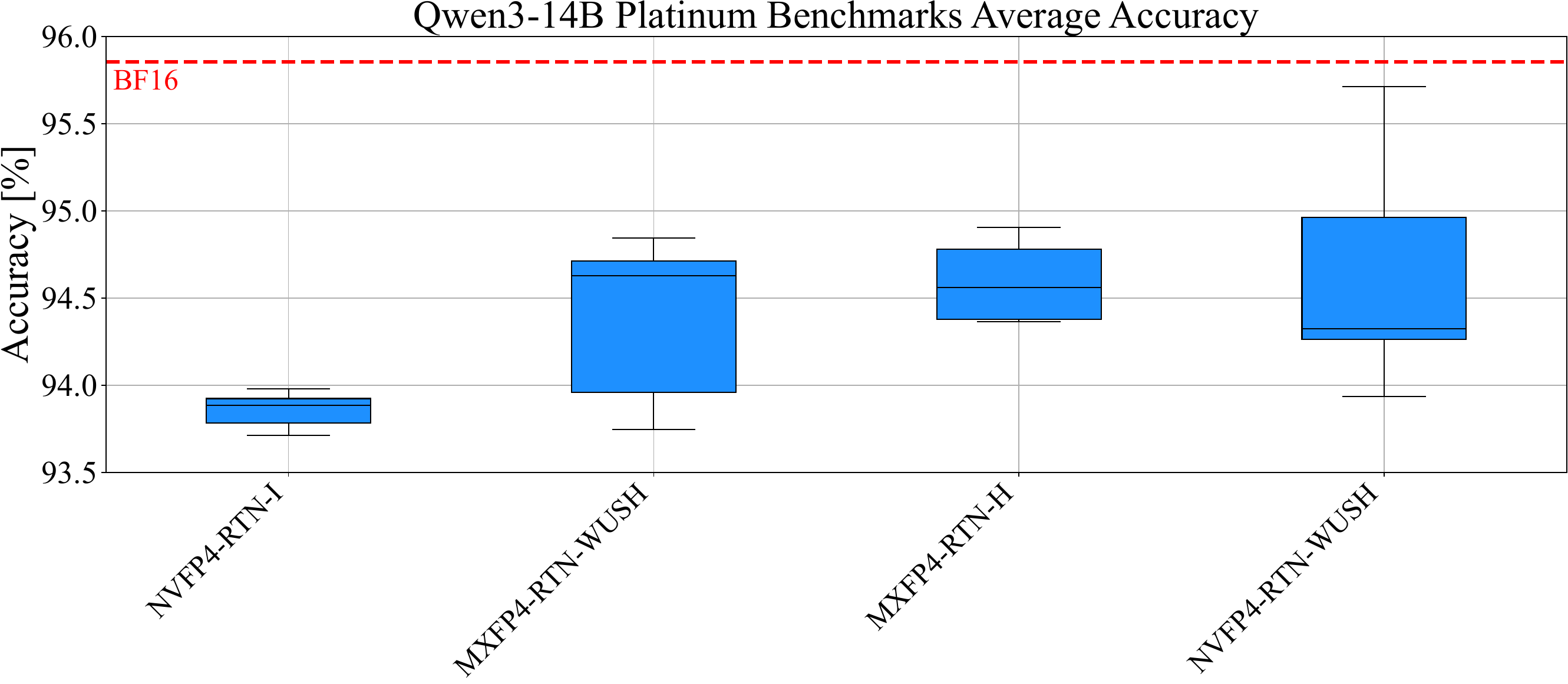}
        \label{fig:qwen3-14b_boxplot}
    \end{subfigure}    
\caption{Comparison of different transforms on Qwen3-14B for both NVFP4 and MXFP4 quantization on Platinum Benchmarks. The left table shows accuracy results across the individual benchmark tasks, while the right plot shows the average accuracy scores together with their standard deviations for each transform.}
\label{fig:combined_platinum_results_qwen3-14B}
\end{center}
\end{figure}

\begin{figure}[!htpb]
\begin{center}
    \begin{subfigure}[t]{0.49\textwidth}
        \includegraphics[width=\linewidth]{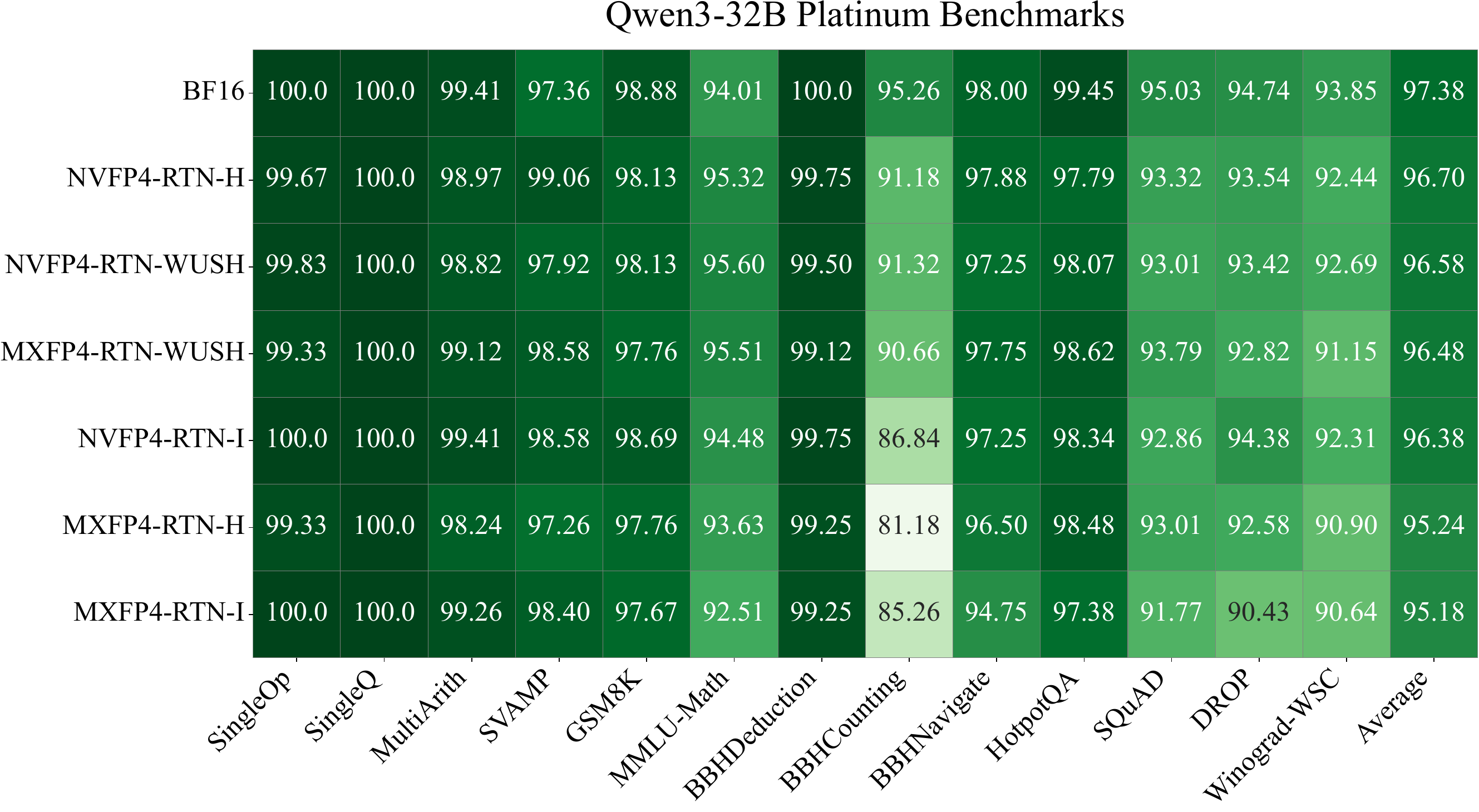}
        \label{fig:qwen3-32b_table}
    \end{subfigure}
    \hfill
    \begin{subfigure}[t]{0.49\textwidth}
        \includegraphics[width=\linewidth]{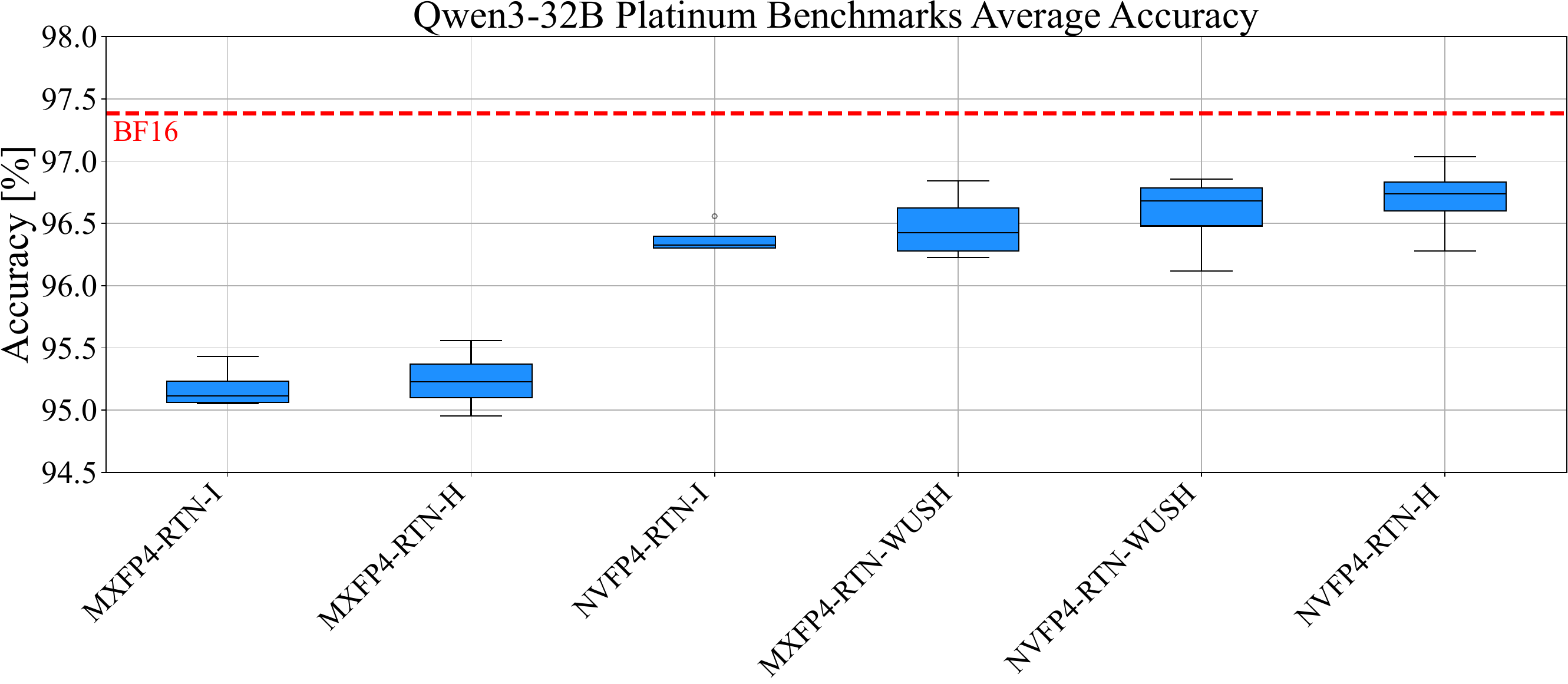}
        \label{fig:qwen3-32b_boxplot}
    \end{subfigure}
\caption{Comparison of different transforms on Qwen3-32B for both NVFP4 and MXFP4 quantization on Platinum Benchmarks. The left table shows accuracy results across the individual benchmark tasks, while the right plot shows the average accuracy scores together with their standard deviations for each transform.}
\label{fig:combined_platinum_results_qwen3-32B}
\end{center}
\end{figure}

\begin{figure}[!htpb]
\begin{center}
    \begin{subfigure}[t]{0.49\textwidth}
        \includegraphics[width=\linewidth]{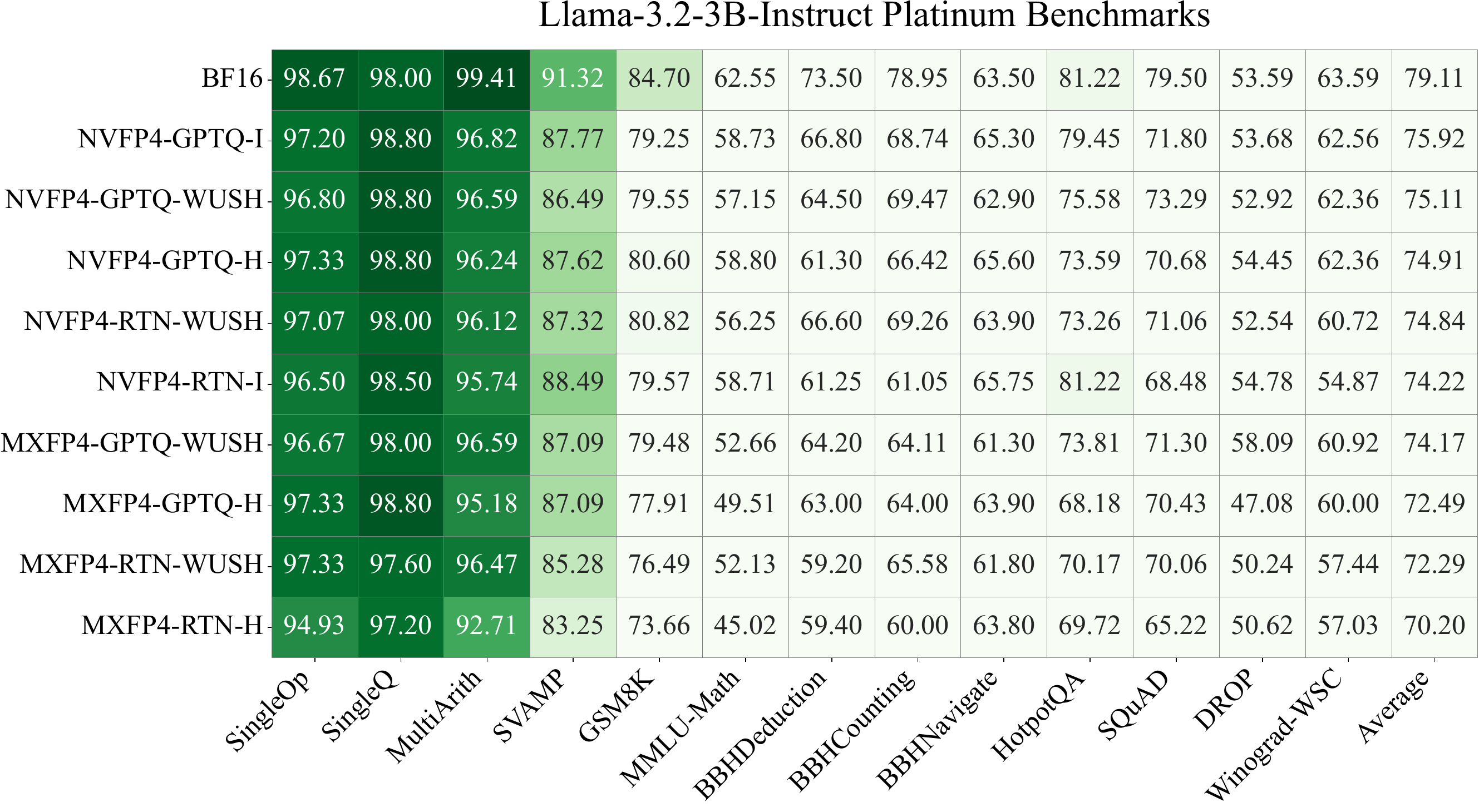}
        \label{fig:llama-3B_table}
    \end{subfigure}
    \hfill
    \begin{subfigure}[t]{0.49\textwidth}
        \includegraphics[width=\linewidth]{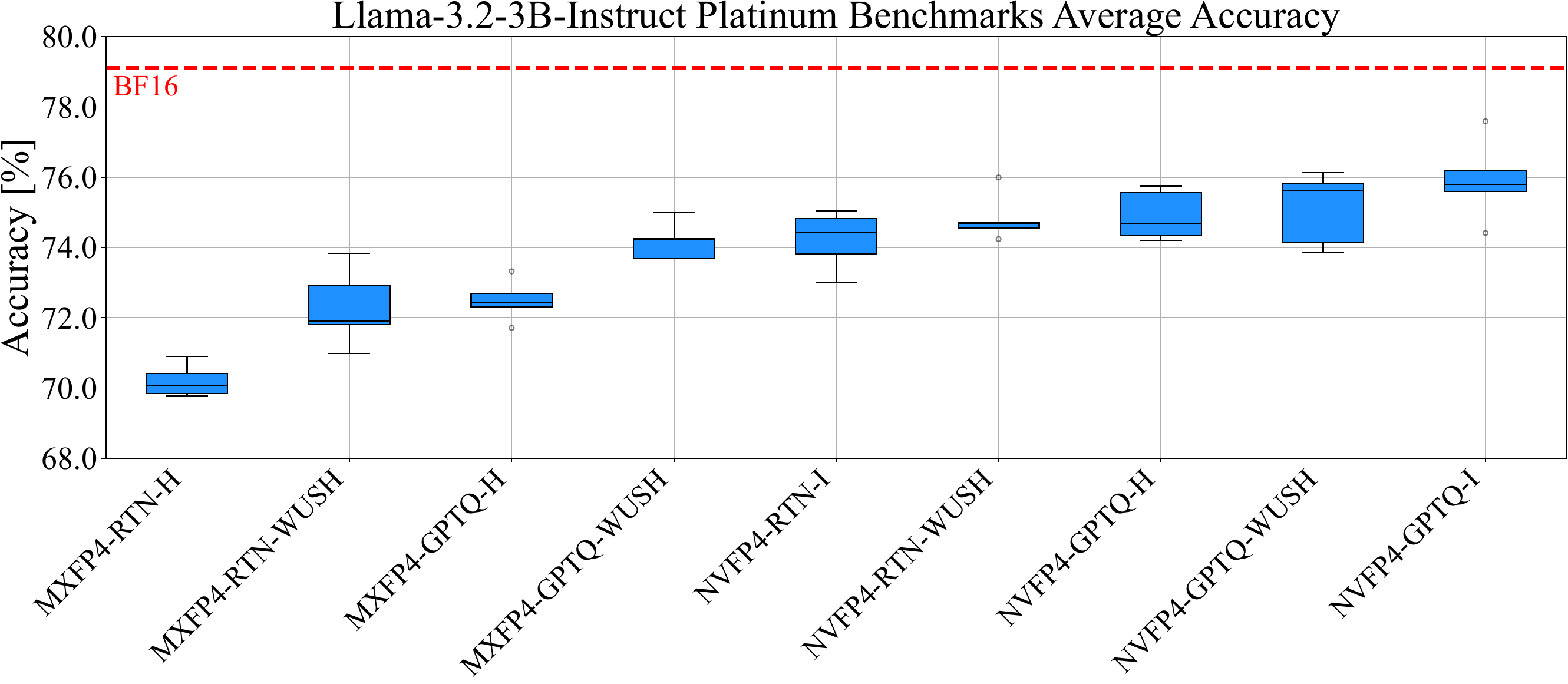}
        \label{fig:llama-3B_boxplot}
    \end{subfigure}
\caption{Comparison of different transforms on Llama-3.2-3B-Instruct for both NVFP4 and MXFP4 quantization on Platinum Benchmarks. The left table shows accuracy results across the individual benchmark tasks, while the right plot shows the average accuracy scores together with their standard deviations for each transform.}
\label{fig:combined_platinum_results_llama-3B}
\end{center}
\end{figure}

\begin{figure}[!htpb]
\begin{center}
    \begin{subfigure}[t]{0.49\textwidth}
        \includegraphics[width=\linewidth]{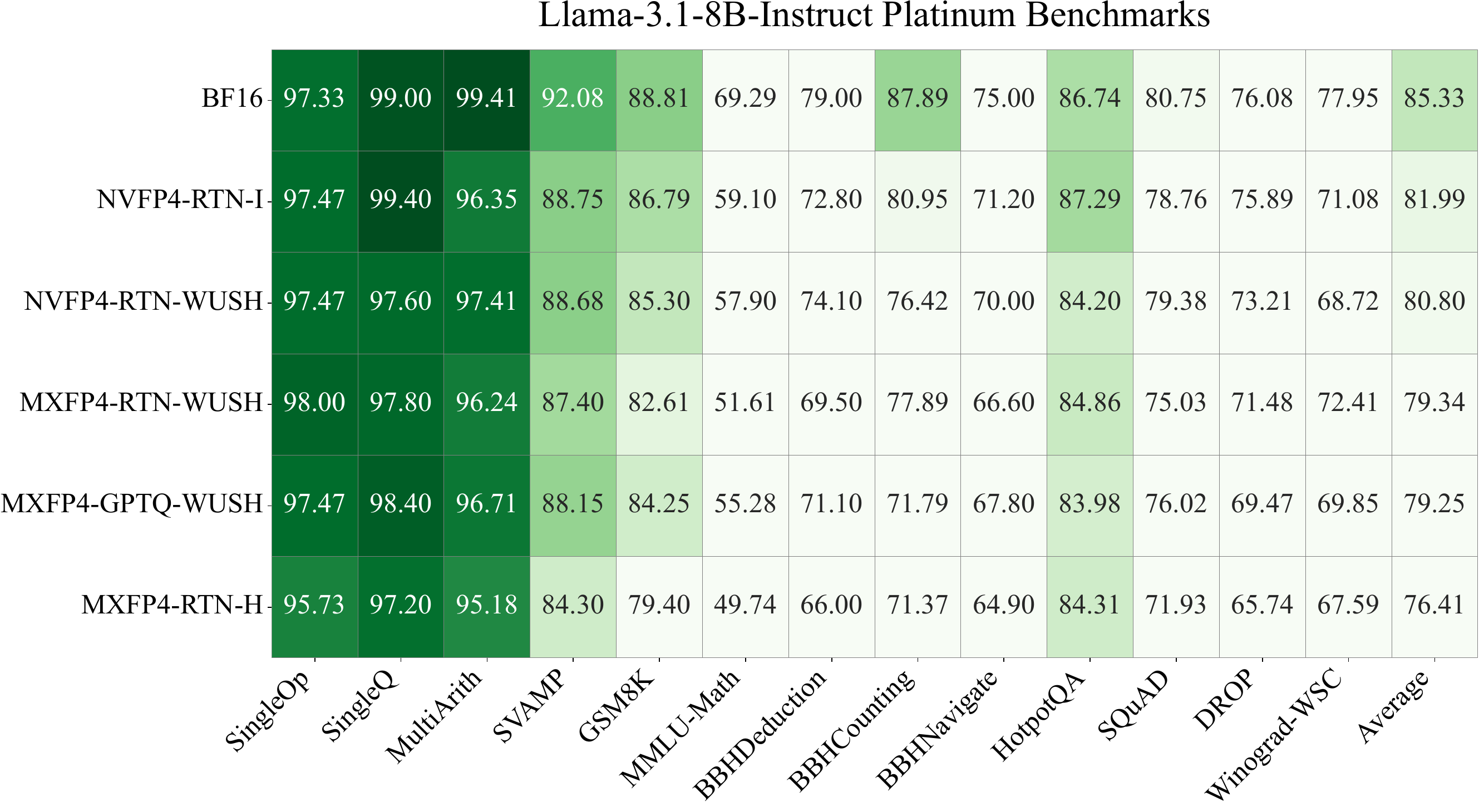}
        \label{fig:llama3_8B_table}
    \end{subfigure}
    \hfill
    \begin{subfigure}[t]{0.49\textwidth}
        \includegraphics[width=\linewidth]{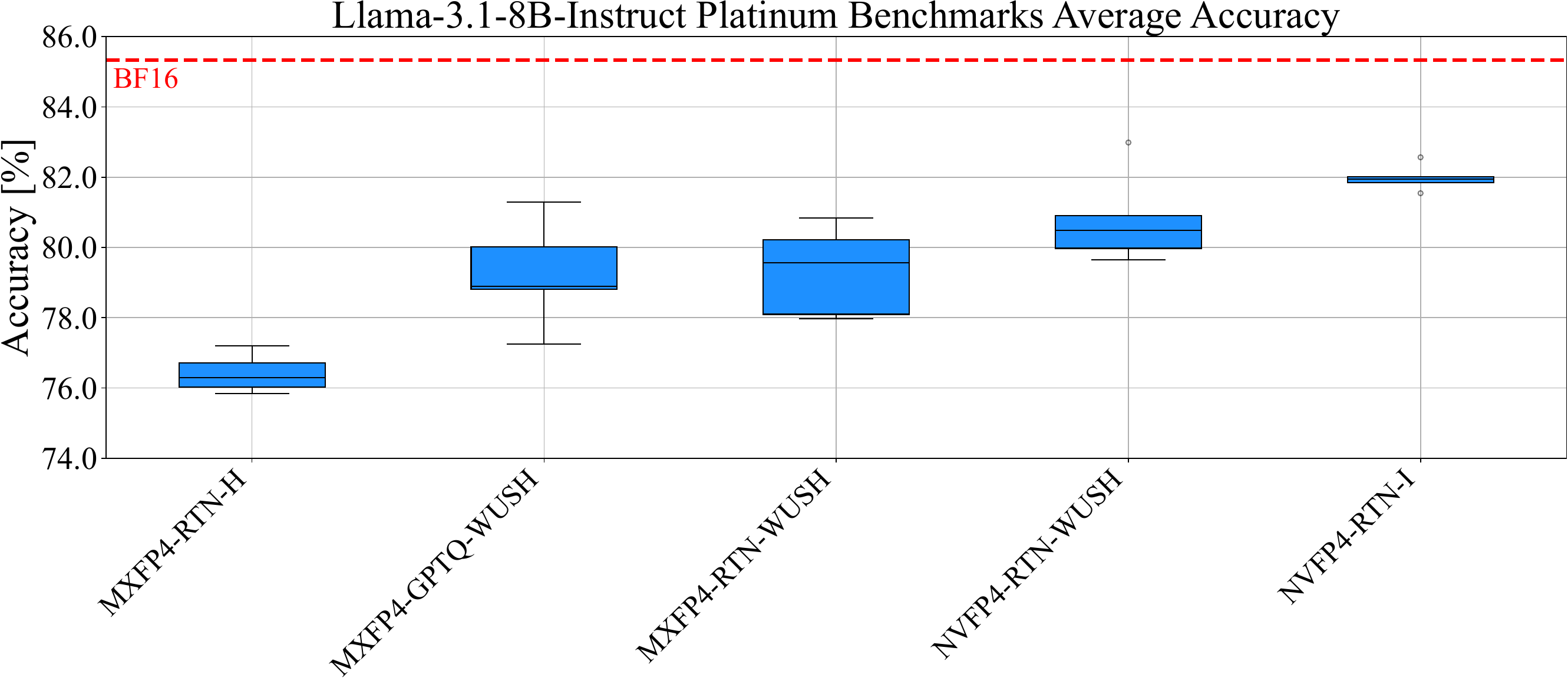}
        \label{fig:llama3_8B_boxplot}
    \end{subfigure}
\caption{Comparison of different transforms on Llama-3.1-8B-Instruct for both NVFP4 and MXFP4 quantization on Platinum Benchmarks. The left table shows accuracy results across the individual benchmark tasks, while the right plot shows the average accuracy scores together with their standard deviations for each transform.}
\label{fig:combined_platinum_results_llama-8B}
\end{center}
\end{figure}

\newpage

\subsection{Computational and Memory Costs}
\label{sec:app_costs}

\cref{tab:preprocessing_costs} reports the offline preprocessing cost of WUSH with RTN.
These measurements include the calibration and transform construction pipeline.
Note that the offline programs are generally not optimally implemented, and there are some configuration choices that can significantly vary the measurements. For example, one can choose to offload and recompute the calibration activations to save GPU memory at the cost of increased runtime.

\cref{tab:storage_overhead} reports the additional whole-checkpoint storage overhead from storing the activation-side WUSH transforms.

\begin{table}[!htpb]
\caption{Offline preprocessing cost of WUSH with RTN.}
\label{tab:preprocessing_costs}
\begin{center}
\begin{small}
\begin{tabular}{l c c c}
\toprule
Model & GPU & Time [min] & GPU Memory [GB] \\
\midrule
Llama-3.2-3B-Instruct & H100 & 9  & 10 \\
Llama-3.1-8B-Instruct & H100 & 19 & 19 \\
Qwen3-8B              & H100 & 25 & 17 \\
Qwen3-14B             & H100 & 30 & 20 \\
Qwen3-32B             & B200 & 38 & 40 \\
\bottomrule
\end{tabular}
\end{small}
\end{center}
\end{table}

\begin{table}[!htpb]
\caption{Whole-checkpoint storage overhead from storing the activation-side WUSH transforms.}
\label{tab:storage_overhead}
\begin{center}
\begin{small}
\begin{tabular}{l c c}
\toprule
Model & MXFP4 & NVFP4 \\
\midrule
Llama-3.2-3B-Instruct & 2.1\% & 1.0\% \\
Llama-3.1-8B-Instruct & 1.4\% & 0.7\% \\
Qwen3-8B              & 1.4\% & 0.7\% \\
Qwen3-14B             & 1.2\% & 0.6\% \\
Qwen3-32B             & 1.2\% & 0.6\% \\
\bottomrule
\end{tabular}
\end{small}
\end{center}
\end{table}


\end{document}